\documentclass[accepted]{uai2026} 
                        
\usepackage[british]{babel}

\usepackage{natbib} 
\usepackage{mathtools} 
\usepackage{booktabs} 
\usepackage{tikz} 

\usepackage{csquotes}
\usepackage{tikz}
\usepackage{mathtools}
\usepackage{hyperref}       
\usepackage{cleveref}
\usepackage{amsmath}
\usepackage{amssymb}
\usepackage{amsthm}
\usepackage{thmtools}

\declaretheorem[name=Theorem]{theorem}
\declaretheorem[name=Lemma,sibling=theorem]{lemma}
\declaretheorem[name=Proposition,sibling=theorem]{proposition}

\declaretheorem[name=Definition,sibling=theorem]{definition}
\declaretheorem[name=Remark,sibling=theorem]{remark}

\usepackage{enumitem}
\usepackage{subcaption}

\newcommand{\bI}{\mathbb{I}}
\newcommand{\R}{{\mathbb R}}
\newcommand{\N}{{\mathbb N}}

\newcommand{\cX}{{\cal X}}

\newcommand{\cN}{\mathcal{N}}

\newcommand{\bE}{\mathbb{E}}

\newcommand{\cR}{\mathcal{R}}

\newcommand{\cG}{{\cal G}}
\newcommand{\cC}{{\cal C}}

\newcommand{\KL}[2]{D(#1\mid\mkern-1mu \mid #2)}
\newcommand{\Ex}[2][]{ \mathbb{E}_{#1}\left[#2\right] }
\newcommand{\chan}[0]{P_{Y\mid X}}

\newcommand{\PA}{\mathrm{PA}}
\newcommand{\pa}{\mathrm{pa}}

\newcommand\independent{\protect\mathpalette{\protect\independenT}{\perp}}
\def\independenT#1#2{\mathrel{\rlap{$#1#2$}\mkern2mu{#1#2}}}
\newcommand{\ind}{\independent}

\newtoggle{printcomments}
\toggletrue{printcomments} 
\togglefalse{printcomments} 

\newcommand\dominik[1]{%
  \iftoggle{printcomments}{%
    \textcolor{cyan}{Dominik: #1}%
  }{}%
}

\newcommand\will[1]{%
  \iftoggle{printcomments}{%
\textcolor{orange}{Will: #1} 
  }{}%
}

\definecolor{lightgray}{gray}{0.85}

\usetikzlibrary{arrows,shapes,plotmarks,positioning}
\usetikzlibrary{decorations.markings}

\tikzset{>=stealth'} 
\tikzset{node distance=2cm}
\tikzstyle{graphnode} = 
   [circle,draw=black,minimum size=22pt,text centered,text
     width=22pt,inner sep=0pt] 
\tikzstyle{var}   =[graphnode,fill=white]
\tikzstyle{vardashed}   =[graphnode,draw=gray,fill=white]
\tikzstyle{obs}   =[graphnode,fill=black,text=white]
\tikzstyle{obsgrey}   =[graphnode,draw=white,fill=lightgray,text=black]
\tikzstyle{par}    =[graphnode,draw=white,fill=red,text=black] 
 \tikzstyle{crucial} =[graphnode,draw=white,fill=yellow,text=black] 
\tikzstyle{fac}   =[rectangle,draw=black,fill=black!25,minimum size=5pt]
\tikzstyle{facprior} =[rectangle,draw=black,fill=black,text=white,minimum size=5pt]
\tikzstyle{edge}  =[draw=white,double=black,very thick,-]
\tikzstyle{blueedge}  =[draw=white,double=blue,very thick,-]
\tikzstyle{rededge}  =[draw=white,double=red,very thick,-]
\tikzstyle{prior} =[rectangle, draw=black, fill=black, minimum size=
5pt, inner sep=0pt]
\tikzstyle{dirprior} = [circle, draw=black, fill=black, minimum
size=5pt, inner sep=0pt]

\tikzstyle{dot_node}=[draw=black,fill=black,shape=circle]

\newlist{myenum}{enumerate}{1}
\setlist[myenum,1]{%
  label=\arabic*.,     
  leftmargin=*,        
  labelsep=0.5em,
  itemsep=2pt,
  topsep=4pt
}

\newlength{\hyplabelwidth}
\newenvironment{hypothesis}[1]{%
  \par\medskip\noindent
  \hangindent=\dimexpr\hyplabelwidth+0.5em\relax
  \hangafter=1
  \makebox[\hyplabelwidth][r]{#1:}\hspace{0.5em}%
  \ignorespaces
}{%
  \par\medskip
}

\setlist[itemize]{noitemsep,leftmargin=*,nosep}
\setlist[enumerate]{noitemsep,leftmargin=*,nosep}

\title{Falsifying Causal Graphs With Outlier Events}

\author[1]{\href{mailto:<will.r.orchard@gmail.com>?Subject=Your UAI 2026 paper}{William Roy Orchard$^*$}{}}
\author[2,3]{Philipp~M.~Faller$^*$}
\author[3]{Dominik Janzing}
\affil[1]{%
    University of Cambridge\\
    Cambridge, UK
}
\affil[2]{%
    Karlsruhe Institute of Technology\\
    Karlsruhe, Germany
}
\affil[3]{%
    Amazon Research\\
    Tübingen, Germany
  }
  
\begin{document}

\maketitle
\begingroup
\renewcommand{\thefootnote}{*}
\footnotetext{Authors contributed equally.}
\endgroup

\begin{abstract}%
True causal relationships are rarely known, and inferring causal graphs from data is hard. A fundamental challenge is how to assess whether a given causal graph is good in the absence of a ground truth. We propose falsifying candidate causal graphs based on whether they can explain the propagation of an outlier event. Our approach leverages a key principle: weak outliers rarely cause strong ones. While this principle has previously been used in root cause analysis to identify root causes without prior knowledge of the graph, we turn it on its head and use it to falsify candidate causal graphs whose implied outlier propagation is inconsistent with the data. To this end, we present the first statistical tests for the hypothesis that a candidate graph is the true causal graph, and show they have false positive control, power guarantees against incorrect causal graphs, and can operate with a single outlier sample.
\end{abstract}

\section{Introduction}\label{sec:introduction}
\looseness=-1Reasoning about causal relationships lies at the core of understanding and analysing complex systems. 
Unlike purely correlational methods, causal methods aim to leverage the underlying generative mechanisms that govern data, enabling robust prediction, counterfactual reasoning, and principled decision-making \citep{Pearl:00,spirtes2000causation}. 
 Despite significant progress on learning causal models from observational \citep{kalisch2007estimating,chickering2002optimal,shimizu2006linear,zheng2018dags,lam2022greedy} and interventional data \citep{heinze2018invariant, brouillard2020differentiable} and combinations of statistical data and expert knowledge \citep{ankan2025expert,hiremath2025guess2graph}, key challenges remain to reliably obtain a causal graph.
 Moreover, in practical settings it is often not clear how reliable a causal graph is, either because the given expert knowledge might be incomplete, or due to violated assumptions or finite-sample issues in a learned graph.
In light of this, there has been a recent surge in interest for methods to evaluate or falsify estimated causal quantities or models \citep{peters2016,canonne2017testing,heinze2018invariant,acharya2018learning,eigenmann2020evaluation,choo2022verification,bhattacharya2022testability,schkoda2024cross,eulig2025toward,faller2025different,faltenbacher2025internal}.
In this work we present a number of statistical tests for the null hypothesis that a given causal graph is the true causal graph.
To this end, we will leverage tools developed for another causal downstream task: root cause analysis.

\looseness=-1The goal of root cause analysis is to pinpoint the cause of some rare or anomalous event, e.g. in monitored cloud services \citep{hardt24a} or manufacturing \citep{gobler2024texttt}.
The main intuition behind our test is that, in a certain sense, weak outliers rarely cause strong outliers \citep{orchard2025root}.
\citet{orchard2025root} have proposed to use this to find root causes without prior knowledge about the underlying causal graph.
In this work we turn this principle on its head: we propose to falsify causal graphs if the observed 
propagation of outliers does not match this principle.

As far as we are aware, we present the first statistical tests for the null hypothesis that a given causal graph is the true causal graph that have:
\begin{itemize}
    \item false positive control;
    \item power against a broad class of incorrect causal graphs;
    \item and can operate with only a single outlier sample.
\end{itemize}

\looseness=-1The remainder of the paper is organised as follows. In~\cref{sec:related_work} we discuss related work. In~\cref{sec:prelims} we give the preliminaries 
to causal models and introduce information-theoretic outlier scores, upon which our tests are based, before formalising the problem setting. In~\cref{sec:falsification} we present our proposed tests along with their theoretical guarantees. We start with tests based on conditional outlier scores and show they have false positive control when the causal graph is a DAG. We are able to adapt some of the proposed tests to the setting where only \textit{marginal} outlier scores are available, and show they retain false positive control when the causal graph is a polytree. By deriving lower bounds on the rate of decay of outlier scores along causal paths, we then provide power guarantees for our tests. Finally, in~\cref{sec:experiments} we empirically compare our methods to several baselines on synthetic and real-world data.

\section{Related work}
\label{sec:related_work}
In what follows, we model outliers as being the result of a mechanism change. 
As such, they play a similar role to \enquote{natural experiments} \citep{angrist1996identification} and we can leverage invariances over different environments to facilitate falsification. 
The idea that different components of a causal model change independently has been used under different names and in different contexts, e.g., \enquote{the principle of independent mechanisms} \citep{haavelmo1944probability,aldrich1989autonomy,hoover1990logic,Pearl:00,dawid2010identifying,scholkopf2012causal,peters2017elements}. 
It is well-known that data from different environments or regimes can render causal quantities identifiable that are unidentifiable from i.i.d. data alone \citep{tian2013causal,ghassami2018multi,guo2022causal,huang2017behind,huang2020causal,lagani2012learning,mooij2020joint,perry2022causal,ke2023neural,schkoda2026root}, especially when the regimes correspond to interventions with known intervention targets \citep{Eberhardt2007,hauser2012characterization, yang2018characterizing}.
Further, such data can also improve statistical estimation of causal quantities as data from several regimes can be leveraged \citep{peters2016,heinze2018invariant,pfister2019invariant} or even estimate quantities for regimes that have not been observed \citep{shpitser2008complete,bareinboim2012transportability,bareinboim2013meta,pearl2011transportability}.
Using interventional data is often considered the gold-standard for graph evaluation.
Accordingly, previous work has studied the minimum number of interventions needed to falsify a graph \citep{hauser2014two,he2008active,shanmugam2015learning,choo2022verification}.
However, these methods assume access to a conditional independence oracle, whereas we demonstrate that already a single sample from a different regime can constitute enough evidence to reject a causal hypothesis. In applications such as cloud computing \citep{hardt24a} or manufacturing \citep{gobler2024texttt} it can be prohibitively expensive to gather multiple samples from an anomalous event, and, in personalised medicine \citep{li2025root} it may not be possible. 

\looseness=-1There has also been considerable work on falsifying causal models using observational data alone.
In \citep{faller2024self}, a causal discovery algorithm is run on different subsets of variables and the resulting graphs are checked for mutual consistency. Incompatibilities can indicate violated model assumptions or finite-sample errors. 
In \citep{schkoda2024cross} causal models learned on variable subsets are used to predict dependencies between variables that were omitted during training; large prediction errors suggest incorrect causal structure. \citet{faltenbacher2025internal,faller2025different} focus on constraint-based methods and use either additional or already conducted tests to evaluate a graph. None of these methods directly entails a valid statistical test. In contrast, \citet{eulig2025toward} propose a permutation-based procedure that enables formal testing against a random-baseline null. \citet{canonne2017testing,schkoda2025goodness} develop goodness-of-fit tests for Bayesian networks; the latter considers linear models with non-Gaussian noise and tests whether the implied distribution matches the data. Finally, \citet{li2009controlling,strobl2019estimating} study how edge-wise false positive control can be established in causal discovery, but they do not provide control for directions or absence of edges.

\section{Preliminaries}\label{sec:prelims}
We assume the causal relationships between $n\in\N$ variables $X_1,\dots,X_n$ are given by a 
Causal Bayesian Network (CBN) \citep{Pearl:00}, so that their joint distribution factorises into 
 conditionals according to an underlying DAG $\cG$:
\begin{equation}\label{eq:jointdist}
    P_{X_1,\dots,X_n} = \prod_{i = 1}^{n}P_{X_i\mid \PA^\cG_i},
\end{equation}
where each $P_{X_i\mid \PA^\cG_i}$ corresponds to the causal mechanism of variable $X_i$ given its parents $\PA^\cG_i$ in $\cG$.

Throughout this work, we will deal with outliers.
Formally, we think of outliers as being the result of a soft-intervention \citep{Eberhardt2007}: let $\cR\subseteq \{1,\dots, n\}$ be a non-empty index set, we say outlier sample $(x_1,\dots,x_n)$ is drawn from the `corrupted' distribution,
\begin{equation}\label{eq:jointdistcorrupted}
    \tilde{P}_{X_1,\dots,X_n} = \prod_{j\in\cR}\tilde{P}_{X_j\mid \PA^\cG_j}\prod_{i\notin \cR} P_{X_i\mid \PA^\cG_i},
\end{equation}
where for each of the so-called \emph{root cause} variables $X_j$, with index $j\in\cR$, we assume the `ordinary' mechanism 
$P_{X_j\mid\PA^\cG_j}$ has been replaced for a `corrupted' one $\tilde{P}_{X_j\mid\PA^\cG_j}$. Note that a root \emph{cause} need not be a root \emph{node}.
Next, we define \emph{information theoretic (IT) outlier scores} as proposed by 
\citet{root_cause_analysis}.

Let $\tau_i: \cX_i \rightarrow \R$ be a feature map, for example, any existing real-valued
anomaly scoring function.
Define the event
\begin{equation}\label{eq:event}
    E_{x_i}:=\{\tau_i(X_i)\geq \tau_i(x_i)\},
\end{equation}
of a sample being at least as extreme as the observed event $x_i$ w.r.t. $\tau$.
The \emph{marginal IT outlier score} is defined by
\begin{displaymath}\label{eq:outlier_score}
    S_{X_i}^{\tau_i}(x_i) := -\log P(\tau_i(X_i) \geq \tau_i(x_i)) = -\log P(E_{x_i}).
\end{displaymath}
Note that each variable may have a different feature map, but we usually drop $X_i$ and $\tau_i$
from $S_{X_i}^{\tau_i}$ whenever it is clear from context which variable and feature map we refer to.

As pointed out by 
\citet{orchard2025root}, we 
can interpret $P(E_{x_i}) = e^{-S(x_i)}$ as a p-value for the null hypothesis that $x_i$ was drawn from 
its ordinary marginal distribution.
We also define the \emph{conditional} IT outlier score,
\begin{displaymath}\label{eq:conditionalscore}
    S(x_i\mid \pa^\cG_i) := -\log P(\tau_i(X_i)\geq \tau_i(x_i)\mid \PA^\cG_i = \pa^\cG_i).
\end{displaymath}
Similarly, $P(E_{x_i}\mid \PA^\cG_i=\pa^\cG_i) = e^{-S(x_i\mid \pa^\cG_i)}$ is a p-value for the 
null hypothesis that $x_i$ was drawn from its ordinary conditional distribution given the values of its parents 
$\pa^\cG_i$ in $\cG$. A large conditional outlier score corresponds to a small p-value and so we 
favour rejecting the null in such cases.

\paragraph{Problem statement:} given an estimated causal graph $\hat \cG$, test the null 
hypothesis $H_0: {\hat\cG} = \cG$, using the outlier sample $(x_1,\dots,x_n)$, by leveraging theory for the 
propagation of outliers in causal systems.

\section{Falsifying causal graphs with IT outlier scores}\label{sec:falsification}
In this section we propose tests for the null hypothesis $H_0$. 
The tests are stated under the assumption that we \textit{know} the root cause(s) of the outlier event, but we will close 
this section by showing this assumption may be dropped in the case one knows an upper bound on the number of root causes. In each instance we will
state a test statistic $T: \cX\to \R$ and corresponding p-value $p_T$ for $H_0$. It should be understood 
that the corresponding level-$\alpha$ test $\phi_T: \cX \to \{0,1\}$ is defined in the usual way: 
$\phi_T(x) := \bI[p_{T(x)} \leq \alpha]$. Recalling that a p-value for $H_0$ is a random variable $p$ that 
satisfies $P_{H_0}(p\leq \alpha)\leq\alpha$, the immediate consequence of identifying a p-value for $H_0$ is
that the corresponding test controls the false positive rate. As such, we will not restate this 
implication for each p-value we derive. 

All proofs, unless otherwise indicated, are given 
in \cref{app:proofs} of the appendix.

\subsection{Falsifying causal graphs with conditional outlier scores}
To test our primary null hypothesis $H_0$ we propose testing another 
hypothesis that it implies:
\begin{hypothesis}{$H_0^{\cR}$}
    for outlier sample $(x_1,\dots, x_n)$, all $x_i$ with $i\notin\cR$ were drawn according to $P(X_i\mid\PA_i^{\hat{\cG}} = \pa_i^{\hat{\cG}})$.
\end{hypothesis}

In other words, the outlier sample, excluding the root causes, was drawn according to the causal 
conditionals implied by the \emph{estimated} graph $\hat{\cG}$. If $H_0$ is true, the causal
conditionals implied by $\hat{\cG}$ coincide with the true causal mechanisms, and so $H_0^{\cR}$
is true by definition. Rejecting $H_0^{\cR}$ is therefore sufficient to reject $H_0$.\footnote{Note that this assumes that the outlier sample is not selected because it is anomalous and then also used for testing, as this would introduce selection bias.}

To define our test statistics, we will need the assumption:
\begin{enumerate}
    \item {\bf Continuity:}
        every $P(\tau_i(X_i)\mid \PA^{\cG}_i = \pa^{\cG}_i)$ is absolutely continuous with respect to the Lebesgue measure.
\end{enumerate}
Conditional IT scores with variable inputs $X_i, \PA_i^\cG$
define random variables $S(X_i\mid\PA_i^\cG)$. We then have (see also lemma 3.7 of \citet{orchard2025root}),
\begin{lemma}[{\bf Conditional scores are independent standard exponentials}]\label{lem:scores_exponential_under_null}
    Subject to continuity, for variables $X_i$ distributed according to $P$, with $i\in\{1,\dots,n\}$, $S(X_i\mid\PA_i^\cG)$ are independent, 
    ${\rm Exp}(1)$ distributed random variables.
\end{lemma}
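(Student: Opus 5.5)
The plan is to combine the probability integral transform, applied to each mechanism separately, with the factorisation \eqref{eq:jointdist} to get joint independence.

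\textbf{Step 1: each score is ${\rm Exp}(1)$ given its parents.} Fix $i$ and a value $\pa_i^\cG$ of the parents. Let $F_{i}(t\mid \pa_i^\cG) := P(\tau_i(X_i)< t\mid \PA_i^\cG=\pa_i^\cG)$. By continuity this conditional distribution has no atoms, so $F_i(\cdot\mid\pa_i^\cG)$ is continuous. Hence the probability integral transform applies: $U_i := 1-F_i(\tau_i(X_i)\mid \PA_i^\cG)$ is ${\rm Unif}(0,1)$ conditionally on $\PA_i^\cG=\pa_i^\cG$. Since there are no atoms, $U_i = P(\tau_i(X_i)\ge t\mid \pa_i^\cG)$ evaluated at $t=\tau_i(X_i)$. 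Therefore $S(X_i\mid\PA_i^\cG) = -\log U_i$ satisfies $P(S(X_i\mid \PA_i^\cG) > s\mid \PA_i^\cG=\pa_i^\cG) = P(U_i<e^{-s}\mid\cdot) = e^{-s}$ for all $s\ge0$. This conditional law does not depend on $\pa_i^\cG$, so each score is ${\rm Exp}(1)$ marginally and is independent of $\PA_i^\cG$.

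\textbf{Step 2: joint independence.} Order the variables topologically in $\cG$, writing $X_1,\dots,X_n$, so that $\PA_i^\cG\subseteq\{X_1,\dots,X_{i-1}\}$. Write $S_i:=S(X_i\mid\PA_i^\cG)$; it is a measurable function of $(X_i,\PA_i^\cG)$. I will show by backward induction that
\[
P(S_1>s_1,\dots,S_n>s_n)=\prod_i e^{-s_i}.
\]
Condition on $X_1,\dots,X_{n-1}$. The events $\{S_j>s_j\}$ for $j<n$ are then fixed. By the factorisation \eqref{eq:jointdist}, $X_n$ given $(X_1,\dots,X_{n-1})$ is distributed as $P_{X_n\mid \PA_n^\cG}$. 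So by Step 1,
\[
P(S_n>s_n\mid X_1,\dots,X_{n-1})=e^{-s_n}
\]
almost surely. Taking expectations gives $P(\cap_{j\le n}\{S_j>s_j\}) = e^{-s_n}P(\cap_{j<n}\{S_j>s_j\})$. Iterating down to $j=1$ yields the product form. Joint survival functions of this product form on $[0,\infty)^n$ determine the law, so the $S_i$ are independent ${\rm Exp}(1)$.

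\textbf{Main obstacle.} The delicate point is the measure-theoretic care in Step 1. The conditional probabilities are defined only up to null sets, so I would fix a regular conditional distribution for each mechanism and define the score through it. Continuity is then needed precisely so that $P(\tau_i(X_i)\ge \tau_i(X_i)\mid\cdot)$ equals the uniform transform with no atom correction. The rest, the tower-property induction in Step 2, is routine.
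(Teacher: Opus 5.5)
Your proof is correct. Continuity (no atoms) makes the conditional probability integral transform give each $S(X_i\mid\PA_i^\cG)$ the ${\rm Exp}(1)$ law for every parent value. The tower-property induction along a topological order then gives the product-form joint survival function, because $X_n$ given all its predecessors follows $P_{X_n\mid\PA^\cG_n}$; that product form fixes the law since all scores are non-negative. The paper gives no argument of its own here and defers to Lemma 3.7 of \citet{orchard2025root}, so your write-up is a complete, self-contained proof of the statement.
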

\looseness=-1This property underpins each of the tests we propose based on conditional scores.
With this in place, the first three of our tests are based on the same idea: high conditional outlier 
scores with respect to $\hat{\cG}$ are evidence for rejecting $H_0^{\cR}$ and therefore for rejecting $H_0$. As the number of non-root cause variables will appear in
many equations below, we will denote it $L := n - |\cR|$ to make presentation less cumbersome.

The first test statistic is the sum of conditional outlier scores of the non-root causes 
$S^{\hat{\cG}}_{\rm sum} := \sum_{i\notin\cR}S(X_i\mid\PA_i^{\hat{\cG}})$.
The sum of $m$ independent standard exponential random variables is Erlang distributed
with scale parameter $1$ and shape parameter $m$ \citep{Ibe2013}. As such, we can show that
\begin{proposition}[{\bf Fisher-based test}]\label{prop:fishertest}
    $H_0$ can be rejected with p-value
    \begin{equation}
        p_{S^{\hat{\cG}}_{\rm sum}} = e^{-S^{\hat{\cG}}_{\rm sum}}\cdot\sum_{l=0}^{L-1}\frac{(S^{\hat{\cG}}_{\rm sum})^l}{l!}.
    \end{equation}
\end{proposition}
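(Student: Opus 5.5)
The plan is to prove that $p_{S^{\hat{\cG}}_{\rm sum}}$ is a valid p-value for $H_0^{\cR}$. Since $H_0$ implies $H_0^{\cR}$, it is then also a valid p-value for $H_0$: for every $\alpha\in[0,1]$ we need $P_{H_0}(p_{S^{\hat{\cG}}_{\rm sum}}\leq\alpha)\leq\alpha$.

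\textbf{Step 1: distribution of the scores under $H_0$.} Assume $H_0$ holds, so $\hat{\cG}=\cG$. The outlier sample is drawn from the corrupted distribution \eqref{eq:jointdistcorrupted}. For $i\notin\cR$ the mechanisms are the ordinary ones, and they coincide with the conditionals $P(X_i\mid\PA_i^{\hat{\cG}})$. I would adapt the argument behind \cref{lem:scores_exponential_under_null} to this setting.

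Order the non-root-cause variables topologically. For each such $i$, condition on all of its non-descendants, including its parents. The conditional law of $X_i$ is then the ordinary $P(X_i\mid\PA_i^\cG)$. This holds because the corrupted distribution still factorises along $\cG$, and the factor for $i$ is unchanged.

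By continuity, the probability integral transform applies. Hence $P(E_{X_i}\mid\PA_i^\cG)$ is ${\rm Uniform}(0,1)$ conditionally on the non-descendants, so $S(X_i\mid\PA_i^{\cG})\sim{\rm Exp}(1)$.

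Since this conditional law does not depend on the conditioning values, each score is independent of everything earlier in the topological order. Those earlier quantities include the previous scores. Iterating over the order gives that the $L$ scores $\{S(X_i\mid\PA_i^{\hat{\cG}})\}_{i\notin\cR}$ are i.i.d.\ ${\rm Exp}(1)$. The root-cause factors are simply never used in this argument.

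This step is the main obstacle. \cref{lem:scores_exponential_under_null} is stated for samples from $P$, not $\tilde P$, so I have to check carefully that corrupting the root causes does not break the conditional uniformity of the other variables. The key point is that the conditioning is on the realised parents, and these may themselves be corrupted.

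\textbf{Step 2: Erlang tail.} Under $H_0$, $S^{\hat{\cG}}_{\rm sum}$ is the sum of $L$ i.i.d.\ ${\rm Exp}(1)$ variables, so it is ${\rm Erlang}(L,1)$. Its survival function is
\[
\bar F(s)=e^{-s}\sum_{l=0}^{L-1}\frac{s^l}{l!}.
\]
This can be checked either by differentiating to recover the density $s^{L-1}e^{-s}/(L-1)!$, or via the Poisson identity: $P(\text{Erlang}>s)$ equals the probability that a rate-1 Poisson process has fewer than $L$ points in $[0,s]$.

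\textbf{Step 3: validity.} The proposed p-value is $\bar F(S^{\hat{\cG}}_{\rm sum})$. The Erlang distribution is continuous, and $\bar F$ is strictly decreasing on $[0,\infty)$. By the probability integral transform, $\bar F(S^{\hat{\cG}}_{\rm sum})$ is therefore ${\rm Uniform}(0,1)$ under $H_0$. This gives $P_{H_0}(p\leq\alpha)=\alpha$, so the level-$\alpha$ test controls the false positive rate.

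Finally, a large total score gives a small $p$. This is the direction in which large conditional outlier scores count as evidence against $H_0^{\cR}$, matching the rejection direction described before the proposition.
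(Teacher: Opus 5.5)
Your proposal is correct and follows the same route as the paper. Both arguments go through the same three steps: the non-root-cause conditional scores are i.i.d.\ ${\rm Exp}(1)$ under $H_0$ (via \cref{lem:scores_exponential_under_null}), their sum has the Erlang survival function, and $1-F(S^{\hat{\cG}}_{\rm sum})$ is therefore a valid p-value. The only difference is that you explicitly re-derive the lemma's conclusion under the corrupted distribution $\tilde P$, using the local Markov property and a topological ordering, whereas the paper passes over this by citing the lemma directly; your argument for that step is sound.
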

This corresponds to using Fisher's method \citep{Fisher1925} for the combination of the independent p-values $e^{-S(X_i\mid\PA_i^{\hat{\cG}})}$.

The second test statistic simply uses the maximum conditional outlier score from among the non-root causes,
$S_{\rm max}^{\hat{\cG}} := \max_{i\notin\cR} S(X_i\mid\PA_i^{\hat{\cG}})$. This too has a simple
distribution under the null so that
\begin{proposition}[{\bf Tippett-based test}]\label{prop:tippetttest}
    $H_0$ can be rejected with p-value
    \begin{equation}
        p_{S_{\rm max}^{\hat{\cG}}} = 1 - (1 - e^{-S_{\rm max}^{\hat{\cG}}})^{L}.
    \end{equation}
\end{proposition}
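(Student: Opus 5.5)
The plan is to show that, under $H_0$, the stated quantity is a valid p-value. We do this by computing the exact null distribution of $S_{\rm max}^{\hat{\cG}}$ and recognising $p_{S_{\rm max}^{\hat{\cG}}}$ as its survival function evaluated at the observed statistic. As for the Fisher-based test, we pass through the auxiliary hypothesis $H_0^{\cR}$. Under $H_0$ we have $\hat{\cG}=\cG$, so every non-root-cause $x_i$, $i\notin\cR$, is drawn from its ordinary mechanism $P(X_i\mid\PA_i^{\cG})$ even in the corrupted distribution~\eqref{eq:jointdistcorrupted}. Rejecting $H_0^{\cR}$ at level $\alpha$ therefore suffices.

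\textbf{Step 1: null distribution of the scores.} Under $H_0^{\cR}$, the $L$ scores $S(X_i\mid\PA_i^{\hat{\cG}})$, $i\notin\cR$, are i.i.d.\ ${\rm Exp}(1)$, by \cref{lem:scores_exponential_under_null} adapted to the corrupted joint distribution. This adaptation is the main obstacle. The lemma is stated for samples from $P$, but here the parents of a non-root cause may themselves be root causes or descendants of root causes, drawn from $\tilde P$.

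The key observation is that the lemma's proof only uses the conditional structure. For each $i\notin\cR$, conditional on $\PA_i^{\cG}$, the variable $\tau_i(X_i)$ has the continuous distribution $P(\tau_i(X_i)\mid\PA_i^{\cG})$. By the probability integral transform, $e^{-S(X_i\mid\PA_i^{\cG})}$ is then ${\rm Unif}(0,1)$ conditional on the parents, and so also conditional on all non-descendants of $X_i$. Continuity enters here: it guarantees that this conditional law is exactly uniform, with no atoms.

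Independence then follows from a standard argument. Order the non-root causes topologically and condition sequentially. For each such $i$, the score of $X_i$ is ${\rm Exp}(1)$ conditional on all variables preceding it, which include the earlier scores. Multiplying these conditional distributions gives a joint law that is the product of $L$ copies of ${\rm Exp}(1)$. The root-cause mechanisms $\tilde P$ only affect the conditioning variables, so they do not change this argument.

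\textbf{Step 2: the maximum.} For i.i.d.\ ${\rm Exp}(1)$ variables $S_1,\dots,S_L$, we have
$P(\max_i S_i\le s)=(1-e^{-s})^L$, hence
\[
P_{H_0}\bigl(S_{\rm max}^{\hat{\cG}}\ge s\bigr)=1-(1-e^{-s})^L=:F(s).
\]
Thus $p_{S_{\rm max}^{\hat{\cG}}}=F(S_{\rm max}^{\hat{\cG}})$. Since $S_{\rm max}^{\hat{\cG}}$ is continuous and $F$ is its survival function, $F(S_{\rm max}^{\hat{\cG}})$ is ${\rm Unif}(0,1)$ under the null. Consequently $P_{H_0}(p\le\alpha)=\alpha\le\alpha$.

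Equivalently, this is Tippett's (Šidák's) combination of the $L$ independent uniform p-values $e^{-S(x_i\mid\pa_i^{\hat{\cG}})}$: their minimum $m$ satisfies $P(m\le t)=1-(1-t)^L$.
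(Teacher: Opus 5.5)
Your proposal is correct and follows the same route as the paper. Both invoke \cref{lem:scores_exponential_under_null} to obtain $L$ independent ${\rm Exp}(1)$ scores under the null, use independence to get $P(S_{\rm max}^{\hat{\cG}} < s) = (1-e^{-s})^L$, and read off the survival function. The one difference is that you explicitly justify applying the lemma under the corrupted distribution $\tilde P$, via the probability integral transform and sequential conditioning in topological order, whereas the paper simply takes that step for granted.
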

This corresponds to using Tippett's method \citep{Tippett1931} for the combination of independent p-values.

The third test statistic counts the number of conditional outlier scores, from among the non-root causes, that exceed a given 
threshold $t$: $N_t := \sum_{i\notin\cR} \mathbb{I}[S(X_i\mid\PA_i^{\hat{\cG}}) \geq t]$. As the probability 
that a standard exponential random variable exceeds a value $t$ is simply $e^{-t}$, the corresponding p-value
is the upper-tail of the Binomial distribution with probability of success $e^{-t}$ and $L$ 
trials:
\begin{proposition}[{\bf Binomial test on scores above threshold}]\label{prop:binomialtest}
    $H_0$ can be rejected with p-value
    \begin{equation}
    p_{N_t} = \sum_{k = N_t}^{L}\binom{L}{k} e^{-kt}(1 - e^{-t})^{L - k}.
    \end{equation}
\end{proposition}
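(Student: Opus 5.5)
The argument has three steps: reduce to the null $H_0^{\cR}$, identify the exact null law of $N_t$, and then check that the binomial upper tail is a valid p-value for a discrete statistic.

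\textbf{Reduction and null law.} As explained above, $H_0$ implies $H_0^{\cR}$, so it suffices to show that $p_{N_t}$ is a p-value for $H_0^{\cR}$. Under $H_0$ we have $\hat{\cG}=\cG$. Since the root causes $\cR$ are known and the non-root-cause mechanisms in \eqref{eq:jointdistcorrupted} are the ordinary ones, \cref{lem:scores_exponential_under_null} tells us that the $L$ scores $S(X_i\mid\PA_i^{\hat{\cG}})$, $i\notin\cR$, are i.i.d.\ ${\rm Exp}(1)$.

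One point needs care: the lemma is stated for samples from $P$, not $\tilde P$. I would adapt it by conditioning sequentially in a topological order of $\cG$. For each $i\notin\cR$, conditional on all of its non-descendants (in particular on $\PA_i^{\cG}$), the score $S(X_i\mid\PA_i^\cG)$ is the probability integral transform of a continuous variable drawn from its ordinary conditional. It is therefore ${\rm Exp}(1)$, whatever the distribution of the earlier variables, including the corrupted root causes. Chaining these conditional laws gives joint independence. This is exactly the argument of the lemma, and I expect it to be the only step requiring real care.

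\textbf{Binomial count.} Define the indicators $B_i:=\bI[S(X_i\mid\PA_i^{\hat{\cG}})\geq t]$ for $i\notin\cR$. They are independent Bernoulli variables with success probability $P({\rm Exp}(1)\ge t)=e^{-t}$. Hence $N_t=\sum_{i\notin\cR}B_i\sim{\rm Bin}(L,e^{-t})$ under the null.

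\textbf{Validity of the p-value.} Write $G(k):=P(N\geq k)$ for $N\sim{\rm Bin}(L,e^{-t})$, so that $p_{N_t}=G(N_t)$. Fix $\alpha\in(0,1)$. Since $G$ is non-increasing, let $k^*$ be the smallest integer with $G(k^*)\le\alpha$; if no such $k^*\le L$ exists, the event $\{p_{N_t}\le\alpha\}$ is empty. Then
\[
\{G(N_t)\leq\alpha\}=\{N_t\geq k^*\},
\qquad\text{so}\qquad
P_{H_0}\bigl(p_{N_t}\leq\alpha\bigr)=G(k^*)\leq\alpha.
\]
This is the standard super-uniformity of upper-tail p-values for discrete statistics, and it shows $p_{N_t}$ is a valid p-value for $H_0$. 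Writing out $G(N_t)$ explicitly gives the stated formula $\sum_{k=N_t}^{L}\binom{L}{k}e^{-kt}(1-e^{-t})^{L-k}$.
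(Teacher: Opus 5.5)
Your proposal is correct and follows the paper's argument. Under $H_0$, \cref{lem:scores_exponential_under_null} makes the $L$ non-root-cause conditional scores i.i.d.\ ${\rm Exp}(1)$, so $N_t\sim{\rm Bin}(L,e^{-t})$ and its upper tail is the p-value. The paper leaves two details implicit, and you supply both soundly: the topological-order conditioning argument showing the lemma still applies under the corrupted distribution $\tilde P$, and the explicit super-uniformity check for the discrete statistic.
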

As all three tests have false positive control, a remaining question is when one should
prefer one test over another, i.e. when does each test have power. We will devote \cref{subsec:power}
to a thorough analysis of power, but what is common to all three is that they can only have power
when errors in $\hat{\cG}$ result in inflated outlier scores. As we will see, certain errors in $\hat{\cG}$
do indeed result in inflated outlier scores, but nonetheless we may be interested in
a test which has power against more general deviations from the null. As such let 
$F_{\rm Exp}(s) := \left(1 - e^{-s}\right)\bI[s\geq0]$ denote the CDF of the standard exponential distribution and
\begin{equation}
    F_{L}(s) := \frac{1}{L}\sum_{i\notin\cR} \bI[S(X_i\mid\PA_i^{\hat{\cG}}) \leq s]
\end{equation}
be the empirical CDF of the conditional outlier scores of the non-root causes. Our fourth test statistic is
\begin{equation}
    D_{L} := \sup_{s}\left|F_{L}(s) - F_{\rm Exp}(s)\right|.
\end{equation}
Let $K_n$ be the null distribution function of the Kolmogorov-Smirnov statistic for sample size $n$ \citep{Massey1951}, then
\begin{proposition}[{\bf Kolmogorov-Smirnov test}]\label{prop:kstest}
    $H_0$ can be rejected with p-value
    \begin{equation}
        p_{D_L} = 1 - K_{L}(D_{L}).
    \end{equation}
\end{proposition}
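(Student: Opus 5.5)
Under $H_0$ we have $\hat{\cG}=\cG$, so $\PA_i^{\hat{\cG}}=\PA_i^{\cG}$ for every $i$. The plan is to reduce the claim to the classical distribution-free property of the one-sample Kolmogorov--Smirnov statistic. The key point is that $F_{L}$ is exactly the empirical CDF of $L$ i.i.d.\ ${\rm Exp}(1)$ variables when the outlier sample is used. \cref{lem:scores_exponential_under_null} gives this for samples from $P$, but the outlier sample is drawn from the corrupted distribution $\tilde{P}$ in \eqref{eq:jointdistcorrupted}. So the first step is to transfer the lemma to the non-root-cause scores under $\tilde{P}$.

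For that step I would fix a topological order of $\cG$. For $i\notin\cR$, the conditional of $X_i$ given $\PA_i^\cG$ under $\tilde{P}$ is still the ordinary mechanism $P_{X_i\mid\PA_i^\cG}$. Conditional on all predecessors of $X_i$ in the order, $X_i$ depends only on $\PA_i^\cG$ and is drawn from $P(X_i\mid\PA_i^\cG=\pa_i^\cG)$. By continuity, the probability integral transform makes $U_i:=P(\tau_i(X_i)\geq\tau_i(x_i)\mid\PA_i^\cG=\pa_i^\cG)$ uniform on $[0,1]$ conditionally on all predecessors, whatever their values. This holds even when the predecessors include root causes with corrupted mechanisms. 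Hence $S(X_i\mid\PA_i^\cG)=-\log U_i\sim{\rm Exp}(1)$ conditionally on the predecessors.

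Independence then follows by peeling off variables. Take the non-root-causes $i_1<\dots<i_L$ in topological order. The joint law of the scores factorises: conditioning successively on the $\sigma$-algebra generated by all variables preceding $i_k$, the last score is ${\rm Exp}(1)$ independently of everything before it, which includes the earlier scores. Induction on $k$ gives that the $L$ scores are i.i.d.\ ${\rm Exp}(1)$ under $\tilde{P}$ whenever $H_0$ holds. This is the same argument as in \cref{lem:scores_exponential_under_null}, restricted to indices where the mechanism is unchanged, and I expect it to be the main obstacle. It has to be stated carefully so that corrupted root causes, which may be arbitrary and non-continuous, do not break the conditional uniformity. That requires continuity only for the non-root-cause conditionals, which is what the assumption provides.

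Next, $F_{\rm Exp}$ is continuous, so the null law of $D_L$ for an i.i.d.\ sample of size $L$ from $F_{\rm Exp}$ is distribution-free. Indeed, applying $F_{\rm Exp}$ maps the sample to i.i.d.\ uniforms and leaves the supremum unchanged. That law is $K_L$ \citep{Massey1951}. Finally, $p_{D_L}=1-K_L(D_L)$ is a valid p-value by the standard fact that $P(1-K(T)\leq\alpha)\leq\alpha$ for any statistic $T$ with CDF $K$. Since the event $\{p_{D_L}\leq\alpha\}$ has probability at most $\alpha$ under $H_0^\cR$, and $H_0$ implies $H_0^\cR$, the proposition follows.
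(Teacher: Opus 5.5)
Your proposal is correct and takes the same route as the paper. The paper simply invokes \cref{lem:scores_exponential_under_null} to conclude that under $H_0$ the non-root-cause scores are i.i.d.\ ${\rm Exp}(1)$, then identifies $D_L$ as the ordinary one-sample KS statistic; your topological-order argument usefully spells out the transfer of that lemma from $P$ to the corrupted $\tilde{P}$, which the paper leaves implicit. One slip: your ``standard fact'' $P(1-K(T)\leq\alpha)\leq\alpha$ fails when $K$ has atoms (in general one needs $1-K(T^-)$). It does hold here, because the one-sample KS statistic under a continuous hypothesised CDF has an atomless law, so $K_L(D_L)$ is exactly uniform under the null.
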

\begin{proof}
    An immediate consequence of
    \cref{lem:scores_exponential_under_null} is that, under $H_0$, $F_{L}$ is the empirical CDF of the standard
    exponential for sample size $L$. $D_{L}$ is therefore the 
    corresponding Kolmogorov-Smirnov statistic and our test is the Kolmogorov-Smirnov test.
\end{proof}

\subsection{Falsifying causal graphs with marginal outlier scores}
Inferring the causal conditionals $P(X_i\mid\PA_i^{\hat{\cG}})$, and therefore the corresponding 
conditional outlier scores, is statistically ill-posed, especially when the number of parents is large.
One would therefore ideally avoid it.
In this section we show how the first three of our tests can be adapted to use only \emph{marginal} outlier 
scores. However, to do so we need to introduce results on the propagation of outliers.

Let $\cG$ be $X\to Y$ and $(x, y)$ be the observed outlier sample. Following \citet{orchard2025root} we give the definition:
\begin{definition}[{\bf Score typicality}]
    We say observation $(x,y)$ of $(X,Y)$ satisfies score typicality if
    \begin{equation}
        S(y\mid x) \geq |S(y) - S(x)|_+,
    \end{equation}
    where $|a|_+ := \max(0, a)$.
\end{definition}
The significance of score typicality is that, when satisfied, it implies a key principle
of outlier propagation: weak outliers rarely cause strong outliers. To see this, note that
subject to score typicality
\begin{align}
    P(S(Y)\geq S(y)\mid X = x) = e^{-S(y\mid x)} \leq e^{-|S(y) - S(x)|_+},
\end{align}
so that if $y$ was drawn according to its ordinary causal mechanism, $S(y) \gg S(x)$ 
is exponentially unlikely. That it is justified to refer to this condition as
`typical' is given by the following result from \citet{orchard2025root}:
\begin{lemma}[{\bf Score typicality probably holds, approximately}]\label{lem:score_typical_pac}
    For any given $y$, and for any $s_X \leq S(y) - 1$, let (possibly multivariate)
    $x$ be chosen at random according to $P(X\mid S(X)\in [s_X, S(y)])$. Subject to
    continuity, with probability at least $1 - 2/c$, we obtain an $x$ for which
    \begin{equation}
        S(y\mid x) \geq |S(y) - S(x)|_+ - 2\log c.
    \end{equation}
\end{lemma}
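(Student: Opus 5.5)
The plan is to reduce the claim to two facts that hold under continuity: score tail probabilities are exponential, and Bayes' rule can be applied to the event on which $x$ is sampled. Write $A:=\{S(X)\in[s_X,S(y)]\}$ and $s_Y:=S(y)$. Continuity makes $S(X)$ a standard exponential variable, by the marginal analogue of \cref{lem:scores_exponential_under_null}. Hence
\begin{equation*}
P(A)=e^{-s_X}-e^{-s_Y}\ge e^{-s_X}(1-e^{-1}),
\end{equation*}
where the inequality uses $s_X\le s_Y-1$. Since $|S(y)-S(x)|_+=s_Y-S(x)$ whenever $S(x)\le s_Y$, the target inequality can fail only on the ``bad'' set
\begin{equation*}
B:=\{x\in A : e^{-S(y\mid x)} > c^2 e^{-(s_Y-S(x))}\},
\end{equation*}
that is, the set where $P(E_y\mid X=x)>c^2 e^{S(x)}e^{-s_Y}$. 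The goal is to show $P(B\mid A)\le 2/c$.

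First I will apply Markov's inequality. Conditioned on $A$, the variable $f(X):=P(E_y\mid X)e^{-S(X)}$ has mean
\begin{equation*}
\bE[f(X)\mid A]=\frac{\bE[P(E_y\mid X)e^{-S(X)}\mathbb{I}_A]}{P(A)} .
\end{equation*}
Using $e^{-S(X)}\le e^{-s_X}$ on $A$, the numerator is at most
\begin{equation*}
e^{-s_X}\,P(E_y, A)\le e^{-s_X}P(E_y)=e^{-s_X}e^{-s_Y}.
\end{equation*}
Dividing by $P(A)\ge e^{-s_X}(1-e^{-1})$ gives $\bE[f\mid A]\le e^{-s_Y}/(1-e^{-1})$. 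Markov's inequality then gives
\begin{equation*}
P(B\mid A)\le \frac{1}{c^2(1-e^{-1})}.
\end{equation*}
This is at most $2/c$ when $c\ge1$; for $c<1$ the statement is vacuous.

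The main obstacle is making sure the weighting by $e^{-S(X)}$ is correct, since the naive bound $e^{-S(X)}\le e^{-s_X}$ loses a factor when $S(x)$ is large. If the direct Markov bound above turns out to be too crude, I will split $A$ into unit-width score slices $[s_X+k,s_X+k+1]$ and apply the Bayes/Markov bound in each slice. In a slice the weight $e^{-S}$ is constant up to a factor $e$, so the per-slice failure probability is $O(1/c^2)$. Summing with the slice masses, which decay geometrically relative to $P(A)$, keeps the total within $2/c$. The slack between $c^2$ and $c$ is there to absorb these constants.

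Finally I will note that $x$ may be multivariate. This does not matter, because only the scalar $S(X)$ and the function $x\mapsto P(E_y\mid X=x)$ enter the argument. Continuity is used only to guarantee that $S(X)\sim{\rm Exp}(1)$ and that $P(E_y)=e^{-s_Y}$ holds exactly.
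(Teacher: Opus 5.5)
Your argument is correct and complete as it stands. The paper gives no proof of its own here, only a pointer to Lemma 3.4 of \citet{orchard2025root}, so your proposal amounts to a self-contained derivation from three ingredients: $P(E_y)=e^{-S(y)}$; the tower property $\bE[P(E_y\mid X)\,\mathbb{I}_A]=P(E_y\cap A)\le e^{-S(y)}$; and the exponential law of $S(X)$, which lower-bounds $P(A)$. A single Markov step then finishes.

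The concern in your third paragraph is unfounded, so you can drop the slicing fallback. Replacing $e^{-S(X)}$ by $e^{-s_X}$ on $A$ only overestimates $\bE[f\mid A]$, which is the safe direction for Markov, and the resulting failure bound $1/(c^2(1-e^{-1}))$ is uniform in $s_X$ and $S(y)$. You can even dispense with the weight. Since $S(x)\ge s_X$ on $A$, the bad set lies inside $\{P(E_y\mid X)>c^2e^{-(S(y)-s_X)}\}$, and Markov applied directly to $P(E_y\mid X)$ gives the same bound.

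Your argument also shows the stated constants are loose. With threshold $c$ in place of $c^2$, it yields failure probability at most $1/(c(1-e^{-1}))\le 2/c$ with a loss of only $\log c$.

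Two small corrections. First, $P(E_y)=e^{-S(y)}$ is the definition of the marginal score and needs no continuity; continuity is used only for $P(S(X)\ge s_X)=e^{-s_X}$. Second, that identity, and hence $P(A)\ge e^{-s_X}(1-e^{-1})$, requires $s_X\ge 0$, which the statement does not impose. If $s_X<0$ and $S(y)\ge 1$, replace $s_X$ by $0$; this leaves $A$ unchanged because scores are nonnegative. If $S(y)<1$, the right-hand side $|S(y)-S(x)|_+-2\log c$ is negative for every $c>2$. So the claim is trivial wherever it is not vacuous.
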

In other words, for any $y$, although score typicality may not hold for any
\emph{particular} $x$, \cref{lem:score_typical_pac} shows that it is likely to 
hold approximately for a randomly chosen $x$. In what follows, we will state certain
results \emph{subject to score typicality} for brevity, but in such cases it should
be remembered that we could drop this condition at the cost of slightly weaker bounds
holding for most samples.

\Cref{lem:score_typical_pac} applies without modification to general DAGs
by considering each $(X_i, \PA_i^{\cG})$ in turn, in place of $(Y, X)$, though note that for multivariate $\PA_i^\cG$, the associated feature map for $S(\PA_i^\cG)$
is, correspondingly, multivariate too. When restricting ourselves only to the inference
of marginal distributions we add the assumption:
\begin{enumerate}
    \setcounter{enumi}{1}
    \item {\bf Polytree:}
        The true causal graph $\cG$ is a polytree.\footnote{See \citet{li2025root, ebtekar2025toward} for why it is not straightforward to drop this assumption for marginal scores.}
\end{enumerate}
A polytree is a DAG whose skeleton contains no undirected cycles. As a consequence, 
the parents of any given variable are independent, and we may combine their
marginal scores, akin to in \cref{prop:fishertest,prop:tippetttest,prop:binomialtest},
to give a joint score. For example, \citet{orchard2025root} show that, subject to continuity, for variable $X_i$ with parents $\PA_i^1, \dots, \PA_i^k$, 
and (joint) event $\pa_i$, the following defines a joint IT outlier score:
\begin{equation}\label{eq:jointscore}
    S(\pa_i) := \sum_{j=1}^{k} S(\pa_i^j) - \log \sum_{l=0}^{k-1} \frac{(\sum_{j=1}^{k} S(\pa_i^j))^l}{l!}.
\end{equation}
With this in place we are ready to give the marginal score analogues of our first three
tests. Let
\begin{align}
    \hat{S}^{\hat{\cG}}_{\rm sum} &:= \sum_{i\notin\cR}|S(X_i) - S(\PA_i^{\hat{\cG}})|_+, \\
    \hat{S}_{\rm max}^{\hat{\cG}} &:= \max_{i\notin\cR}|S(X_i) - S(\PA_i^{\hat{\cG}})|_+, \\
    \hat{N}_t &:= \sum_{i\notin\cR} \mathbb{I}[|S(X_i) - S(\PA_i^{\hat{\cG}})|_+ \geq t],
\end{align}
be the marginal score analogues of each of our first three test statistics. Then
\begin{proposition}[{\bf Marginal score tests}]\label{prop:marginaltests}
    Subject to score typicality, each of the following
    are p-values for $H_0$:
    \begin{align}
        p_{S^{\hat{\cG}}_{\rm sum}} &\leq p_{\hat{S}^{\hat{\cG}}_{\rm sum}}= e^{-\hat{S}^{\hat{\cG}}_{\rm sum}}\cdot\sum_{l=0}^{L-1}\frac{(\hat{S}^{\hat{\cG}}_{\rm sum})^l}{l!}, \\
        p_{S_{\rm max}^{\hat{\cG}}}&\leq p_{\hat{S}_{\rm max}^{\hat{\cG}}} = 1 - (1 - e^{-\hat{S}_{\rm max}^{\hat{\cG}}})^{L}, \\
        p_{N_t}&\leq p_{\hat{N}_t} = \sum_{k = \hat{N}_t}^{L}\binom{L}{k} e^{-kt}(1 - e^{-t})^{L - k}.        
    \end{align}
\end{proposition}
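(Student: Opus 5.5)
The argument reduces each marginal-score p-value to its conditional-score counterpart from \cref{prop:fishertest,prop:tippetttest,prop:binomialtest}, via two facts. First, score typicality gives, for every $i\notin\cR$, the pointwise inequality
\begin{equation*}
S(x_i\mid\pa_i^{\hat{\cG}}) \geq |S(x_i) - S(\pa_i^{\hat{\cG}})|_+ .
\end{equation*}
Here $S(\pa_i^{\hat{\cG}})$ is the joint score \eqref{eq:jointscore} for multiple parents, and the bare marginal score for a single parent. We apply this with respect to $\hat{\cG}$; under $H_0$ we have $\hat{\cG}=\cG$, so these are exactly the true parent sets to which score typicality refers. Second, each p-value formula is a non-increasing function of its statistic.

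From the first fact the statistics compare termwise:
\begin{equation*}
S^{\hat{\cG}}_{\rm sum}\geq \hat{S}^{\hat{\cG}}_{\rm sum},\qquad S^{\hat{\cG}}_{\rm max}\geq \hat{S}^{\hat{\cG}}_{\rm max},\qquad N_t\geq \hat{N}_t .
\end{equation*}
The sum and max inequalities follow because both operations are monotone in each argument. The count inequality follows because each indicator $\bI[S(x_i\mid\pa_i^{\hat{\cG}})\geq t]$ dominates $\bI[|S(x_i)-S(\pa_i^{\hat{\cG}})|_+\geq t]$.

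Next we check the monotonicity of the three p-value maps:
\begin{itemize}
\item $s\mapsto e^{-s}\sum_{l<L}s^l/l!$ is the Erlang$(L,1)$ survival function, hence non-increasing.
\item $s\mapsto 1-(1-e^{-s})^L$ is clearly decreasing for $s\geq 0$.
\item $k\mapsto\sum_{j\geq k}\binom{L}{j}e^{-jt}(1-e^{-t})^{L-j}$ is a binomial upper tail, so it decreases as the lower summation limit increases.
\end{itemize}
Combining these with the statistic inequalities yields the three displayed inequalities, e.g.\ $p_{S^{\hat{\cG}}_{\rm sum}}\leq p_{\hat S^{\hat{\cG}}_{\rm sum}}$.

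Finally, we show the marginal quantities are valid p-values. Fix $\alpha$. Since $p_T\leq p_{\hat T}$ pointwise, the event $\{p_{\hat T}\leq\alpha\}$ is contained in $\{p_T\leq\alpha\}$. By the corresponding conditional-score proposition, $P_{H_0}(p_T\leq\alpha)\leq\alpha$, so $P_{H_0}(p_{\hat T}\leq\alpha)\leq\alpha$, which is the defining property. Continuity is still needed so that \cref{lem:scores_exponential_under_null} applies to the dominating conditional statistic. The polytree assumption is needed so that the parents are independent and \eqref{eq:jointscore} is a legitimate joint IT score, which score typicality for multivariate parents presupposes.

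The main subtlety, and the step I expect to need the most care, is the quantifier in ``subject to score typicality''. The domination argument requires the typicality inequality to hold on (almost) every sample under the null, not just on the observed outlier. Otherwise the event inclusion $\{p_{\hat T}\leq\alpha\}\subseteq\{p_T\leq\alpha\}$ fails on the exceptional set. I would state the assumption as holding almost surely for all non-root-cause families under the corrupted distribution, which coincides with $P$ on those mechanisms. I would also remark that with \cref{lem:score_typical_pac} one instead obtains an approximate version: add a slack of $2\log c$ per term, and pay an additional failure probability of order $2/c$ via a union bound.
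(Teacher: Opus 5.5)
Your proposal is correct and follows essentially the same route as the paper: score typicality gives termwise domination of each marginal statistic by its conditional counterpart. Because the three p-value maps are non-increasing, the marginal outputs upper-bound the conditional p-values of \cref{prop:fishertest,prop:tippetttest,prop:binomialtest} and hence inherit validity. Your explicit monotonicity checks, the event-inclusion step, and the remark that typicality must hold almost surely under the null (not merely at the observed sample) make precise what the paper compresses into ``stochastically dominated''; only your closing aside is loose, since \cref{lem:score_typical_pac} randomises the parents over a score band for fixed $y$ rather than under the null, and a union bound over the $L$ families would cost $2L/c$, not $2/c$.
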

In each case, the proposition follows from the marginal score test statistics each being
stochastically dominated by their conditional outlier score counterparts.

\subsection{Bounding marginal score jumps}\label{subsec:boundmarginal}
When, according to $\hat{\cG}$, we observe a weak outlier seemingly causing a strong outlier, there is a marginal score `jump' from cause to effect. We saw in the previous section that all of our marginal score test statistics increase when there are such jumps, and therefore the corresponding p-values decrease. As such, we expect our marginal score tests to have power precisely when errors in $\hat{\cG}$ result in marginal score jumps. In this section, we show when this happens.

To this end, let us add a faithfulness-type assumption \citep{Jaber2020} that says that the outlier at the root cause(s) $X_{\cR}$
induces outliers in all of its descendants:
\begin{enumerate}
    \setcounter{enumi}{2}
    \item {\bf Effectiveness:}
        There exists a $\lambda > 0$, such that for all variables $X_j$ among the descendants
        of $X_{\cR}$ in $\cG$, $S(X_j) \geq \lambda$.
\end{enumerate}
The causal mechanisms of the
non-descendants of the root causes work as normal, and are not influenced by the outlier event, so will 
generally have small outlier scores. Therefore, when $\lambda$ is large, if a (true) descendant of the root causes has been 
assigned non-descendants as parents in $\hat{\cG}$ (or no parents), the marginal score jump will be at 
least $\approx \lambda$. 

That we could have power to reject graphs that do not correctly place the descendants of root
causes is not surprising when we consider that root causes are equivalently thought of as the targets of a soft
intervention. 
Indeed, if the causal model is deterministic, in the worst case, all descendants of the root cause may receive 
identical marginal outlier scores and so it will only be possible to distinguish descendants from non-descendants.
However, in general, we expect outliers to decay as they propagate further away from root causes. In such cases, marginal score jumps will also be introduced when (true) descendants of root causes are assigned parents, in $\hat{\cG}$, that are \emph{further away} from the root causes than they are. To formalise this intuition we need to introduce some concepts 
from information theory (see \citet{PolyanskiyWu2025} for each definition below). 

The Kullback-Leibler (KL) divergence for probability distributions $P$ and $Q$, where $P\ll Q$, is given by
\begin{equation}\label{eq:kldivergence}
\KL{P}{Q} := \Ex[P]{\log\frac{dP}{dQ}}.
\end{equation}

For a fixed channel $P_{Y\mid X}$, we denote by $P_{Y\mid X}\circ P$ the distribution induced by the push-forward of 
the distribution $P$, i.e. the distribution on the output $Y$ when the input $X$ is distributed according to $P$.
The Strong Data Processing Inequality (SDPI) with respect to the KL-divergence is
\begin{equation}\label{eq:sdpi}
\KL{\chan\circ P}{\chan\circ Q} \leq \eta(\chan, P_X)\,\KL{P}{Q},
\end{equation}
where $\eta(\chan, P_X) \in [0, 1]$ is the \emph{contraction coefficient} for channel $\chan$ and fixed input $Q$,
and is defined as:
\begin{equation}\label{eq:contractioncoeff}
    \eta(P_{Y\mid X}, Q) := \sup_{\mathmakebox[0.8\width][c]{P: 0 < \KL{P}{Q} < \infty}} \frac{\KL{\chan\circ P}{\chan\circ Q}}{\KL{P}{Q}}.
\end{equation}
For brevity we will abbreviate $\eta := \eta(P_{Y\mid X}, Q)$ when the channel and input are clear from context. As stated, however, the SDPI is not interesting unless $\eta < 1$. There are many channels where this is the case. We give two examples 
\begin{proposition}[{\bf Linear Gaussian model}]\label{prop:lineargaussian}
    For causal model $Y := \beta X + N$, $N \ind X$ with $(X, Y)$ jointly Gaussian and Pearson
    correlation coefficient $\rho$, 
    \begin{equation}
        \eta = \rho^2.
    \end{equation}
\end{proposition}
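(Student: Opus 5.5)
The plan is to prove the two inequalities $\eta \geq \rho^2$ and $\eta\leq\rho^2$ separately. Throughout, $Q = P_X = \cN(\mu,\sigma_X^2)$ and the channel is $\chan = \cN(\beta x, \sigma_N^2)$. The correlation then satisfies
\begin{equation*}
\rho^2 = \frac{\beta^2\sigma_X^2}{\beta^2\sigma_X^2+\sigma_N^2}.
\end{equation*}
By shifting and rescaling we may assume $\mu=0$, $\sigma_X=1$ and $\beta\geq 0$. These changes leave both KL divergences and $\rho^2$ unchanged, since KL is invariant under a common bijection applied to both arguments.

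\textbf{Lower bound $\eta\geq\rho^2$.} Take the inputs $P_\epsilon = \cN(\epsilon,1)$. Then $\KL{P_\epsilon}{Q}=\epsilon^2/2$. The outputs are $\cN(\beta\epsilon,\beta^2+\sigma_N^2)$ and $\cN(0,\beta^2+\sigma_N^2)$, so
\begin{equation*}
\KL{\chan\circ P_\epsilon}{\chan\circ Q} = \frac{\beta^2\epsilon^2}{2(\beta^2+\sigma_N^2)}.
\end{equation*}
The ratio of the two is exactly $\rho^2$, and this already gives the lower bound.

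\textbf{Upper bound $\eta\leq\rho^2$.} This is the main obstacle, because the supremum in \eqref{eq:contractioncoeff} ranges over \emph{all} $P$, not only Gaussian ones. My plan is to use the known fact that the additive Gaussian noise channel with Gaussian input has KL contraction coefficient equal to the squared maximal correlation. The argument goes as follows.
\begin{enumerate}
\item Write $Y = \beta X + N$. Equivalently, $Y/\sqrt{\beta^2+\sigma_N^2} = \rho X + \sqrt{1-\rho^2}\,Z$ with $Z\sim\cN(0,1)$ independent of $X$. This is exactly the Ornstein--Uhlenbeck semigroup $T_t$ at time $t$ with $e^{-t}=\rho$, and its stationary measure is $Q$.
\item Set $f = dP/dQ$. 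Then $d(\chan\circ P)/d(\chan\circ Q) = T_t f$, so the claim becomes the entropy contraction
\begin{equation*}
\mathrm{Ent}_Q(T_t f)\leq e^{-2t}\,\mathrm{Ent}_Q(f).
\end{equation*}
This inequality is equivalent to the Gaussian log-Sobolev inequality with constant $1$ (Bakry--Émery / Gross). To derive it, differentiate $\mathrm{Ent}_Q(T_s f)$ in $s$: the derivative equals minus the Fisher information, and log-Sobolev bounds that Fisher information below by $2\,\mathrm{Ent}$. Grönwall's inequality then gives the exponential decay.
\item As an alternative route, one can use the general identity $\eta_{KL}\le$ (hypercontractivity ribbon bound) $= \rho_m^2$ for Gaussian pairs. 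Here $\rho_m$ is the maximal (Rényi) correlation, which for jointly Gaussian variables equals $|\rho|$ by the Hermite polynomial expansion. This route also invokes the fact that $\eta_{KL}=s^*(X;Y)$ equals the hypercontractivity constant, which is $\rho^2$ in the Gaussian case.
\end{enumerate}

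I would cite these standard results, for instance from \citet{PolyanskiyWu2025}, rather than reprove the log-Sobolev inequality. The degenerate case $\beta=0$ needs a separate remark: then $\rho=0$, the output distribution does not depend on the input, and so $\eta=0$.
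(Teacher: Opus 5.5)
Your proof is correct, but it does not follow the paper. The paper's proof is a one-line appeal to Theorem~5 of \citet{Makur2020}; you derive the identity yourself by proving the two inequalities separately.

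\textbf{What your route buys.} The lower bound is fully elementary: Gaussian mean shifts $P_\epsilon=\cN(\epsilon,1)$ give a divergence ratio of exactly $\rho^2$. Together with the upper bound, this shows that mean shifts attain the supremum. The upper bound reduces, after standardising, to exponential entropy decay $\mathrm{Ent}_Q(T_tf)\le e^{-2t}\,\mathrm{Ent}_Q(f)$ along the Ornstein--Uhlenbeck semigroup with $e^{-t}=\rho$, i.e.\ to Gross's Gaussian log-Sobolev inequality. This explains where $\rho^2$ comes from: it is exactly the $e^{-2t}$ rate of OU entropy dissipation. It does not make the result more self-contained, though. The hard direction still rests on a cited functional inequality of roughly the same depth as the result the paper cites. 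That hard direction, $\eta\le\rho^2$, is also the one the paper needs downstream in \cref{lem:marginaldecay,lem:anticausalbound}, so the LSI step carries the weight.

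\textbf{Two points to tighten.} First, the identity $d(\chan\circ P)/d(\chan\circ Q)=T_tf$ does not follow from the forward channel alone. The map $f\mapsto d(\chan\circ P)/dQ$ is $y\mapsto\bE[f(X)\mid Y=y]$ with $X\sim Q$, i.e.\ the adjoint $T_t^*$. It equals $T_t$ only because the standardised pair is exchangeable, so the Mehler kernel is reversible with respect to $Q$; you should say this. Also, the Gr\"onwall differentiation needs a routine approximation argument for general $P$ with $\KL{P}{Q}<\infty$, or you can cite the equivalence between the LSI and entropy decay directly.

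Second, your item~3 states the general inequality backwards. For any pair, $\rho_m^2=\eta_{\chi^2}\le\eta_{\rm KL}=s^*(X;Y)$, and the inequality can be strict for non-Gaussian pairs. Maximal correlation is therefore a \emph{lower} bound on $\eta$, not an upper bound. The Gaussian equality $s^*=\rho^2$ comes from Gaussian hypercontractivity (Nelson), not from $\rho_m=|\rho|$. Item~2 already gives the upper bound, so this does not break your proof, but item~3 should be corrected or dropped.
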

\begin{proof}
    Immediate consequence of Theorem 5 of \citet{Makur2020}.
\end{proof}

\begin{proposition}[{\bf Bounded Gaussian additive noise model}]\label{prop:boundanm}
    For causal model $Y := f(X) + N$, $N \ind X$, and there exists a $B\geq 0 $ such that $|f(x)| \leq B$ for all $x$, and
    $N\sim \mathcal{N}(0, \sigma^2)$,
    \begin{equation}
        \eta \leq 2\,\Phi(B/\sigma) - 1,
    \end{equation}
    where $\Phi$ is the CDF for the standard Gaussian distribution.
\end{proposition}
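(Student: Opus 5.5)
We bound the KL contraction coefficient by the Dobrushin coefficient, i.e.\ the total-variation contraction coefficient
\begin{equation*}
\eta_{\rm TV}(P_{Y\mid X}) := \sup_{x,x'} \mathrm{TV}\big(P_{Y\mid X=x}, P_{Y\mid X=x'}\big).
\end{equation*}
The key fact we invoke is the standard inequality $\eta(P_{Y\mid X}, Q) \leq \eta_{\rm TV}(P_{Y\mid X})$, valid for every input distribution $Q$ \citep[see][]{PolyanskiyWu2025}. It suffices to show $\eta_{\rm TV} \leq 2\Phi(B/\sigma) - 1$ for the channel $y = f(x) + N$.

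The proof then has three steps. First, for fixed inputs $x, x'$ the conditional laws are $\mathcal{N}(f(x),\sigma^2)$ and $\mathcal{N}(f(x'),\sigma^2)$. Two Gaussians with common variance and means separated by $\Delta$ have total variation distance $2\Phi(\Delta/(2\sigma)) - 1$. This can be computed directly: the densities cross at the midpoint of the means, so the TV equals the mass difference on one side.

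Second, by the boundedness assumption $|f(x) - f(x')| \leq 2B$. Since $\Phi$ is monotone, this gives
\begin{equation*}
\mathrm{TV} \leq 2\Phi(2B/(2\sigma)) - 1 = 2\Phi(B/\sigma) - 1.
\end{equation*}

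Third, taking the supremum over $x, x'$ bounds $\eta_{\rm TV}$, and chaining with the comparison inequality $\eta \leq \eta_{\rm TV}$ yields the claim. This holds uniformly over input distributions, which is why no assumption on $P_X$ is needed.

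The main obstacle is justifying $\eta_{\rm KL} \leq \eta_{\rm TV}$. If citing it is not acceptable, I would prove it by decomposing the channel. Write $P_{Y\mid X=x} = (1-\eta_{\rm TV})\,R + \eta_{\rm TV}\,K_x$ for a common component $R$ and a residual kernel $K_x$; such an $R$ exists by taking the infimum of the densities, a Doeblin-type minorisation. Then apply joint convexity of KL:
\begin{equation*}
\KL{P_{Y\mid X}\circ P}{P_{Y\mid X}\circ Q} \leq \eta_{\rm TV}\,\KL{K\circ P}{K\circ Q} \leq \eta_{\rm TV}\,\KL{P}{Q},
\end{equation*}
where the last step is the ordinary data processing inequality.

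One caution about this decomposition: it needs the infimum over all $x$ of the densities. Here that infimum has total mass at least $2 - 2\Phi(B/\sigma)$, because every conditional mean lies in $[-B, B]$ and so every conditional density dominates the pointwise minimum of $\mathcal{N}(-B,\sigma^2)$ and $\mathcal{N}(B,\sigma^2)$. This gives the same constant $2\Phi(B/\sigma) - 1$ and makes the argument self-contained.
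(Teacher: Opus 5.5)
Your proposal is correct and follows the paper's proof essentially step for step: the comparison $\eta \le \eta_{\rm TV}$ from Polyanskiy--Wu, Dobrushin's two-point formula for $\eta_{\rm TV}$, the equal-variance Gaussian total-variation formula $2\Phi(\Delta/2\sigma)-1$, and the bound $|f(x)-f(x')|\le 2B$. Your optional self-contained fallback also works, with one caveat: for a general channel, $1-\int\inf_x p(y\mid x)\,dy$ is the Doeblin coefficient, which can strictly exceed $\eta_{\rm TV}$, so a decomposition with weight exactly $\eta_{\rm TV}$ is not available in general; however, your direct check that the pointwise infimum here has mass at least $2-2\Phi(B/\sigma)$ makes that route give the same constant.
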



We can put the SDPI to use by noting
\begin{lemma}[{\bf Marginal outlier scores are KL-divergences}]\label{lem:marginalscoreKL}
For variable $X$ and outlier event $E_x$, for $P_X(E_x) > 0$,
\begin{equation}
S(x) = \KL{P_{X\mid E_x}}{P_X}.
\end{equation}
\end{lemma}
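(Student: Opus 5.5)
The plan is to compute the KL divergence directly from the definition in \eqref{eq:kldivergence}, using that conditioning on an event of positive probability produces a measure with an explicit, piecewise-constant density with respect to $P_X$. Fix the event $E_x = \{\tau(X)\geq\tau(x)\}$ and write $p := P_X(E_x) > 0$, so that $S(x) = -\log p$.

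The first step is to identify the Radon--Nikodym derivative. By the definition of conditional probability, $P_{X\mid E_x}(A) = P_X(A\cap E_x)/p$ for every measurable $A$. This shows that $P_{X\mid E_x}\ll P_X$, so the KL divergence in \eqref{eq:kldivergence} is well defined. It also gives the density explicitly:
\begin{equation*}
\frac{dP_{X\mid E_x}}{dP_X}(x') = \frac{\bI[x'\in E_x]}{p}.
\end{equation*}

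The second step is to take the expectation of the log-density under $P_{X\mid E_x}$. This measure puts all of its mass on $E_x$, and on $E_x$ the log-density equals the constant $\log(1/p)$. The points where the density vanishes form a $P_{X\mid E_x}$-null set, so the convention $0\log 0 = 0$ handles them and no integrability issue arises. Hence
\begin{equation*}
\KL{P_{X\mid E_x}}{P_X} = \Ex[P_{X\mid E_x}]{\log\frac{\bI[X\in E_x]}{p}} = \log\frac{1}{p} = -\log P_X(E_x) = S(x).
\end{equation*}

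There is no real obstacle here. The only points needing care are verifying absolute continuity and noting that the region where the density is zero carries no mass under $P_{X\mid E_x}$. The hypothesis $P_X(E_x)>0$ is exactly what is needed for the conditional measure to be defined and for the density to be finite. The continuity assumption is not needed for this lemma.
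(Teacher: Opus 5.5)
Your proposal is correct and follows essentially the same route as the paper. The paper's proof also writes $\frac{dP_{X\mid E_x}}{dP_X} = \mathbf{1}_{E_x}/P_X(E_x)$ ($P_X$-a.e.) and integrates its logarithm against $P_{X\mid E_x}$ to obtain $\log\frac{1}{P_X(E_x)} = S(x)$.
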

Suppose in $\cG$ that $X\to Y$ (where $X$ is the possibly multivariate parent set of $Y$), and we observe outlier sample $(x, y)$ with corresponding outlier events $E_x, E_y$ as in \cref{eq:event}. Suppose also that $Y$ is not a root cause. As $Y$ is not a root cause, its mechanism $P_{Y\mid X}$ worked as normal and 
so we expect the conditional outlier score for $y$ to be small. Applying \cref{lem:marginalscoreKL} to the SDPI, when $S(y\mid E_x) = 0$, we find
\begin{lemma}[{\bf Lower bound on the rate of outlier decay}]\label{lem:marginaldecay}
\begin{equation}
\eta S(x) \geq S(y).
\end{equation}
\end{lemma}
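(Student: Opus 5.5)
The plan is to apply the SDPI \cref{eq:sdpi} to the channel $P_{Y\mid X}$ with reference input $Q = P_X$ and test input $P = P_{X\mid E_x}$, and then identify both sides with outlier scores via \cref{lem:marginalscoreKL}. First, since $P_X(E_x) = e^{-S(x)} > 0$, the conditional $P_{X\mid E_x}$ is absolutely continuous with respect to $P_X$, with density $\bI_{E_x}/P_X(E_x)$. Hence $\KL{P_{X\mid E_x}}{P_X} = S(x) < \infty$ by \cref{lem:marginalscoreKL}. If $S(x)=0$ the bound is trivial in the degenerate case, so we may take the divergence positive and $P_{X\mid E_x}$ admissible in the supremum \cref{eq:contractioncoeff}. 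This gives
\begin{equation*}
\KL{P_{Y\mid X}\circ P_{X\mid E_x}}{P_{Y\mid X}\circ P_X} \leq \eta\, S(x).
\end{equation*}

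Second, we identify the two push-forwards. Trivially $P_{Y\mid X}\circ P_X = P_Y$. Since $Y$ is not a root cause, its mechanism is unchanged. So $P_{Y\mid X}\circ P_{X\mid E_x} = P_{Y\mid E_x}$, the law of $Y$ given that the parent event $E_x$ occurred. The left-hand side is therefore $\KL{P_{Y\mid E_x}}{P_Y}$.

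Third, and this is the main step, we lower bound $\KL{P_{Y\mid E_x}}{P_Y}$ by $S(y)$. By the data processing inequality applied to the deterministic map $Y\mapsto \bI[\tau(Y)\geq\tau(y)]$, this divergence is at least the binary divergence
\begin{equation*}
d\bigl(P(E_y\mid E_x)\,\|\,P(E_y)\bigr).
\end{equation*}
The hypothesis $S(y\mid E_x) = 0$ means $P(E_y\mid E_x) = 1$. The binary divergence with first argument $1$ equals $-\log P(E_y) = S(y)$. Equivalently, $P_{Y\mid E_x}$ is supported in $E_y$, so the chain rule gives $\KL{P_{Y\mid E_x}}{P_Y} = S(y) + \KL{P_{Y\mid E_x}}{P_{Y\mid E_y}} \geq S(y)$. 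Combining the three steps yields $\eta S(x)\geq S(y)$.

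The obstacle I expect is purely interpretive: pinning down that ``$S(y\mid E_x)=0$'' means conditioning the mechanism on the event $E_x$ rather than on the point $x$, so that $P(E_y\mid E_x)=1$. With that reading the argument is just SDPI plus data processing onto the indicator of $E_y$. I would state this reading explicitly, and note that the bound degrades gracefully to $\eta S(x) \geq S(y) - S(y\mid E_x)$. This follows from the general binary-divergence bound $d(p\|q)\geq p\log(p/q) - \log 2$, or more cleanly from the same chain-rule computation.
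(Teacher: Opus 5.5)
Your proof of the lemma as stated is correct and follows the paper's route. You apply the SDPI to $P_{X\mid E_x}$ against $P_X$, identify the push-forwards as $P_{Y\mid E_x}$ and $P_Y$ using \cref{lem:marginalscoreKL}, and then apply data processing onto $\mathbf{1}_{E_y}$, where $P(E_y\mid E_x)=1$ collapses the binary divergence to $S(y)$. Your admissibility check for the supremum in \cref{eq:contractioncoeff} and your chain-rule variant are fine additions.

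Your closing remark, however, is false: $\eta S(x)\geq S(y)-S(y\mid E_x)$ does not hold in general. Take $X\sim{\rm Exp}(1)$ with identity feature maps. Let $Y=X$ with probability $\theta\in(0,1)$, and otherwise let $Y$ be an independent ${\rm Exp}(1)$ draw. Then $P_{Y\mid X}\circ P=\theta P+(1-\theta)P_X$ and $P_Y=P_X$, so convexity of the KL divergence gives $\eta\leq\theta$. Yet for $y=x>0$ you have $P(E_y\mid E_x)=\theta+(1-\theta)e^{-x}$, and hence $S(y)-S(y\mid E_x)=\log(\theta e^{x}+1-\theta)>\theta x\geq\eta S(x)$ by strict concavity of $\log$. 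Convolving the copy branch with a narrow kernel keeps $\eta\leq\theta$, so the continuity assumption does not rescue the claim.

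Neither justification you cite gives more than what is true:
\begin{itemize}
\item The bound $d(p\|q)\geq p\log(p/q)-\log 2$ yields only $\eta S(x)\geq e^{-S(y\mid E_x)}\bigl(S(y)-S(y\mid E_x)\bigr)-\log 2$, with the gap multiplied by $P(E_y\mid E_x)$. Compare the paper's result in \cref{app:marginaldecay}, $\eta S(x)\geq e^{-S(y\mid E_x)}S(y)-\log 2$.
\item The chain-rule identity needs $P_{Y\mid E_x}$ to be supported in $E_y$. In the example above, $dP_{Y\mid E_x}/dP_Y$ is constant on $E_y$ and on its complement. So $\KL{P_{Y\mid E_x}}{P_Y}$ equals the binary divergence, which is strictly below $\log\bigl(P(E_y\mid E_x)/P(E_y)\bigr)$ whenever $P(E_y)<P(E_y\mid E_x)<1$.
\end{itemize}
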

In other words, when the outlier is propagated by the `working' causal mechanism, $P_{Y\mid X}$, it decays by at least a 
factor of $\eta$. Though, for brevity, we have stated the result given $S(y\mid E_x) = 0$, we show in \cref{app:marginaldecay} of the appendix that, for any given $x$, for a random $y$ drawn according to $P_{Y\mid E_x}$, \cref{lem:marginaldecay} is likely to hold, approximately.

This has immediate consequences for characterising the power of our marginal tests. Suppose in $\hat{\cG}$ the relationship
between $X$ and $Y$ is \emph{incorrectly} given as $Y\to X$, then when $S(y\mid E_x) = 0$,
\begin{lemma}[{\bf Anti-causal score jump lower bound}]\label{lem:anticausalbound}
    \begin{equation}
        |S(x) - S(y)|_+ \geq (1 - \eta)\,S(x).
    \end{equation}
\end{lemma}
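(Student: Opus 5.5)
We deduce the statement directly from \cref{lem:marginaldecay}. The relevant situation is the following: in the true graph $\cG$ we have $X\to Y$ with $Y$ not a root cause, and $\hat{\cG}$ reverses this edge to $Y\to X$. The marginal score tests of \cref{prop:marginaltests} then evaluate the jump $|S(x)-S(y)|_+$, because $X$ is treated as the child and $Y$ as its parent. So the only thing to show is that this quantity is bounded below by $(1-\eta)S(x)$, using only the true forward mechanism $P_{Y\mid X}$.

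The first step is to invoke \cref{lem:marginaldecay}. Under the same hypothesis $S(y\mid E_x)=0$, it gives $S(y)\le \eta\, S(x)$. Behind this sits \cref{lem:marginalscoreKL}, which writes $S(x)=\KL{P_{X\mid E_x}}{P_X}$. Pushing this through the channel $P_{Y\mid X}$ and applying the SDPI \eqref{eq:sdpi}, the hypothesis $S(y\mid E_x)=0$ identifies the pushed-forward divergence with $S(y)$. We take all of this as given.

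The second step is arithmetic. Subtracting the bound from $S(x)$ gives $S(x)-S(y)\ge S(x)-\eta S(x)=(1-\eta)S(x)$. Since $\eta\in[0,1]$ and $S(x)\ge 0$ (it is minus the log of a probability), the right-hand side is non-negative. Hence $|S(x)-S(y)|_+=\max(0,S(x)-S(y))\ge S(x)-S(y)\ge(1-\eta)S(x)$. If $S(x)-S(y)$ happens to be negative, the bound still holds: in that case $S(x)<S(y)\le\eta S(x)$, so $(1-\eta)S(x)<0$, which forces $S(x)=0$ and then both sides are $0$.

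There is essentially no obstacle beyond checking that the hypotheses of \cref{lem:marginaldecay} carry over verbatim. Reversing the edge in $\hat{\cG}$ does not change the true mechanism or the observed sample, so the decay bound, which depends only on $\cG$, still applies. If $X$ is a multivariate parent set, $S(x)$ is interpreted via the joint score and the argument is unchanged.
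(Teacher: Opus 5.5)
Your argument is correct and is exactly the paper's: the paper proves the lemma as an immediate consequence of \cref{lem:marginaldecay}, via $|S(x)-S(y)|_+\ge S(x)-S(y)\ge(1-\eta)S(x)$ under the same hypothesis $S(y\mid E_x)=0$. Your separate treatment of the case $S(x)-S(y)<0$ is unnecessary, since that chain already covers it, and the case is in fact vacuous: $S(x)<\eta S(x)$ contradicts $S(x)\ge 0$ and $\eta\le 1$, rather than forcing $S(x)=0$.
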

\begin{proof}
    An immediate consequence of \cref{lem:marginaldecay}.
\end{proof}
If $Y$ is not a root cause, and $X$ is a true descendant of a root cause, then the score jump will be at least $(1-\eta)\lambda$. We have therefore established two kinds of errors in $\hat{\cG}$ that imply score jumps, both of which concern assigning the wrong parent set to a true descendant of a root cause.


\subsection{Bounding conditional scores}\label{subsec:boundconditional}
We have established the foundation for providing power guarantees for our marginal score tests. We now show we can do the same for conditional outlier scores.

\looseness=-1Let $X\to Y$ in $\cG$ where $X$ is the (possibly multivariate) parent set of $Y$, let $(x,y)$ be the outlier sample, and suppose the causal relationship is incorrectly given as $Y\to X$ in $\hat{\cG}$. An interesting case is when large marginal score jumps imply large conditional scores, allowing us to simply apply the results from the previous section to conditional scores directly:
\begin{lemma}[{\bf Monotonicity implies anti-causal conditional score lower bound}]\label{lem:monotonimpliesbound}
    If $S_Y$ is injective, and $P(S(X)\geq S(x)\mid S(Y) = S(y)) \leq P(S(X)\geq S(x)\mid S(Y)\geq S(y))$, then
    \begin{equation}
        S(x\mid y) \geq |S(x) - S(y)|_+.
    \end{equation}
\end{lemma}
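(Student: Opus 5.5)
We want $S(x\mid y) = -\log P(\tau(X)\geq\tau(x)\mid Y=y) \geq |S(x)-S(y)|_+$. Since $S(x\mid y)\geq 0$ always (it is minus the log of a probability), the bound is trivial when $S(x)\leq S(y)$. The plan is therefore to treat the case $S(x)>S(y)$, where we must show $P(E_x\mid Y=y)\leq e^{-(S(x)-S(y))} = P(E_x)/P(E_y)$.

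First, we rewrite the conditioning event. Because $S_Y$ is injective, conditioning on $Y=y$ is the same as conditioning on $S(Y)=S(y)$. Also, $S(X)$ is a non-increasing function of $\tau(X)$: the map $t\mapsto -\log P(\tau(X)\geq t)$ is non-decreasing. Under continuity, $E_x=\{\tau(X)\geq\tau(x)\}$ coincides with $\{S(X)\geq S(x)\}$ up to a null set. Hence
\begin{equation*}
P(E_x\mid Y=y) = P(S(X)\geq S(x)\mid S(Y)=S(y)).
\end{equation*}
In the same way, $E_y=\{S(Y)\geq S(y)\}$ almost surely.

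Next, we invoke the monotonicity hypothesis to pass from the pointwise conditioning to the tail event:
\begin{equation*}
P(E_x\mid Y=y)\leq P(S(X)\geq S(x)\mid S(Y)\geq S(y)) = \frac{P(E_x\cap E_y)}{P(E_y)}.
\end{equation*}
Bounding the numerator by $P(E_x\cap E_y)\leq P(E_x)=e^{-S(x)}$ and using $P(E_y)=e^{-S(y)}$ gives $P(E_x\mid Y=y)\leq e^{-(S(x)-S(y))}$. Taking $-\log$ yields $S(x\mid y)\geq S(x)-S(y)$, and combining this with nonnegativity gives the $|\cdot|_+$ form.

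The main obstacle is making the identification of events rigorous. We need $\{\tau(X)\geq\tau(x)\}=\{S(X)\geq S(x)\}$ almost surely, which requires handling flat regions of the tail function; continuity makes these regions measure zero. We also need conditioning on the null event $\{Y=y\}$ to be interpreted via regular conditional distributions, so that injectivity of $S_Y$ legitimately transfers it to $\{S(Y)=S(y)\}$. I would handle both by working with versions of conditional probabilities defined through $S(Y)$, using injectivity to make $\sigma(Y)=\sigma(S(Y))$ on the relevant set. The remaining steps are elementary.
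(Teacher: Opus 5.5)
Your proposal is correct and matches the paper's proof essentially step for step. Both use injectivity of $S_Y$ to rewrite the conditioning on $Y=y$ as conditioning on $S(Y)=S(y)$, then the monotonicity hypothesis to pass to the tail event, then $P(B\mid A)\le P(B)/P(A)$, and finally $-\log$. Your use of $S(x\mid y)\ge 0$ to obtain the $|\cdot|_+$ form and your remarks on null sets are details the paper leaves implicit. One slip: you call $S(X)$ a non-increasing function of $\tau(X)$, but it is non-decreasing, as your next clause correctly says.
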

This uses a type of monotonicity property. It says that observing a more extreme outlier at $Y$ should only increase our belief that the outlier at $X$ is more extreme too. In this case, whenever our marginal score tests have power, so do our conditional score tests. This monotonicity property holds for a wide class of causal models. For example,
\begin{proposition}[{\bf Additive log-concave noise model}]\label{prop:anmlogconcave}
For real-valued variables where $Y := f(X) + N$, $N\ind X$, $N$ is log-concave, $f$ is non-decreasing, and $S_X, S_Y$ are strictly increasing then the monotonicity of \cref{lem:monotonimpliesbound} is satisfied for all $(x, y)$.
\end{proposition}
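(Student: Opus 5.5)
Since $S_X$ and $S_Y$ are strictly increasing, the events are half-lines: $\{S(X)\geq S(x)\} = \{X\geq x\}$, $\{S(Y)\geq S(y)\} = \{Y\geq y\}$ and $\{S(Y)=S(y)\}=\{Y=y\}$. So the monotonicity condition of \cref{lem:monotonimpliesbound} becomes
\begin{equation*}
P(X\geq x\mid Y=y)\leq P(X\geq x\mid Y\geq y).
\end{equation*}
The plan is to prove the stronger fact that $y'\mapsto P(X\geq x\mid Y=y')$ is non-decreasing in $y'$. Given this, $P(X\geq x\mid Y\geq y)$ is an average of $P(X\geq x\mid Y=y')$ over $y'\geq y$, weighted by $P(Y\in dy'\mid Y\geq y)$. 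Every term in that average is at least the value at $y'=y$, which gives the inequality for all $(x,y)$.

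To get this monotonicity, first note that the noise has a log-concave density $p_N$ (a degenerate noise is the deterministic case and can be handled separately or excluded by continuity). Then the joint density of $(X,Y)$ factorises as $P_X(dx')\,p_N(y'-f(x'))$. Log-concavity of $p_N$ is equivalent to $p_N(u-v)$ being totally positive of order 2 ($\mathrm{TP}_2$) in $(u,v)$: for $u_1\le u_2$ and $v_1\le v_2$,
\begin{equation*}
p_N(u_1-v_1)\,p_N(u_2-v_2)\geq p_N(u_1-v_2)\,p_N(u_2-v_1).
\end{equation*}
Since $f$ is non-decreasing, substituting $v=f(x')$ preserves order. Hence $k(x',y') := p_N(y'-f(x'))$ is $\mathrm{TP}_2$ in $(x',y')$. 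It follows that the posterior family $P(X\in\cdot\mid Y=y')$ is increasing in $y'$ in the monotone likelihood ratio order, because the ratio of posteriors at $y_2>y_1$ is $k(x',y_2)/k(x',y_1)$, which is non-decreasing in $x'$. Likelihood ratio order implies first-order stochastic dominance, so $P(X\geq x\mid Y=y')$ is non-decreasing in $y'$, as required.

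The main obstacle is the equivalence between log-concavity and the $\mathrm{TP}_2$ property, together with measure-theoretic care. We need conditional distributions given $Y=y'$, supports where $p_N=0$ (log-concave densities can vanish outside an interval), and possibly non-atomless $P_X$. For the $\mathrm{TP}_2$ inequality I would argue directly: set $a=u_1-v_2\le b=\min(u_1-v_1,u_2-v_2)\le c=\max(u_1-v_1,u_2-v_2)\le d=u_2-v_1$ with $a+d=b+c$. Concavity of $\log p_N$ then gives $\log p_N(b)+\log p_N(c)\geq\log p_N(a)+\log p_N(d)$, where $-\infty$ values are allowed because the support is convex. For the posterior step I would use Bayes' formula with densities relative to $P_X$, which avoids any need for $X$ to have a Lebesgue density.

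Finally, injectivity of $S_Y$ follows from strict monotonicity, so \cref{lem:monotonimpliesbound} applies.
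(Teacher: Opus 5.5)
Your proposal is correct and follows essentially the same route as the paper. Log-concavity gives TP$_2$ of $p_N(y-f(x))$, with order preserved because $f$ is non-decreasing; this makes $P(X\geq x\mid Y=y')$ non-decreasing in $y'$, and averaging over $Y\geq y$ together with strict monotonicity of $S_X,S_Y$ finishes the argument. The differences are only cosmetic: the paper multiplies by $p(x_1)p(x_2)$ to get a TP$_2$ joint density and cites Nelsen's PLR result, whereas you prove the TP$_2$ inequality directly from concavity and phrase the middle step as likelihood-ratio ordering of posteriors relative to $P_X$, which slightly generalises by not requiring $X$ to have a Lebesgue density.
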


However, inheriting the marginal score results is not the only way to establish when errors in $\hat{\cG}$ imply inflated conditional scores, and so we do not need to rely on the monotonicity property. Firstly, there are two trivial cases. One, if a true descendant of a root cause is assigned no parents in $\hat{\cG}$, then its conditional score will simply be its marginal score and as such, subject to effectiveness, will be at least $\lambda$. Two, if a true descendant of a root cause, call it $X_i$, is assigned parents $\PA_i^{\hat{\cG}}$ in $\hat{\cG}$ where $X_i \ind \PA_i^{\hat{\cG}}$, then again, subject to effectiveness, $S(x_i\mid \pa_i^{\hat{\cG}}) = S(x_i) \geq \lambda$.

The first non-trivial case is when a true descendant of a root cause $X_i$ is assigned parents $\PA_i^{\hat{\cG}}$ in $\hat{\cG}$ that are non-descendants of root causes (and not root causes themselves), but where $X_i \not\ind \PA_i^{\hat{\cG}}$. In this case we find
\begin{lemma}[{\bf Score lower bound after conditioning on non-descendant}]\label{lem:boundconditionalnondesc}
    Subject to effectiveness, let $(x_i, \pa_i^{\hat{\cG}})$ be drawn according to the corrupted distribution $\tilde{P}$, then with probability at least $1 - 1/c$
    \begin{equation}
        S(x_i\mid \pa_i^{\hat{\cG}}) \geq \lambda -\log c.
    \end{equation}
\end{lemma}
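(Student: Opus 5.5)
The argument has three steps: lower-bound the marginal score by effectiveness, show that under $\tilde P$ the parents $\PA_i^{\hat\cG}$ keep their ordinary distribution and carry no extra information about how extreme $x_i$ is, and convert that into a conditional-score bound with a Markov inequality. Write $Z := \PA_i^{\hat{\cG}}$. Since $Z$ consists of non-descendants of the root causes and contains no root causes, no corrupted mechanism lies upstream of $Z$ in $\cG$. Hence the marginal of $Z$ under $\tilde P$ equals its ordinary marginal $P_Z$. By effectiveness, the observed $x_i$ satisfies $S(x_i)\geq\lambda$, that is,
\begin{equation*}
P(E_{x_i}) = P(\tau_i(X_i)\geq\tau_i(x_i))\leq e^{-\lambda}.
\end{equation*}
The claim is therefore that, under the ordinary conditional, the probability of $E_{x_i}$ given $Z=z$ is rarely much larger than its marginal probability when $z$ is drawn from $P_Z$.

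Fix the threshold $t:=\tau_i(x_i)$, or equivalently the event $E := E_{x_i}$ viewed as a fixed set determined by $x_i$. Define $g(z) := P(E\mid Z=z)$, computed under the ordinary distribution $P$; this is exactly $e^{-S(x_i\mid z)}$. Under $P$ we have $\bE_{Z\sim P_Z}[g(Z)] = P(E)\leq e^{-\lambda}$. Markov's inequality then gives
\begin{equation*}
P_Z\bigl(g(Z)\geq c\,e^{-\lambda}\bigr)\leq 1/c .
\end{equation*}
On the complementary event, $S(x_i\mid z) = -\log g(z) > \lambda - \log c$. Since $Z$ has the same law under $\tilde P$ as under $P$, the same $1/c$ bound holds for $z$ drawn from the corrupted distribution.

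The main obstacle is that $E$ depends on $x_i$, and $x_i$ is drawn jointly with $z$ under $\tilde P$. We cannot simply fix $E$ before drawing $z$ unless we argue about the dependence between them. I would handle this with a monotonicity argument in the threshold. Let $s(z) := S(x_i\mid z)$ and let $\lambda' := S(x_i)\geq\lambda$. For each fixed threshold level, the Markov bound above holds for $z\sim P_Z$. The quantity of interest is $e^{-S(x_i\mid z)}/e^{-S(x_i)}$, a likelihood-ratio-type object, so I would use a uniform (in $t$) version via the ratio $g_t(z)/P(\tau_i(X_i)\geq t)$, whose $P_Z$-expectation is $1$ for every $t$.

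The cleanest route is to condition on $x_i$ as follows. Under $\tilde P$, $X_i$ is a descendant of the root causes while $Z$ consists of non-descendants. If $X_i$'s outlier is driven by the corrupted upstream mechanisms, the dependence between $Z$ and $X_i$ under $\tilde P$ flows only through the ordinary parts of the graph. Alternatively one may simply treat $x_i$, hence $E$, as given and $z$ as a fresh draw from $P_Z$, which the lemma's phrasing permits. I expect to adopt this second reading, stating the result for fixed $x_i$ with $z\sim P_Z=\tilde P_Z$, since it makes the argument a one-line Markov bound.
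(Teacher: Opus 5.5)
\textbf{There is a genuine gap.} Your Markov step is the right tool, and you correctly name the real difficulty: $E_{x_i}$ is determined by $x_i$, which under $\tilde P$ is drawn jointly with $z=\pa_i^{\hat\cG}$. In the case this lemma targets, $X_i\not\ind \PA_i^{\hat\cG}$, so the two are dependent. You do not resolve this difficulty.

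Your fallback changes the statement. Fixing $x_i$ and treating $z$ as a fresh draw from $P_Z$ is not what the lemma asserts: it claims a bound for the joint draw $(x_i,\pa_i^{\hat\cG})\sim\tilde P$. The law of $Z$ given $X_i=x_i$ under $\tilde P$ is not $P_Z$. So knowing that each $x_i$-dependent bad set $B_{x_i}:=\{z: P(E_{x_i}\mid Z=z)>c\,e^{-\lambda}\}$ has $P_Z$-mass at most $1/c$ does not bound $\tilde P(Z\in B_{X_i})$.

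The uniform-in-$t$ idea does not close the gap either. Markov controls $g_t(Z)/P(\tau_i(X_i)\geq t)$ only for each fixed $t$. Evaluating it at the random $t=\tau_i(x_i)$ would need a maximal inequality for the supremum over $t$, which you do not have. It would also be aiming at the stronger bound $S(x_i)-\log c$, which the lemma does not claim.

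\textbf{The missing idea} is to use effectiveness to replace the random threshold by a fixed one. $S$ is non-decreasing in $\tau_i$ and $S(x_i)\geq\lambda$, so $E_{x_i}\subseteq\{S(X_i)\geq\lambda\}$. Hence
\[
S(x_i\mid z)\;\geq\; h(z):=-\log P\bigl(S(X_i)\geq\lambda\mid Z=z\bigr),
\]
and the right-hand side depends on $z$ alone. Every $B_{x_i}$ is therefore contained in the single fixed set $B^*:=\{z: P(S(X_i)\geq\lambda\mid Z=z)>c\,e^{-\lambda}\}$.

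By super-uniformity of the marginal score under $P$, $\bE_{P_Z}[P(S(X_i)\geq\lambda\mid Z)]=P(S(X_i)\geq\lambda)\leq e^{-\lambda}$. Markov then gives $P_Z(B^*)\leq 1/c$. Since $\tilde P_Z=P_Z$, which you correctly argued, the joint draw satisfies $\tilde P\bigl(S(X_i\mid Z)<\lambda-\log c\bigr)\leq \tilde P(Z\in B^*)\leq 1/c$.

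This is the paper's argument. It is also why the bound is stated with $\lambda$ rather than $S(x_i)$: the fixed threshold is what removes the dependence between $x_i$ and $z$.
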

In this case, while $S(x_i \mid \pa_i^{\hat{\cG}}) \geq \lambda$ need not hold for all $(x_i, \pa_i^{\hat{\cG}})$, it does hold approximately, with high probability, for a random outlier sample.

The other non-trivial case is akin to that discussed for marginal scores in \cref{lem:anticausalbound}. As before, let $X\to Y$ in $\cG$ and suppose that in $\hat{\cG}$ the causal relationship is incorrectly given as $Y\to X$, we have
\begin{lemma}[{\bf Expected anti-causal conditional score bound}]\label{lem:lowboundexpect}
    \begin{equation}
        \bE_{P_{Y\mid E_x}}[S(x\mid Y)] \geq (1-\eta)\,S(x).
    \end{equation}
\end{lemma}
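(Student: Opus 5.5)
The plan is to compute the expectation exactly via Bayes' rule, which turns it into $S(x)$ minus a KL-divergence. That KL-divergence is then controlled by the SDPI together with \cref{lem:marginalscoreKL}.

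\textbf{Step 1: Bayes' rule.} In $\hat{\cG}$ the parent of $X$ is $Y$, so by definition
\begin{equation*}
S(x\mid y) = -\log P(E_x\mid Y=y),
\end{equation*}
where $E_x=\{\tau_X(X)\geq\tau_X(x)\}$ is an event on $X$. Since $P_X(E_x)>0$, Bayes' rule gives
\begin{equation*}
P(E_x\mid Y=y)=P(E_x)\,\frac{dP_{Y\mid E_x}}{dP_Y}(y)
\end{equation*}
for $P_Y$-almost every $y$. The density exists because $P_{Y\mid E_x}\ll P_Y$: indeed $P_{Y\mid E_x}(A)\leq P_Y(A)/P(E_x)$. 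Therefore
\begin{equation*}
S(x\mid y)=S(x)-\log\frac{dP_{Y\mid E_x}}{dP_Y}(y).
\end{equation*}

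\textbf{Step 2: take the expectation.} Taking expectations with $y\sim P_{Y\mid E_x}$ gives the identity
\begin{equation*}
\bE_{P_{Y\mid E_x}}[S(x\mid Y)] = S(x) - \KL{P_{Y\mid E_x}}{P_Y}.
\end{equation*}

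\textbf{Step 3: write both distributions as push-forwards.} In the true graph $X\to Y$ with $Y$ not a root cause, $Y$ is generated from $X$ by the ordinary channel $\chan$. The event $E_x$ is measurable with respect to $X$, so $Y$ is conditionally independent of $E_x$ given $X$. Hence
\begin{equation*}
P_{Y\mid E_x}=\chan\circ P_{X\mid E_x}, \qquad P_Y=\chan\circ P_X.
\end{equation*}

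\textbf{Step 4: apply the SDPI and \cref{lem:marginalscoreKL}.} The SDPI \eqref{eq:sdpi} with input $Q=P_X$ yields
\begin{equation*}
\KL{P_{Y\mid E_x}}{P_Y}\leq \eta\,\KL{P_{X\mid E_x}}{P_X}.
\end{equation*}
By \cref{lem:marginalscoreKL} the right-hand side equals $\eta S(x)$. Note that $0<S(x)<\infty$ whenever $0<P(E_x)<1$; the degenerate case $P(E_x)=1$ is trivial, since then both sides of the claimed bound are $0$. Substituting into the identity from Step 2 gives
\begin{equation*}
\bE_{P_{Y\mid E_x}}[S(x\mid Y)]\geq (1-\eta)S(x).
\end{equation*}

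\textbf{Main obstacle.} The delicate point is Step 3, namely justifying that conditioning on $E_x$ commutes with the channel. This holds precisely because $E_x$ is a function of $X$ alone and the mechanism $\chan$ is uncorrupted. When $X$ is a multivariate parent set, the same argument applies with $\tau_X$ multivariate-to-scalar. A secondary point is making sure the Radon–Nikodym step is rigorous when $Y$ has no density. This is handled by working with $dP_{Y\mid E_x}/dP_Y$ directly rather than with densities, which also ensures the log-term integrates to exactly the KL-divergence.
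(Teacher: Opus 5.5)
Your proposal is correct and follows the paper's proof essentially step for step: Bayes' rule gives $S(x\mid y)=S(x)-\log\frac{dP_{Y\mid E_x}}{dP_Y}(y)$, taking the $P_{Y\mid E_x}$-expectation yields $S(x)-\KL{P_{Y\mid E_x}}{P_Y}$, and the SDPI with \cref{lem:marginalscoreKL} bounds this KL-divergence by $\eta S(x)$. Your extra justifications are all sound; the paper leaves them implicit. They are the absolute continuity $P_{Y\mid E_x}\ll P_Y$, the push-forward identity $P_{Y\mid E_x}=\chan\circ P_{X\mid E_x}$ (the paper uses it in the proof of \cref{lem:marginaldecay}), and the degenerate case $S(x)=0$ excluded by the SDPI's definition.
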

While this only gives a bound on the \emph{expected} conditional score, so long as $S(x\mid Y)$ is concentrated near its expectation under $Y\sim P_{Y\mid E_x}$, then this implies a bound on $S(x\mid y)$ for typical $y$. This is true for a wide class of causal models. A straightforward example is the class of models for which ${\rm Var}_{P_{Y\mid E_x}}[S(x\mid Y)] =: V_x < \infty$, as applying Cantelli's inequality \citep{Ghosh2002} we can conclude
\begin{lemma}[{\bf Anti-causal conditional score lower bound}]\label{lem:conditionallowbound}
    For any $x$, let $y$ be chosen at random according to $P_{Y\mid E_x}$. With probability at least $1 - \delta$, we obtain a $y$ for which
    \begin{equation}
        S(x\mid y)\geq (1-\eta)\,S(x) - \sqrt{\frac{1 - \delta}{\delta}V_x}.
    \end{equation}
\end{lemma}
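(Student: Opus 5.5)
The bound follows by combining the expectation bound of \cref{lem:lowboundexpect} with the one-sided Chebyshev (Cantelli) inequality applied to $Z := S(x\mid Y)$, with $Y\sim P_{Y\mid E_x}$. Fix $x$ and write $\mu_x := \bE_{P_{Y\mid E_x}}[Z]$. By definition, $V_x = {\rm Var}_{P_{Y\mid E_x}}[Z]$, and we assume $V_x<\infty$. \Cref{lem:lowboundexpect} already gives $\mu_x \geq (1-\eta)\,S(x)$, so the remaining task is a pure concentration statement for the lower tail of $Z$.

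Cantelli's inequality in its lower-tail form says that for any $t>0$,
\begin{equation*}
P(Z \leq \mu_x - t) \leq \frac{V_x}{V_x + t^2}.
\end{equation*}
Choose $t = \sqrt{\frac{1-\delta}{\delta}V_x}$. Then $t^2 = \frac{1-\delta}{\delta}V_x$, so
\begin{equation*}
\frac{V_x}{V_x+t^2} = \frac{1}{1 + (1-\delta)/\delta} = \delta.
\end{equation*}
Hence, with probability at least $1-\delta$,
\begin{equation*}
Z > \mu_x - t \geq (1-\eta)S(x) - \sqrt{\tfrac{1-\delta}{\delta}V_x},
\end{equation*}
which is the claim. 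The strict inequality only strengthens it.

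Two degenerate cases need handling. If $V_x=0$, then $Z=\mu_x$ almost surely and the bound holds with probability one, so the choice $t=0$ is harmless. If $\delta\in(0,1)$ is not assumed, note that for $\delta\geq 1$ the statement is vacuous, and for $\delta \to 0$ the bound trivialises, so we restrict to $\delta\in(0,1)$.

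The only substantive input is \cref{lem:lowboundexpect}. The main point to check is that both the expectation there and the variance $V_x$ are taken under the same law $P_{Y\mid E_x}$, and that $Z$ is measurable and integrable under it. Integrability follows from finiteness of $V_x$, and conditional scores are nonnegative, so no real obstacle is expected.
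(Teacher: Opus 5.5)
Your proof is correct and is essentially the paper's argument: apply Cantelli's inequality to $S(x\mid Y)$ under $Y\sim P_{Y\mid E_x}$ with deviation $\sqrt{\frac{1-\delta}{\delta}V_x}$, then lower-bound the mean by $(1-\eta)\,S(x)$ using \cref{lem:lowboundexpect}. Your separate treatment of the case $V_x=0$ and the restriction to $\delta\in(0,1)$ are small details that the paper leaves implicit.
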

The following lemma gives examples of causal models for which $V_x$ can be bounded so that \cref{lem:conditionallowbound} applies,
\begin{proposition}[{\bf Bounded additive noise model}]\label{prop:anmboundedvariance}
    For causal model $Y := f(X) + N$, $N\ind X$, and there exists a $B\geq 0$ such that $|f(x)|\leq B$ for all $x$,
    \begin{enumerate}
        \item If $N$ has a positive density whose logarithm is Lipschitz with constant $L_N$:
        \begin{equation}
            V_x \leq (2L_NB)^2.
        \end{equation}
        This includes the Cauchy, Laplace, Logistic and Student's $t$ distributions.
        \item If $N\sim \cN(0, \sigma^2)$ :
        \begin{equation}
            V_x \leq \frac{4B^2}{\sigma^4}(B^2 + \sigma^2) + \frac{2B^3}{\sigma^4}\sqrt{B^2 + \sigma^2} + \frac{B^4}{4\sigma^4}.
        \end{equation}
    \end{enumerate}
\end{proposition}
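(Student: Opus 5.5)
The plan is to write $S(x\mid Y)$ as a log-likelihood ratio of two densities of $Y$ via Bayes' rule, bound that ratio pointwise using $|f|\le B$, and then bound the variance by the second moment. Throughout, $E_x$ is an event on $X$ alone. Since $N\ind X$, conditioning on $E_x$ leaves the law of $N$ unchanged. Hence both $Y$ and $Y\mid E_x$ have densities that are mixtures of shifted noise densities:
\begin{equation*}
p_Y(y)=\bE_{P_X}\!\left[p_N(y-f(X))\right],\qquad p_{Y\mid E_x}(y)=\bE_{P_{X\mid E_x}}\!\left[p_N(y-f(X))\right].
\end{equation*}
Bayes' rule gives $P(E_x\mid Y=y)=P(E_x)\,p_{Y\mid E_x}(y)/p_Y(y)$. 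Therefore
\begin{equation*}
S(x\mid y)=S(x)+g(y),\qquad g(y):=\log p_Y(y)-\log p_{Y\mid E_x}(y),
\end{equation*}
and $V_x={\rm Var}_{P_{Y\mid E_x}}[g(Y)]\le \bE_{P_{Y\mid E_x}}[g(Y)^2]$. It therefore suffices to bound $|g(y)|$ pointwise and then integrate.

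\emph{Lipschitz case.} Since $\log p_N$ is $L_N$-Lipschitz and $|f|\le B$, for every $x'$ we have
\begin{equation*}
p_N(y)\,e^{-L_NB}\le p_N(y-f(x'))\le p_N(y)\,e^{L_NB}.
\end{equation*}
Both mixtures are averages of such terms, so both lie in $[p_N(y)e^{-L_NB},\,p_N(y)e^{L_NB}]$. Their ratio therefore lies in $[e^{-2L_NB},e^{2L_NB}]$, which gives $|g(y)|\le 2L_NB$ for all $y$. Consequently $V_x\le \bE[g^2]\le (2L_NB)^2$. The listed distributions (Cauchy, Laplace, logistic, Student's $t$) all have bounded derivative of the log-density, so they satisfy the hypothesis; I would note this with a one-line check each.

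\emph{Gaussian case.} This is the step needing care, because $\log p_N$ is not Lipschitz and the pointwise bound must grow with $|y|$. Using $p_N(y-f)/p_N(y)=\exp\!\big(yf/\sigma^2-f^2/(2\sigma^2)\big)$ with $|f|\le B$, each ratio lies in
\begin{equation*}
\left[\exp\!\big(-|y|B/\sigma^2-B^2/(2\sigma^2)\big),\ \exp\!\big(|y|B/\sigma^2\big)\right].
\end{equation*}
Both mixtures inherit this range, so $|g(y)|\le 2B|y|/\sigma^2+B^2/(2\sigma^2)$. Squaring and taking expectations under $P_{Y\mid E_x}$ yields
\begin{equation*}
V_x\le \frac{4B^2}{\sigma^4}\bE[Y^2\mid E_x]+\frac{2B^3}{\sigma^4}\bE[|Y|\mid E_x]+\frac{B^4}{4\sigma^4}.
\end{equation*}
Finally, $Y=f(X)+N$ with $N\ind X$ and $\bE N=0$ gives $\bE[Y^2\mid E_x]=\bE[f(X)^2\mid E_x]+\sigma^2\le B^2+\sigma^2$. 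Jensen's inequality then gives $\bE[|Y|\mid E_x]\le\sqrt{B^2+\sigma^2}$, which produces exactly the stated bound. The only point to verify carefully is that the pointwise bound on $g$ holds uniformly over the mixing distribution, which follows because a ratio of two averages of quantities lying in a common interval $[a p_N(y), b p_N(y)]$ lies in $[a/b, b/a]$.
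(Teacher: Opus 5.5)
Your proof is correct and follows essentially the same route as the paper. Both use the Bayes decomposition $S(x\mid y)=S(x)-\log\bigl(p_{Y\mid E_x}(y)/p_Y(y)\bigr)$, bound this log-ratio pointwise by controlling ratios of shifted noise densities and averaging over both mixing distributions, and then use $V_x\le\bE[g(Y)^2]$ with $\bE[Y^2\mid E_x]\le B^2+\sigma^2$ and Jensen/Cauchy--Schwarz in the Gaussian case; the only cosmetic difference is that you compare each shifted density to the unshifted reference $p_N(y)$ rather than comparing $p_N(y-f(a))$ and $p_N(y-f(b))$ pairwise, which gives identical constants.
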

In the case $N$ has a density that is log-Lipschitz, we actually derive a non-probabilistic bound on $S(x\mid y)$ that in many cases is stronger than that given by \cref{lem:conditionallowbound} (see proof of proposition 19 in the appendix for details). For brevity, we present only the bound above in the main text.

\subsection{Power guarantees}\label{subsec:power}
In the previous two subsections, we laid the foundations for providing power guarantees for our tests:
\begin{remark}[{\bf Detectable errors}]\label{rem:errors}
    Our conditional score tests, and their marginal score analogues, are sensitive to at least two kinds of errors in $\hat{\cG}$:
    \begin{enumerate}
    \item assigning to a descendant of a root cause either no parents, or parents that, in $\cG$, are not descendants of a root cause. In this case, the marginal score jump (effectiveness) and the conditional score (effectiveness and \cref{lem:monotonimpliesbound} or \cref{lem:boundconditionalnondesc}) are lower bounded by $\approx\lambda$;
    \item assigning to a descendant of a root cause parents that, in $\cG$, are among its descendants. In this case, the marginal score jump (effectiveness and \cref{lem:anticausalbound}) and the conditional score (effectiveness and \cref{lem:monotonimpliesbound} or \cref{lem:conditionallowbound}) are lower bounded by $(1 - \eta_{\max})\lambda$,
\end{enumerate}
\looseness=-1where letting $\eta_i$ be the contraction coefficient for channel $P_{X_i\mid \PA^{\cG}_i}$, then $\eta_{\max} := \max_{i\notin\cR} \eta_i$. As $\lambda\geq (1 - \eta_{\max})\lambda$, we will refer to $b := (1 - \eta_{\max})\lambda$ as the \emph{lowest lower bound} on the conditional scores and marginal score jumps that are misspecified in $\hat{\cG}$ due to the two kinds of errors just listed.
\end{remark}
To see how each test is sensitive to these errors, let $\pi$ be the fraction of non-root causes whose parents in $\hat{\cG}$ are incorrect in either of the two ways listed above. Informally, $\pi$ can be thought of as the fraction of $\hat{\cG}$ that is `detectably' incorrect. We have the following bounds on their p-values, where, to avoid writing out each bound twice, we write $p_{\tilde{T}}$ to denote the p-value for \emph{either} the conditional score test with test statistic $T$ or the marginal score test with test statistic $\hat{T}$:
\begin{proposition}[{\bf Upper bounds on p-values}]\label{prop:pvaluebounds}
    Subject to the conditions given in \cref{rem:errors} and $\pi > 0$,
    \begin{align}
         p_{\tilde{S}^{\hat{\cG}}_{\rm sum}} 
          &\leq e^{-\pi L b}\sum_{l=0}^{L-1}\frac{(\pi L b)^l}{l!}, \\
        p_{\tilde{S}_{\rm max}^{\hat{\cG}}} 
          &\leq 1 - (1 - e^{-b})^{L}, \\
        p_{\tilde{N}_t} 
          &\leq 
      \begin{cases}
        \sum_{k=\pi L}^{L} \binom{L}{k} e^{-kt}(1-e^{-t})^{L-k}
          & \text{if } b \ge t, \\
        1 & \text{otherwise}.
      \end{cases}
    \end{align}
\end{proposition}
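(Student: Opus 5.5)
The plan is a monotonicity argument. Each p-value formula from \cref{prop:fishertest,prop:tippetttest,prop:binomialtest,prop:marginaltests} is a non-increasing function of its test statistic. So it suffices to lower bound each statistic, conditional or marginal, using \cref{rem:errors}, and then substitute the lower bound into the p-value function.

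Write $\tilde s_i$ for either the conditional score $S(x_i\mid\pa_i^{\hat{\cG}})$ or the marginal jump $|S(x_i)-S(\pa_i^{\hat{\cG}})|_+$ of a non-root cause $i\notin\cR$. All of these quantities are non-negative. Let $I$ be the set of detectably misspecified non-root causes, with $|I|=\pi L\geq 1$ since $\pi>0$. By \cref{rem:errors}, in both error cases, $\tilde s_i\geq b$ for all $i\in I$, because $\lambda\geq(1-\eta_{\max})\lambda=b$.

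First, monotonicity of each p-value formula:
\begin{itemize}
\item The function $g_L(s)=e^{-s}\sum_{l=0}^{L-1}s^l/l!$ is the Erlang$(L,1)$ survival function, so it is non-increasing in $s$. Directly, $g_L'(s)=-e^{-s}s^{L-1}/(L-1)!\leq 0$.
\item The function $1-(1-e^{-s})^L$ is decreasing in $s$.
\item The binomial upper tail $\sum_{k=N}^{L}\binom{L}{k}e^{-kt}(1-e^{-t})^{L-k}$ is non-increasing in its lower summation index $N$.
\end{itemize}

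Next, the lower bounds on the statistics:
\begin{itemize}
\item For the sum, $\tilde S_{\rm sum}\geq\sum_{i\in I}\tilde s_i\geq \pi L b$, since the omitted terms are non-negative.
\item For the max, $\tilde S_{\max}\geq \tilde s_i\geq b$ for any $i\in I$, which exists because $\pi>0$.
\item For the count with $b\geq t$, every $i\in I$ has $\tilde s_i\geq b\geq t$, so $\tilde N_t\geq \pi L$.
\item When $b<t$ we have no information on the count. The trivial bound $p\leq 1$ holds because the binomial tail is a probability.
\end{itemize}
Substituting these lower bounds into the monotone p-value expressions gives the three stated inequalities, for both the conditional and the marginal versions simultaneously.

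The only delicate point is the meaning of ``subject to the conditions given in \cref{rem:errors}''. Several of the underlying bounds are approximate or probabilistic: \cref{lem:boundconditionalnondesc}, \cref{lem:conditionallowbound}, and score typicality. They also rely on either monotonicity or $S(y\mid E_x)=0$. I would state explicitly that we assume the deterministic bounds $\tilde s_i\geq b$ hold for all $i\in I$, with $b$ understood up to the slack terms from those lemmas, and then the argument above is exact. No other step should pose difficulty.
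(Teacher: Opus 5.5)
Your proposal is correct and matches the paper's proof, which substitutes the ``worst case'' (the $\pi L$ misspecified scores or jumps equal to $b$, all others equal to $0$) into each p-value function. That is exactly your lower-bound-plus-monotonicity argument, except that you make explicit the monotonicity of the Erlang survival function, the Tippett expression and the binomial tail, which the paper leaves implicit; your caveat that some of the bounds $\tilde s_i \geq b$ are only approximate or probabilistic likewise matches the paper's note that the conditional and marginal versions hold under different assumptions.
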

It is important to note that the bounds apply to the marginal score and conditional score test p-values under different conditions (see \cref{rem:errors}). Each of these bounds provides hints for when we expect one test to be preferable to the others. 
We plot them in \cref{sec:power_curves} of the appendix.
\looseness=-1While the bounds on both the first and second p-values decrease with increasing $b$, which is generally high when the outlier event is strong (because $\lambda$ is large), only the first decreases with increasing $\pi$. This makes sense as $S_{\rm max}^{\hat{\cG}}$ considers only the maximum score, so that even if there is only a single error in $\hat{\cG}$, so long as $b$ is large enough, the p-value will be small. In contrast, the first test may be preferable for more moderate outlier events when one expects $\hat{\cG}$ to have a larger proportion of errors. For the third p-value, so long as $b \geq t$, the bound decreases with increasing $\pi$ but is otherwise insensitive to $b$. However, the bound tends to decay more sharply with $\pi$ than for the first p-value, so long as $t$ is not chosen to be too much smaller than $b$.

\looseness=-1So far we have said little about the fourth test (\cref{prop:kstest}) as it is the only without a marginal score analogue, and the only for which the p-value is sensitive to more than increased conditional scores under the alternative. Nonetheless, we can give power guarantees for the test. Consider the alternative
\begin{hypothesis}{$H_1$}
$S(X_i\mid\PA_i^{\hat{\cG}})$, for $i\notin\cR$, are iid with CDF
\begin{equation}
    F(s) := \pi G(s) + (1 - \pi)(1-e^{-s}),
\end{equation}
\end{hypothesis}
where $G(s)$ is a CDF such that $G(t) = 0$.
This alternative models the case that a proportion $\pi$ of the conditional outlier scores are drawn from an unknown distribution, but are bounded away from zero by a value $t$, while the remaining are standard exponential. This almost matches our case of interest, for $t = b$, but with the additional simplifying assumption that the scores are independent under the alternative. We then have,
\begin{theorem}[{\bf Power lower bound on KS test}]
\label{thm:kstestpower}
Subject to the conditions given in \cref{rem:errors}
    \begin{equation}
        P_{H_1}(D_L \geq D_{L,\alpha}^{\rm crit}) \geq 1 - 2e^{-2L|\pi (1-e^{-b}) - D^{\rm crit}_{L,\alpha}|^2_+},
    \end{equation}
where $D^{\rm crit}_{L,\alpha}$ is the critical value of the KS-statistic for sample size $L$ and significance threshold $\alpha$, and $t = b$.
\end{theorem}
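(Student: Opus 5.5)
The plan is to prove the bound with the DKW inequality, after a deterministic lower bound on $\sup_s|F(s)-F_{\rm Exp}(s)|$ and a reverse triangle inequality.

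\emph{Population gap at $s=t=b$.} Under $H_1$ with $G(b)=0$,
\[
F(b) = (1-\pi)(1-e^{-b}),
\]
so
\[
F_{\rm Exp}(b) - F(b) = \pi(1-e^{-b}).
\]
Hence $\sup_s|F(s)-F_{\rm Exp}(s)| \geq \pi(1-e^{-b})$. The conditions of \cref{rem:errors} only serve to motivate $t=b$ as a valid lower threshold for the misspecified scores. Formally, all we need is $G(b)=0$ and that the scores are iid with CDF $F$.

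\emph{Reverse triangle inequality.} For every $s$,
\[
|F_L(s)-F_{\rm Exp}(s)| \geq |F(s)-F_{\rm Exp}(s)| - |F_L(s)-F(s)|.
\]
Evaluating at $s=b$ gives
\[
D_L \geq \pi(1-e^{-b}) - \sup_s|F_L(s)-F(s)|.
\]
Therefore the event $\{D_L < D^{\rm crit}_{L,\alpha}\}$ is contained in
\[
\Bigl\{\sup_s|F_L(s)-F(s)| > \pi(1-e^{-b}) - D^{\rm crit}_{L,\alpha}\Bigr\}.
\]

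\emph{Concentration step.} Under $H_1$, $F_L$ is the empirical CDF of $L$ iid draws from $F$. The Dvoretzky--Kiefer--Wolfowitz inequality with Massart's constant gives, for $\epsilon>0$,
\[
P\Bigl(\sup_s|F_L(s)-F(s)|>\epsilon\Bigr)\leq 2e^{-2L\epsilon^2}.
\]
If $\epsilon := \pi(1-e^{-b}) - D^{\rm crit}_{L,\alpha} > 0$, this yields
\[
P_{H_1}(D_L \geq D^{\rm crit}_{L,\alpha}) \geq 1 - 2e^{-2L\epsilon^2},
\]
which is the claim. Continuity of $F$ is not needed, since DKW holds for arbitrary distributions.

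\emph{Degenerate case.} If $\epsilon\leq 0$, then $|\epsilon|_+^2=0$ and the right-hand side equals $1-2=-1$. The bound then holds trivially. This accounts for the $|\cdot|_+$ in the statement.

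\emph{Main obstacle.} The only delicate point is bookkeeping, not substance. One must check that the test rejects on $D_L\geq D^{\rm crit}$, and that strict versus non-strict inequalities are handled consistently with the DKW event. Using the version with a strict inequality ($>$) in DKW handles this cleanly.
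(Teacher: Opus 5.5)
Your proof is correct and follows essentially the same route as the paper: a population gap of $\pi(1-e^{-b})$ from $G(b)=0$, then the reverse triangle inequality, then the DKW inequality with Massart's constant, with the $|\cdot|_+$ absorbing the case $\pi(1-e^{-b}) \leq D^{\rm crit}_{L,\alpha}$. The only cosmetic difference is that you evaluate the gap directly at $s=b$, whereas the paper takes $\sup_{s<b}$ and uses continuity of $F_{\rm Exp}$ to reach the same value $\pi F_{\rm Exp}(b)$.
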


Beyond bounds on the power or p-values for each test, it can be instructive to consider for which alternatives each test is optimal, i.e. uniformly most powerful (UMP). In \cref{app:ump} of the appendix, we slightly adapt a result from \citet{HeardRubinDelanchy2018} to show in \cref{thm:optimalfisher} that the $S^{\hat{\cG}}_{\rm sum}$ test is UMP against an alternative wherein the conditional outlier scores remain independent, and exponentially distributed, but with common rate parameter $\nu < 1$. In other words, when each outlier score is stochastically larger than \dominik{maybe add 'its size' for better readibility? I got stuck at this sentence} under the null. This is consistent with our observation above that the test is preferable when there is a large proportion of errors in $\hat{\cG}$ even when the outlier is only moderately strong. In contrast, \cref{thm:optimaltippett} shows that the $S^{\hat{\cG}}_{\rm max}$ test is UMP against an alternative where a single score is drawn from a distribution which stochastically dominates the standard exponential, and the remaining are drawn from a standard exponential distribution, truncated to be less extreme than that first score. In this sense, the test is optimal when there is a single, dominating, large score. 

For the conditional score binomial test, the alternative is more directly comparable to our setting,
\begin{theorem}[{\bf Conditional score binomial test is uniformly most powerful}]\label{thm:binomialoptimal}
    The conditional score binomial test in \cref{prop:binomialtest} is uniformly most powerful against the alternative
    \begin{hypothesis}{$H^{\rm Exp}_1$}
        $S(X_i\mid \PA_i^{\hat{\cG}}),$\hspace{0.4em}for $i \notin\cR$, are iid with distribution $f(s) =\pi e^{-(s-t)}\mathbb{I}[s\geq t] + (1-\pi)e^{-s}$.
    \end{hypothesis}
\end{theorem}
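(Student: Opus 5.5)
Under the null $H_0$, \cref{lem:scores_exponential_under_null} makes the scores $S_i := S(X_i\mid\PA_i^{\hat{\cG}})$ for $i\notin\cR$ iid standard exponential, with density $f_0(s) = e^{-s}$. Under $H^{\rm Exp}_1$ they are iid with density $f_\pi(s) = \pi e^{-(s-t)}\bI[s\geq t] + (1-\pi)e^{-s}$, indexed by $\pi\in(0,1]$, and $\pi = 0$ recovers the null. The plan is to apply the Neyman--Pearson lemma to each simple alternative $\pi$ and show that the resulting most powerful test is the same for every $\pi>0$. The key observation is that $f_\pi$ depends on $s$ only through $e^{-s}$ and the indicator $\bI[s\geq t]$.

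First compute the likelihood ratio of a single score:
\begin{equation}
\frac{f_\pi(s)}{f_0(s)} = \pi e^{t}\,\bI[s\geq t] + (1-\pi).
\end{equation}
This takes the value $1-\pi+\pi e^t > 1$ when $s\geq t$ and $1-\pi$ when $s<t$. Multiplying over the $L$ non-root-cause scores gives the joint likelihood ratio
\begin{equation}
\Lambda_\pi = (1-\pi+\pi e^t)^{N_t}(1-\pi)^{L-N_t}.
\end{equation}
This is a function of $N_t$ alone, and it is strictly increasing in $N_t$ for every $\pi\in(0,1]$. When $\pi = 1$ it is $0$ unless $N_t = L$, which is still monotone non-decreasing. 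Hence $N_t$ is sufficient, and the family $\{f_\pi^{\otimes L}\}$ has monotone likelihood ratio in $N_t$.

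Next apply Neyman--Pearson. For each $\pi>0$ the most powerful level-$\alpha$ test rejects for large $\Lambda_\pi$, which is equivalent to rejecting for large $N_t$. Since this rejection region does not depend on $\pi$, the test is UMP for $H_0$ against the composite $H^{\rm Exp}_1$ (Karlin--Rubin). Under the null, $N_t\sim{\rm Bin}(L, e^{-t})$, so the threshold test on $N_t$ is exactly the test that rejects when the upper-tail p-value $p_{N_t}$ of \cref{prop:binomialtest} is at most $\alpha$.

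The main obstacle is discreteness. Because $N_t$ is integer-valued, the test $\bI[p_{N_t}\leq\alpha]$ generally has size strictly below $\alpha$, and exact Neyman--Pearson optimality requires randomisation at the boundary value of $N_t$. I would handle this in one of two ways. The first is to state UMP among tests of size $P_{H_0}(N_t\geq k_\alpha)$, taking $\alpha$ to be an attainable level. The second is to note that the randomised version of the binomial test is UMP at every $\alpha$, and that the non-randomised test coincides with it whenever $\alpha$ is attainable. I would adopt the attainable-$\alpha$ convention, which is standard for discrete UMP statements.
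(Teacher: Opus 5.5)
Your proposal is correct and follows the paper's proof. Both compute the per-score likelihood ratio $\pi e^{t}\bI[s\geq t]+(1-\pi)$, note that the joint ratio $(1-\pi+\pi e^{t})^{N_t}(1-\pi)^{L-N_t}$ is increasing in $N_t$ for every $\pi$, and conclude via Neyman--Pearson/Karlin--Rubin. Your explicit handling of the discreteness of $N_t$ (restricting to attainable $\alpha$, or randomising at the boundary) and of the $\pi=1$ edge case is a real tightening that the paper's proof skips, because at a non-attainable $\alpha$ the non-randomised test has size strictly below $\alpha$ and is strictly less powerful than its randomised counterpart.
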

\looseness=-1$H_1^{\rm Exp}$ is a special case of $H_1$ above where the distribution of $G(s)$ is taken to be the standard exponential distribution conditional on $S \geq t$. This suggests that the $N_t$ test is well suited to our setting so long as $t$ is close to $b$ (but doesn't exceed it). This matches the conclusions drawn from \cref{prop:pvaluebounds}.

\subsection{Falsifying causal graphs when the root cause(s) are unknown}\label{subsec:unknownroots}
So far we have presented our tests under the assumption that the root cause(s) of the outlier event are known. Now we show that this assumption may be dropped in the setting one only knows (an upper bound on) the \emph{number} of root causes.

Note that one cannot simply infer the root cause(s) with the outlier sample $(x_1,\dots, x_n)$, using a root cause analysis algorithm (e.g. one of the approaches in \citet{orchard2025root,schkoda2026root}), 
and then apply one of our tests with the inferred root cause(s). This introduces selection bias, so the remaining conditional outlier scores are no longer independent, and our tests no longer valid.\will{there is more to be said here as I don't think you could combine a p-value even from an independent source of data as we don't have p-values which are valid under the joint null - though I find it very confusing}

Suppose instead we know an upper bound $k \geq |\cR|$ on the number of root causes. We propose the following approach to test $H_0$. For every $\cC\subseteq \{1,\dots, n\}$ such that $|\cC| = k$, let $T$ denote any of the test statistics we have proposed and $p_T^{\cC}$ be the corresponding "p-value" after excluding the conditional outlier scores (or marginal score jumps) for variables with index in $\cC$. For example, $p_{S^{\hat{\cG}}_{\rm sum}}^{\cC}$ is the p-value for the statistic $\sum_{i\notin\cC}S(X_i\mid\PA_i^{\hat{\cG}})$ given by the (appropriately adjusted) equation in \cref{prop:fishertest}. We then have,
\begin{proposition}[{\bf Maximum leave-k-out p-value}]\label{prop:lkopvalue}
    For statistic T, $H_0$ can be rejected with p-value
    \begin{equation}
        p_T^{k} = \max_{\substack{\cC\subseteq\{1,\dots,n\} \\ \text{s.t. }|\cC|= k}} p_{T}^{\cC}.
    \end{equation}
\end{proposition}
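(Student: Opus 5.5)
The plan is to use the fact that the maximum over all size-$k$ subsets dominates the value at any particular subset, and in particular at one containing the true root causes. Fix the true root cause set $\cR$ with $|\cR|\leq k\leq n$. First, I will pick a deterministic (data-independent) superset $\cC^\ast\supseteq\cR$ with $|\cC^\ast|=k$. For definiteness, take $\cR$ together with the $k-|\cR|$ smallest indices not in $\cR$. Under $H_0$ we have $\hat{\cG}=\cG$. By \cref{lem:scores_exponential_under_null}, the conditional scores $S(X_i\mid\PA_i^{\hat{\cG}})$ for $i\notin\cR$ are then independent ${\rm Exp}(1)$, so the subcollection indexed by $i\notin\cC^\ast$ is a fixed collection of $n-k$ independent standard exponentials.

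The second step is to check that $p_T^{\cC^\ast}$ is a valid p-value. The arguments of \cref{prop:fishertest,prop:tippetttest,prop:binomialtest,prop:kstest} depend only on these scores being i.i.d.\ ${\rm Exp}(1)$ over a fixed index set, so they go through with $L$ replaced by $n-k$. This gives $P_{H_0}(p_T^{\cC^\ast}\leq\alpha)\leq\alpha$. For the marginal score statistics the same holds subject to score typicality and the polytree assumption. There, \cref{prop:marginaltests} shows that the marginal score jumps are stochastically dominated by the conditional scores on the same fixed index set, and that argument does not use which set is excluded, only that it contains $\cR$.

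Third, since $\cC^\ast$ is among the sets in the maximum, $p_T^k\geq p_T^{\cC^\ast}$ pointwise. Hence
\begin{equation*}
P_{H_0}(p_T^k\leq\alpha)\leq P_{H_0}(p_T^{\cC^\ast}\leq\alpha)\leq\alpha,
\end{equation*}
which is the claim.

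The main subtlety, and the one I expect to need the most care, is that the $p_T^{\cC}$ for sets $\cC\not\supseteq\cR$ are not valid p-values, because they include root-cause scores with no null distribution. This is harmless, since the maximum can only raise the value, but it must be argued explicitly. Relatedly, $\cC^\ast$ has to be chosen without looking at the data, or simply shown to exist as a fixed set. Otherwise we would reintroduce exactly the selection bias warned about before the proposition. Because $\cR$ is a fixed unknown set, the existence of $\cC^\ast$ is deterministic, so the inequality holds without selection issues.
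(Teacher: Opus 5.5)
Your proposal is correct and follows essentially the same argument as the paper. Both fix a data-independent $\cC^\ast\supseteq\cR$ with $|\cC^\ast|=k$, observe that $p_T^{\cC^\ast}$ is a valid p-value under $H_0$, and use $p_T^k\geq p_T^{\cC^\ast}$ pointwise to conclude $P_{H_0}(p_T^k\leq\alpha)\leq\alpha$. The only difference is in how validity is established: the paper routes it through the auxiliary hypothesis $H_0^{\cC^\ast}$ implied by $H_0$, whereas you verify it directly from \cref{lem:scores_exponential_under_null} and \cref{prop:marginaltests}.
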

Intuitively: if $p_T^k \leq\alpha$ then $p_{T}^{\cC}\leq\alpha$ for all $\cC$, including for all $\cC\supseteq\cR$. However, for $\cC\supseteq\cR$, $p_{T}^{\cC}$ is a valid p-value for $H_0$, so we may reject $H_0$ at level $\alpha$. \will{I am pretty sure this procedure is, in a certain sense, optimal among valid p-value combination techniques, suitable in this setting when, $|\cR| = k$. This is because if the top-$k$ highest scoring outliers correspond to the $k$ root causes, then $p_T^{k} = p_T$: i.e. the maximum leave-$k$-out p-value loses no power compared to the test when we know the root causes. However, we know from \citet{orchard2025root} that the maximum likelihood set of root causes, if it is known there are $k$ of them, is precisely the top-$k$ highest scoring outliers. So this procedure maximises the likelihood of losing no power.} 

\begin{figure*}[t]
    \centering
    \includegraphics[width=.7\textwidth]{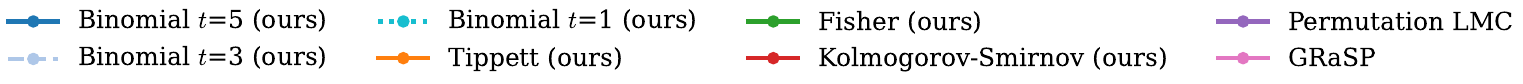}
    \begin{subfigure}[b]{\columnwidth}
        \centering
        \includegraphics[width=0.7\columnwidth]{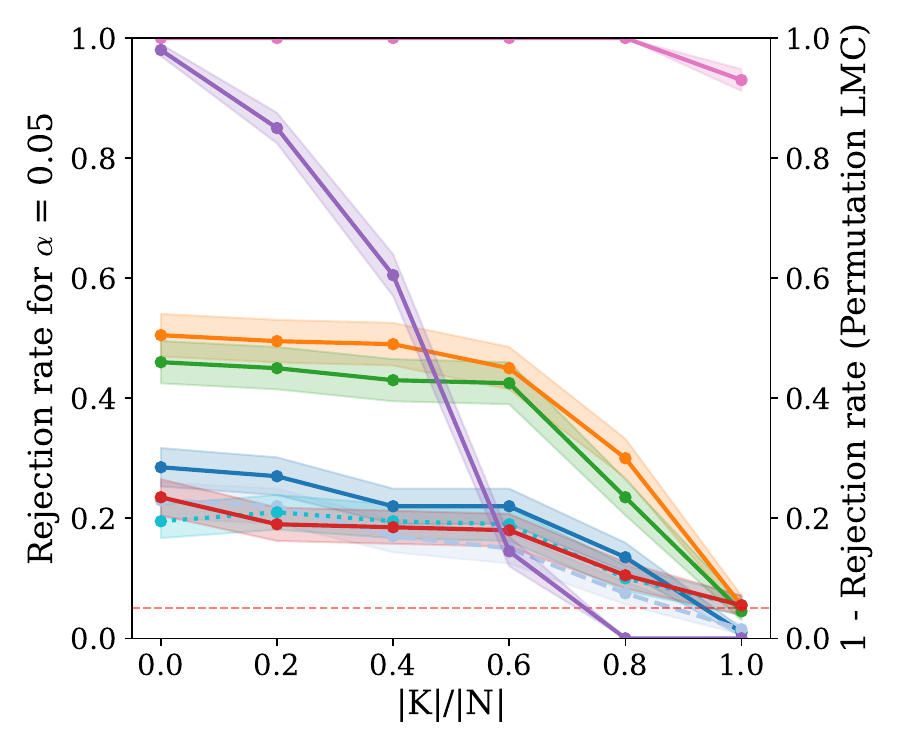}
        \caption{Conditional scores.}
        \label{fig:power_analysis_3_conditional}
    \end{subfigure}
    \begin{subfigure}[b]{\columnwidth}
        \centering
        \includegraphics[width=0.7\columnwidth]{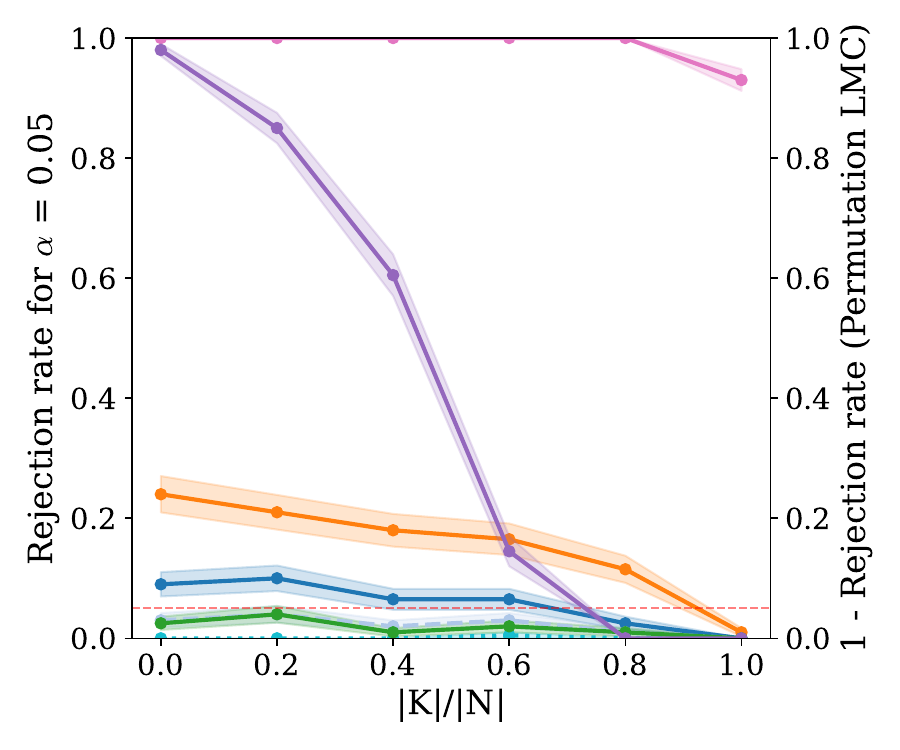}
        \caption{Marginal scores.}
        \label{fig:power_analysis_3_margina}
    \end{subfigure}
    
    \caption{Average rejection rate for synthetic data and increasing fraction of correctly connected nodes $|K|/|N|$.}
    \label{fig:power_analysis}
\end{figure*}

\section{Experiments}\label{sec:experiments}
In this section we compare our proposed tests to two baselines on both synthetic and real-world data. Our first baseline makes use of the GRaSP \citep{lam2022greedy} algorithm for causal discovery: we reject the given causal graph if it does not fall into the same Markov equivalence class as the graph empirically estimated by GRaSP. The second is the test proposed by \citet{eulig2025toward}. In brief, they test the hypothesis that the given graph is \emph{not} a better match of the conditional independencies inferred in the data than a permutation-based random baseline. As such, their test is rejected when the given graph is \emph{better than random}. Note that a graph may be better than random, but not the true causal graph, and so we expect their test to reject graphs more readily than ours do not reject them.

\looseness=-1For full experimental details, and further experiments, see \cref{sec:additional_experiments}. In brief, to generate synthetic data we sample a random DAG of 20 nodes, and assign to each node a randomly initiated linear model of its parents and a Gaussian noise variable, and draw 1000 samples. A root cause is chosen uniformly at random from the nodes and an outlier injected in its noise variable and propagated through the graph for a single additional sample. We adopt the approach of \citet{eulig2025toward} for sampling `expert-provided' graphs to falsify given the ground truth. In brief, a subset $K \subseteq N := \{X_1,\dots,X_n\}$ of nodes are selected at random. All edges between nodes in $K$ are preserved, while edges between the rest of the nodes are randomly resampled, so that a fraction $|K|/|N|$ of the graph is known.
\looseness=-1In \cref{fig:power_analysis}, we investigate how the rejection rate of our tests varies with $|K|/|N|$. 
As expected, our tests reject fewer graphs the higher $|K|/|N|$, with the first three of our tests having consistently higher rejection rates than the KS-test. Our marginal tests are less powerful than their conditional counterparts as expected. The GRaSP approach is far too sensitive to discrepancies and rejects almost all graphs at every $|K|/|N|$, including the ground truth. The permutation test displays the expected behaviour. In \Cref{sec:additional_experiments} we present several variations of this experiment, e.g., with larger injected outliers we see improvements in the power, and in the setting that the root causes are unknown we see only a slight drop in power.

    


\looseness=-1For the real-world data experiments we consider two datasets. First we used the PetShop dataset from \citet{hardt24a}. The dataset contains performance metrics for a microservice-based application before and after the introduction of outlier events. We test whether the so-called `service dependency map' of the application, with its edges reversed, is a good proxy for the causal graph of certain metrics, as has been suggested in the literature \citep{root_cause_analysis,li2022causal,Gan2021}. However, \citet{eulig2025toward}
has questioned this approach. Second we take the light-measurements from the Causal Chambers dataset \citep{gamella2025causal} for which the given graph is expected to be an accurate description of the causal structure. Results are displayed for both in \cref{fig:petshop_marginal}. We see the permutation test rejects the hypothesis that the graph is not better than random in all scenarios. In contrast, our tests reject the service graph in PetShop at a higher rate than under the null, especially in the high traffic setting, where the graph is rejected for almost all tested anomalies. This could suggest that while the service graph is better than random, it is nonetheless not the true causal graph. For Causal Chambers, none of our tests rejects at a higher rate than the significance level, suggesting the given causal graph is likely quite accurate. The GRaSP test always rejects the graph in every setting and for both datasets.

\begin{figure}[t]
    \centering
    \includegraphics[width=0.7\columnwidth, trim={0 1em 0 0},clip]{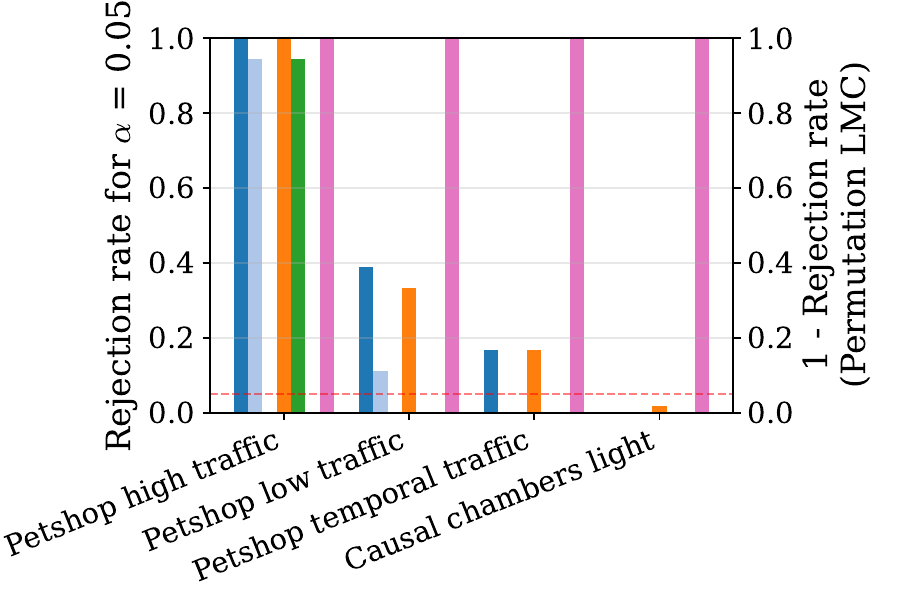}
    \caption{Average rejection rate of our marginal tests on cloud computing and Causal Chambers data.}
    \label{fig:petshop_marginal}
\end{figure}

{\bf Code:}\,\url{https://github.com/worchard/rca-falsification}
\section{Conclusion}\label{sec:conclusion}
\looseness=-1In this work, we demonstrate how outlier events can be used to falsify causal graphs.
Since rare events propagate along causal paths and rarely occur on their own, they are informative about causal structure, analogously to distribution shifts or interventions.
We have shown how to turn these theoretical insights from the root cause analysis literature into a statistical test that rejects false causal graphs.
We proved that this test has false positive control and power against certain classes of incorrect graphs even with only a single outlier sample.
We have experimentally validated these findings on synthetic and real-world data and have demonstrated that the power of our test is on par with other baselines while controlling for false positives.

\begin{acknowledgements} 
    W.R. Orchard would like to thank Cancer Research UK and Peterhouse, Cambridge for supporting this research. P. M. Faller was supported by a doctoral scholarship of the Studienstiftung des deutschen Volkes (German Academic Scholarship Foundation).
\end{acknowledgements}

\bibliography{bibliography}
\newpage

\onecolumn

\title{Falsifying Causal Graphs With Outlier Events\\(Supplementary Material)}
\maketitle
\begingroup
\renewcommand{\thefootnote}{*}
\footnotetext{Authors contributed equally.}
\endgroup

\appendix
\section{Proofs}\label{app:proofs}
\subsection{Proof of lemma~\ref{lem:scores_exponential_under_null}}
See proof of lemma 3.7 in \citet{orchard2025root}.

\subsection{Proof of proposition~\ref{prop:fishertest}}
By \cref{lem:scores_exponential_under_null}, under the null, $S^{\hat{\cG}}_{\rm sum}$ is the sum of $L$ independent, standard exponential random variables. The sum of $L$ independent, standard exponential random variables is Erlang distributed with CDF \citep{Ibe2013},
\begin{equation}
    F(x) = 1 - e^{-x}\sum_{l = 0}^{L - 1} \frac{x^l}{l!}.
\end{equation}
We therefore have,
\begin{equation}
    P(S^{\hat{\cG}}_{\rm sum} \geq s) = 1 - F(s) = e^{-s}\sum_{l = 0}^{L - 1} \frac{s^l}{l!},
\end{equation}
so that $1 - F(S^{\hat{\cG}}_{\rm sum})$ is a p-value for $H_0$.

\subsection{Proof of proposition~\ref{prop:tippetttest}}
Under the null $\{S(X_i\mid \PA_i^{\hat{\cG}})\}_{i\notin\cR}$ are independent, standard exponential so that, for each $i\notin\cR$, $P(S(X_i\mid \PA_i^{\hat{\cG}}) < s) = 1 - e^{-s}$, therefore,
\begin{equation}
    P(S_{\rm max}^{\hat{\cG}} < s) = P\left(S(X_i\mid\PA_i^{\hat{\cG}}) < s \:\:\text{for all }\: i\notin\cR \right) = \prod_{i\notin\cR} P(S(X_i\mid \PA_i^{\hat{\cG}}) < s) = (1 - e^{-s})^L.
\end{equation}
As such, $P(S_{\rm max}^{\hat{\cG}} \geq s) = 1 - (1 - e^{-s})^L$, and we have the result. 

\subsection{Proof of proposition \ref{prop:binomialtest}}
Proof given in main text immediately preceding the proposition.

\subsection{Proof of lemma \ref{lem:score_typical_pac}}
See proof of lemma 3.4 in \citet{orchard2025root}.

\subsection{Proof of proposition \ref{prop:marginaltests}}
Subject to score typicality we have the following,
\begin{equation}
    \hat{s}^{\hat{\cG}}_{\rm sum} := \sum_{i\notin\cR}|S(x_i) - S(\pa_i^{\hat{\cG}})|_+ \leq \sum_{i\notin\cR}S(x_i\mid\pa_i^{\hat{\cG}}) = s^{\cG}_{\rm sum},
\end{equation}
\begin{equation}
    \hat{s}_{\rm max}^{\hat{\cG}} := \max_{i\notin\cR}|S(x_i) - S(\pa_i^{\hat{\cG}})|_+ \leq \max_{i\notin\cR} S(x_i\mid\pa_i^{\hat{\cG}}) = s_{\rm max}^{\hat{\cG}},
\end{equation}
\begin{equation}
    \hat{n}_t := \sum_{i\notin\cR} \mathbb{I}[|S(x_i) - S(\pa_i^{\hat{\cG}})|_+ \geq t] \leq \sum_{i\notin\cR} \mathbb{I}[S(x_i\mid\pa_i^{\hat{\cG}}) \geq t] = n_t,
\end{equation}
so that in each case, the marginal score test statistic is stochastically dominated by the corresponding conditional score test statistic. Substituting the conditional score test statistics for each of the marginal ones, the corresponding outputs are therefore upper bounds for the p-values given in \crefrange{prop:fishertest}{prop:binomialtest}, and therefore are valid p-values for $H_0$.

\subsection{Proof of proposition \ref{prop:boundanm}}
We need first to introduce some extra concepts from information theory. In what follows, the exposition largely follows \citet{PolyanskiyWu2017}, however, the original references for the results quoted therein will be given here. First, contraction coefficients can be defined with respect to divergences other than the KL-divergence. For example, defining the total variation for distributions $P$ and $Q$ on the same measurable space, and events $E$ in the corresponding sigma-algebra,
\begin{equation}
    d_{\rm TV}(P, Q) := \sup_{E}|P(E) - Q(E)|,
\end{equation}
the contraction coefficient $\eta_{\rm TV}$ is given by simply replacing $D$ with $d_{\rm TV}$ in \cref{eq:contractioncoeff}. Second is that while in \cref{eq:contractioncoeff} we have given the so-called "input-dependent" contraction coefficient, more generally channel contraction coefficients are given without restriction to a certain input:
\begin{equation}
    \eta(P_{Y\mid X}) := \sup_{Q}\: \eta(P_{Y\mid X}, Q).
\end{equation}

With those additional preliminaries in place, we may proceed.
From theorem 1 in \citet{PolyanskiyWu2017} we have,
\begin{equation}
    \eta_{\rm KL}(P_{Y\mid X}) \leq \eta_{\rm TV}(P_{Y\mid X}),
\end{equation}
so that a general approach for upper bounding KL contraction coefficients is by inspecting their TV counterparts. To this end, \citet{Dobrushin1956} has shown that,
\begin{equation}
    \eta_{\rm TV}(P_{Y\mid X}) = \sup_{x, x'} d_{\rm TV}(P_{Y\mid X = x}, P_{Y\mid X = x'}).
\end{equation}
Noting that in our case, $P(Y \mid X = x) = \cN(f(x), \sigma^2)$, we use the fact that for $Q = \cN(\mu_1, \sigma^2)$ and $R = \cN(\mu_2, \sigma^2)$, $d_{\rm TV}(Q, R) = 2\Phi(|\mu_1 - \mu_2|/2\sigma) - 1$:
\begin{equation}
    \sup_{x, x'} d_{\rm TV}(P_{Y\mid X = x}, P_{Y\mid X = x'}) = \sup_{x, x'} \:2\Phi(|f(x) - f(x')|/2\sigma) - 1 \leq 2\Phi(B/\sigma) - 1,
\end{equation}
where we have used the fact that $f$ is bounded, and that $\Phi$ is strictly monotonically increasing. Finally, noting that $\eta_{\rm KL}(P_{Y\mid X}, Q) \leq \eta_{\rm KL}(P_{Y\mid X}) \leq \eta_{\rm TV}(P_{Y\mid X})$, we have the result.

\subsection{Proof of lemma \ref{lem:marginalscoreKL}}
Noting that for any event $E$ with $P_X(E) > 0$, we have $\frac{dP_{X\mid E}}{dP_X}(x) = \frac{\mathbf{1}_E(x)}{P_{X}(E)}$, $P_X\text{-a.e.}$. We then have from the definition of the KL-divergence (\cref{eq:kldivergence}) and the definition of marginal IT scores: 
\begin{equation}\label{eq:klscore}
    \KL{P_{X\mid E_x}}{P_X} = \int{\log\frac{\mathbf{1}_{E_x}(x')}{P_{X}(E_x)}dP_{X\mid E_x}(x')} = \log\frac{1}{P_X(E_x)} = S_X^{\tau}(x).
\end{equation}

\subsection{Proof of lemma \ref{lem:marginaldecay}}
From the SPDI in \cref{eq:sdpi} with inputs $P_{X\mid E_x}$ and $P_X$ and channel $P_{Y\mid X}$ we have
\begin{equation*}
    \eta\KL{P_{X\mid E_x}}{P_X} \geq \KL{\chan\circ P_{X\mid E_x}}{\chan\circ P_X},
\end{equation*}
so that, applying \cref{lem:marginalscoreKL} along with $P_Y = \chan\circ P_X$ and $P_{Y\mid E_x} = \chan\circ P_{X\mid E_x}$ we have,
\begin{equation}\label{eq:SxvsDy}
    \eta S_{X}^{\tau}(x)\geq \KL{P_{Y\mid E_x}}{P_Y}.
\end{equation}

For arbitrary distributions $P \ll Q$ and event $A$ such that $Q(A) > 0$, consider the binary channel defined by $Y = \mathbf{1}_A(X)$, then by the data processing inequality 
\begin{equation}
    \KL{P}{Q} \geq \KL{\text{Bern}(P(A))}{\text{Bern}(Q(A))} = P(A)\log{\frac{P(A)}{Q(A)}} + (1-P(A))\log{\frac{1-P(A)}{1-Q(A)}}.
\end{equation}
Taking the outlier event for $E_y := \{\tilde{\tau}(Y)\geq \tilde{\tau}(y)\}$, assume $P_Y(E_y) > 0$, we therefore have,
\begin{equation}\label{eq:binarybound}
\begin{aligned}
    &\KL{P_{Y\mid E_x}}{P_Y} \\
    &\geq P_{Y\mid E_x}(E_y)\log{\frac{P_{Y\mid E_x}(E_y)}{P_Y(E_y)}} + (1-P_{Y\mid E_x}(E_y))\log{\frac{1 - P_{Y\mid E_x}(E_y)}{1-P_Y(E_y)}} \\
    &= e^{-S_{Y\mid E_x}^{\tilde{\tau}}(y)}\log{\frac{e^{-S_{Y\mid E_x}^{\tilde{\tau}}(y)}}{e^{-S_{Y}^{\tilde{\tau}}(y)}}} + (1 - e^{-S_{Y\mid E_x}^{\tilde{\tau}}(y)})\log{\frac{1 - e^{-S_{Y\mid E_x}^{\tilde{\tau}}(y)}}{1 - e^{-S_{Y}^{\tilde{\tau}}(y)}}} \\
    &= e^{-S_{Y\mid E_x}^{\tilde{\tau}}(y)}(S_Y^{\tilde{\tau}}(y) - S_{Y\mid E_x}^{\tilde{\tau}}(y)) + (1 - e^{-S_{Y\mid E_x}^{\tilde{\tau}}(y)})\log{\frac{1 - e^{-S_{Y\mid E_x}^{\tilde{\tau}}(y)}}{1 - e^{-S_{Y}^{\tilde{\tau}}(y)}}}.
\end{aligned}
\end{equation}
Taking $S_{Y\mid E_x}^{\tilde{\tau}}(y)= 0$, the right hand side of \cref{eq:binarybound} is $ S_Y^{\tilde{\tau}}(y)$, and so finally, by combining \cref{eq:SxvsDy} with \cref{eq:binarybound}, we have
\begin{equation}\label{eq:SxvsSy}
    \eta S_X^{\tau}(x)\geq S_Y^{\tilde{\tau}}(y).
\end{equation}

\subsection{Proof of lemma \ref{lem:monotonimpliesbound}}
\begin{align}
    P(\tau(X)\geq \tau(x)\mid Y = y) &= P(S(X)\geq S(x)\mid S(Y) = S(y)) \\
    &\leq P(S(X)\geq S(x)\mid S(Y)\geq S(y))\\
    &\leq \frac{P(S(X)\geq S(x))}{P(S(Y)\geq S(y))},
\end{align}
where the equality follows from injectivity of $S_Y$ and the definition of IT outlier scores, the first inequality from the monotonicity property, and the final inequality because $P(B\mid A)\leq P(B)/P(A)$ for any two events $A, B$. Finally, taking the negative log-transform of each side, and using the definition of the conditional and marginal IT outlier scores, we reach the result.

\subsection{Proof of proposition \ref{prop:anmlogconcave}}
In order to understand this proof, we need to introduce some terminology from the theory of `total positivity' (see \citep{karlin1968total}). A function $h:\R\times\R\to\R$ is called totally positive of order two (TP$_2$) if for $x_1\leq x_2$ and $y_1\leq y_2$,
\begin{equation}\label{eq:tp2}
    h(x_2,y_1)h(x_1,y_2)\leq h(x_1,y_1)h(x_2,y_2).
\end{equation}
It is a fundamental result (see, for example, A.10 of \citep{Marshall2011}) that the function
\begin{equation}
    h(x, y) = g(y - x),
\end{equation}
is TP$_2$ if and only if $g$ is non-negative and log-concave on $\R$.

In our case, we have that
\begin{equation}
    p(y\mid x) = p_N(y - f(x)),
\end{equation}
where $p_N$ is log-concave by assumption. As such, \cref{eq:tp2} applies applies to $h(u, y) := p_N(y - u)$ for all $u_1, \leq u_2$ and $y_1 \leq y_2$. Furthermore, as $f$ is non-decreasing, we have $x_1\leq x_2\implies f(x_1)\leq f(x_2)$, and so taking $u_i := f(x_i)$, we have that $h(f(x), y) = p(y\mid x)$ is TP$_2$ as a function of $x$ and $y$. 

Substituting $p(y\mid x)$ into \cref{eq:tp2} and multiplying both sides by $p(x_1)p(x_2)$, we find that the joint density $p(x, y)$ satisfies the inequality too and is therefore TP$_2$. It is relatively straightforward to show (see theorem 5.2.19 of \citep{Nelsen2006} and note that they refer to TP$_2$ in this context as PLR) that if $p(x, y)$ is TP$_2$ then $P(X\geq x\mid Y = y)$ is a non-decreasing function of $y$ for all $x$. As such, by taking the expectation of $P(X\geq x\mid Y = y)$ with respect to $P(Y\mid Y\geq y)$ this implies
\begin{equation}\label{eq:stochasticincrease}
    P(X\geq x\mid Y = y)\leq P(X\geq x\mid Y\geq y).
\end{equation}

The final step is to note that, as both $S_X, S_Y$ are strictly increasing we have $S(X)\geq S(x) \iff X \geq x$ and $S(Y)\geq S(y) \iff Y\geq y$, so that \cref{eq:stochasticincrease} can be restated as precisely the monotonicity property of the proposition.

\subsection{Proof of lemma \ref{lem:boundconditionalnondesc}}
For clarity of exposition, denote $X_i$ as $Y$ and $\PA_i^{\hat{\cG}}$ as $X$. The probability of interest is $\tilde{P}(S(Y\mid X)\geq \lambda -\log c)$, for which we need a lower bound. First note,
\begin{align}
    \tilde{P}(S(Y\mid X)\geq \lambda -\log c) &\geq \tilde{P}(S(Y)\geq \lambda,S(Y\mid X)\geq \lambda -\log c) \\
    &= \tilde{P}(S(Y)\geq \lambda)\tilde{P}(S(Y\mid X)\geq \lambda - \log c\mid S(Y)\geq \lambda).
\end{align}
However, by assumption $Y$ is a descendant of a root cause in $\cG$, so we have, subject to effectiveness, that $\tilde{P}(S(Y)\geq \lambda) = 1$, and as such we only need to lower bound $\tilde{P}(S(Y\mid X)\geq \lambda - \log c\mid S(Y)\geq \lambda)$. Equivalently we need to upper bound $\tilde{P}(S(Y\mid X) < \lambda - \log c\mid S(Y)\geq \lambda)$. 

Note now that, conditional on $S(Y)\geq\lambda$, for any outlier sample $(x, y)$, clearly $S(y)\geq \lambda$, so that $P(S(Y)\geq S(y)\mid X = x)\leq P(S(Y)\geq\lambda\mid X = x)$, and by taking negative log-transforms that $S(y\mid x) \geq S(\lambda\mid x)$. Accordingly, for such samples, $S(y\mid x) < \lambda - \log c\implies S(\lambda\mid x) < \lambda -\log c,$ and therefore that,
\begin{align}
    \tilde{P}(S(Y\mid X) < \lambda -\log c \mid S(Y)\geq\lambda) &\leq \tilde{P}(S(\lambda\mid X) < \lambda -\log c \mid S(Y)\geq\lambda) \\
    &\leq \frac{\tilde{P}(S(\lambda\mid X) < \lambda -\log c)}{\tilde{P}(S(Y)\geq\lambda)} \\
    &= \tilde{P}(S(\lambda\mid X) < \lambda -\log c),
\end{align}
where the second inequality follows from the fact that $P(B\mid A)\leq P(B)/P(A)$ for any two events $A, B$, and the equality follows from effectiveness as before.

What is key is that this final expression now depends only on $X$, and by assumption $X$ is not a root cause or a descendant of a root cause so $\tilde{P}_X = P_X$ and therefore
\begin{equation}
    \tilde{P}(S(\lambda\mid X) < \lambda -\log c) = \tilde{P}_X(S(\lambda\mid X) < \lambda -\log c) = P_X(S(\lambda\mid X) < \lambda -\log c).
\end{equation}

Finally, noting that $\bE_{P_X}[P(S(Y)\geq\lambda\mid X)] = P(S(Y)\geq\lambda)\leq e^{-\lambda}$, we may apply Markov's inequality to conclude
\begin{align}
    P_X(S(\lambda\mid X) < \lambda -\log c) &= P_X(P(S(Y)\geq \lambda\mid X) > ce^{-\lambda}) \\
    &\leq \frac{e^{-\lambda}}{ce^{-\lambda}} = 1/c.
\end{align}
Putting everything together, we have the result.

\subsection{Proof of lemma \ref{lem:lowboundexpect}}
By Bayes theorem
\begin{equation}
    P_X(E_x\mid Y = y) = P_X(E_x)\frac{dP_{Y\mid E_x}}{dP_Y}(y),
\end{equation}
so that by taking a negative log-transform, and applying the definition of IT outlier scores,
\begin{equation}
    S(x\mid y) = S(x) - \log \frac{dP_{Y\mid E_x}}{dP_Y}(y).
\end{equation}
Next, taking the expectation with respect to $P_{Y\mid E_x}$, we find
\begin{equation}
    \bE_{P_{Y\mid E_x}}[S(x\mid Y)] = S(x) -\bE_{P_{Y\mid E_x}}\left[\log \frac{dP_{Y\mid E_x}}{dP_Y}(Y)\right].
\end{equation}
Note that the last term on the right is exactly $\KL{P_{Y\mid E_x}}{P_Y}$, so that by the SDPI and \cref{lem:marginalscoreKL}, we have
\begin{equation}
    \bE_{P_{Y\mid E_x}}[S(x\mid Y)] \geq S(x) - \eta S(x) = (1-\eta)\,S(x).
\end{equation}

\subsection{Proof of lemma \ref{lem:conditionallowbound}}
For real-valued random variable $X$ with finite variance, Cantelli's inequality (sometimes called the one-sided Chebyshev inequality) \citep{Ghosh2002} is 
\begin{equation}
    P\left(X \geq \bE[X] -\sqrt{\frac{1-\delta}{\delta}{\rm Var[X]}}\right) \geq 1- \delta,
\end{equation}
so that, in our case,
\begin{equation}
      P_{Y\mid E_x}\left(S(x\mid Y) \geq \bE[S(x\mid Y)] -\sqrt{\frac{1-\delta}{\delta}V_x}\right) \geq 1- \delta.
\end{equation}
We then may substitute $\bE_{P_{Y\mid E_x}}[S(x\mid Y)]$ for its lower bound in \cref{lem:lowboundexpect} and $V_x$ for any finite upper bound.

\subsection{Proof of proposition \ref{prop:anmboundedvariance}}
By Bayes theorem
\begin{equation}
    P_X(E_x\mid Y = y) = P_X(E_X)\frac{p_{Y\mid E_x}(y)}{p_Y(y)},
\end{equation}
so that by taking a negative log-transform, and applying the definition of IT outlier scores,
\begin{equation}\label{eq:itscorebayes}
    S(x\mid y) = S(x) -\log\frac{p_{Y\mid E_x}(y)}{p_Y(y)}.
\end{equation}
Note that only the last term on the right depends on $y$, so that it is sufficient to bound it in order to bound ${\rm Var}_{P_{Y\mid E_x}}(S(x\mid Y))$ in $y$ for each $x$. Noting first that $p(y\mid x) = p_N(y - f(x))$, let's first consider the log-Lipschitz case.

As $p_N$ is log-Lipschitz we have that, for any $a, b$
\begin{align}
    |\log p_N(y - f(a)) - \log p_N(y - f(b))| &\leq L_N |(y - f(a)) - (y - f(b))| \\
    &= L_N |f(a) - f(b)| \\
    &\leq 2L_NB,
\end{align}
where the final inequality is due to $f$ being bounded. As such we have
\begin{equation}\label{eq:lipschitzstep1}
    e^{-2L_NB} \leq \frac{p_N(y - f(a))}{p_N(y - f(b))} \leq e^{2L_NB},
\end{equation}
so that
\begin{equation}\label{eq:lipschitzstep2}
    e^{-2L_NB}p_N(y - f(b)) \leq p_N(y - f(a)) \leq e^{2L_NB}p_N(y - f(b)).
\end{equation}
Averaging first over $a$ with respect to $P_{X\mid E_x}$, and then $b$ with respect to $P_X$ we have
\begin{equation}\label{eq:lipschitzstep3}
    e^{-2L_NB}p_Y(y) \leq p_{Y\mid E_x}(y) \leq e^{2L_NB}p_Y(y),
\end{equation}
so that together, dividing by $p_Y(y)$ and taking log-transforms we have
\begin{equation}\label{eq:lipschitzstep4}
    \left|\log\frac{p_{Y\mid E_x}(y)}{p_Y(y)} \right| \leq 2L_NB.
\end{equation}
Returning to \cref{eq:itscorebayes}, this then immediately implies
\begin{equation}\label{ineq:loglipschitzscorebound}
    S(x) - 2L_NB \leq S(x\mid y) \leq S(x) + 2L_NB,
\end{equation}
holds for all $x$ and $y$. Equivalently, we have $|S(x\mid y) - S(x)| \leq 2L_NB$. Finally, we have
\begin{align}
    {\rm Var}_{P_{Y\mid E_x}}[S(x\mid Y)] &= {\rm Var}_{P_{Y\mid E_x}}[S(x\mid Y) - S(x)] \\
    &= \bE_{P_{Y\mid E_x}}[(S(x\mid Y) - S(x))^2] - \bE_{P_{Y\mid E_x}}[S(x\mid Y) - S(x)]^2 \\
    &\leq \bE_{P_{Y\mid E_x}}[(S(x\mid Y) - S(x))^2] \\
    &\leq (2L_NB)^2.
\end{align}
This completes the proof for part one of the proposition. Note in addition, however, that if we take one side of \cref{ineq:loglipschitzscorebound}, we simply have
\begin{equation}
    S(x\mid y) \geq S(x) - 2L_NB,
\end{equation}
and this bound holds uniformly, for all $x$ and $y$, and is often stronger than that given by Cantelli's inequality in \cref{lem:conditionallowbound}.

We now turn our attention to part two of the proposition where $N\sim \cN(0, \sigma^2)$. The proof follows a similar technique to part one, but we cannot now rely on log-Lipschitz continuity. First note, for any $a$ and $b$,
\begin{align}
    |\log p_N(y -f(a)) -\log p_N(y - f(b))| &= \frac{1}{2\sigma^2}\left|(y - f(b))^2 - (y - f(a))^2\right| \\
    &=\frac{1}{2\sigma^2}\left|2y(f(a) - f(b)) +f(b)^2 - f(a)^2 \right| \\
    &\leq \frac{1}{2\sigma^2}\left(4|y|B + B^2\right) \\ &= \frac{2B}{\sigma^2}|y| + \frac{B^2}{2\sigma^2},
\end{align}
where the first equality follows from $p_N$ being the Gaussian density, and the inequality from $|f(a) - f(b)| \leq 2B$ and $|f(b)^2 - f(a)^2| \leq B^2$, both of which are due to $f$ being bounded.

It follows from exactly the same steps as we followed in \cref{eq:lipschitzstep1,eq:lipschitzstep2,eq:lipschitzstep3,eq:lipschitzstep4} that
\begin{equation}
    \left|\log\frac{p_{Y\mid E_x}(y)}{p_Y(y)} \right| \leq \frac{2B}{\sigma^2}|y| + \frac{B^2}{2\sigma^2}.
\end{equation}
For ease of exposition, let us rewrite the bound as
\begin{equation}
    C(y) := \alpha |y| + \beta,
\end{equation}
where $\alpha := 2B/\sigma^2$ and $\beta := B^2/2\sigma^2$, and noting that, as before, we have ${\rm Var}_{P_{Y\mid E_x}}[S(x\mid Y)] \leq \bE_{P_{Y\mid E_x}}[C(Y)^2]$. As such, note
\begin{equation}
    \bE_{P_{Y\mid E_x}}[C(Y)^2] = \alpha^2\bE_{P_{Y\mid E_x}}[Y^2] + 2\alpha\beta\bE_{P_{Y\mid E_x}}[|Y|] + \beta^2.
\end{equation}
To give the final expression, we need to evaluate the two expectations. Noting that for $Y\sim P_{Y\mid E_x}$, we have $Y = f(X) + N$ where $X\sim P_{X\mid E_x}$ and $X\ind N$, we have
\begin{align}
    \bE[Y^2] &= \bE_[(f(X) + N)^2] \\
    &= \bE[f(X)^2 + 2f(X)N + N^2] \\
    &= \bE[f(X)^2] + \bE_{P_{Y\mid E_x}}[N^2] \\
    &\leq B^2 + \sigma^2
\end{align}
where in the third equality we used the fact that $\bE[2f(X)N] = 2\bE[f(X)]\bE[N] = 0$ as $\bE[N] = 0$, and in the inequality that $f$ is bounded.

By Cauchy-Schwarz we additionally have $\bE[|Y|]\leq \sqrt{\bE[Y^2]} \leq \sqrt{B^2 + \sigma^2}$. Finally, by substituting both bounds on the expectations and the definitions of $\alpha$ and $\beta$, we have the result.

\subsection{Proof of proposition \ref{prop:pvaluebounds}}
All three bounds follow from substituting lower bounds on each of the corresponding test statistics into each of their p-value functions. In the `worst-case', all the conditional scores (or marginal score jumps) from among the $\pi L$ non-root causes that are subject to errors in $\hat{\cG}$ are equal to $b$, and all of the remaining $(1-\pi)L$ conditional scores (or marginal score jumps) are $0$. In such a case we have, in the case for marginal score jumps,
\begin{align}
     \hat{s}^{\hat{\cG}}_{\rm sum} &= \sum_{i\notin\cR}|S(x) - S(\pa_i^{\hat{\cG}})|_+ = \pi L b, \\
     \hat{s}^{\hat{\cG}}_{\rm max} &= \max_{i\notin\cR}|S(x) - S(\pa_i^{\hat{\cG}})|_+ = b, \\
     \hat{n}_t &= \sum_{i\notin\cR} \mathbb{I}[S(x_i\mid\pa_i^{\hat{\cG}}) \geq t] = \begin{cases}
        \pi L & \text{if } b \ge t, \\
        0 & \text{otherwise}.
    \end{cases}
\end{align}
With the same bounds applying to the conditional score test p-values with the same reasoning (though under different assumptions).

\subsection{Proof of theorem \ref{thm:kstestpower}}
The following is subject to alternative hypothesis $H_1$. For convenience, denote the CDF of the standard exponential distribution $F_{\rm Exp}(s) = (1 - e^{-s})$
\begin{align}
    \nonumber &\sup_{s}|F(s) - F_{\rm Exp}(s)| \geq \sup_{s<b}|F(s) - F_{\rm Exp}(s)| \\\nonumber &=\sup_{s<b}|(1-\pi)F_{\rm Exp}(s) - F_{\rm Exp}(s)| =\sup_{s<b}|\pi F_{\rm Exp}(s)| \\
    \label{eq:popDbound} &= \pi F_{\rm Exp}(b).
\end{align}
Let $F_L$ be the empirical distribution of $F$ for $L$ variables, our secondary test statistic is the Kolmogorov-Smirnov statistic $D_L$ for $L$ samples. We then have,
\begin{align}
    \nonumber D_L &:= \sup_{s}|F_L(s) - F_{\rm Exp}(s)| \\
    \nonumber &= \sup_{s}|F_L(s) - F(s) + F(s) - F_{\rm Exp}(s)| \\
    \nonumber &\geq \sup_{s}|F(s) - F_{\rm Exp}(s)| - \sup_{s}|F_L(s) - F(s)| \\
    \label{eq:sampleDbound} &\geq \pi F_{\rm Exp}(b) - \sup_{s}|F_L(s) - F(s)|,
\end{align}
where the first inequality follows from the triangle inequality, 
and the second from Equation~\ref{eq:popDbound}. For $\pi > 0$, and significant threshold $\alpha$, the power of the test is $P(D_L\geq D^{\rm crit}_{L,\alpha})$. If $\pi F_{\rm Exp}(b) > D^{\rm crit}_{L,\alpha}$, then for all $\varepsilon > 0$ such that $\pi F_{\rm Exp}(b) - \varepsilon \geq D^{\rm crit}_{L,\alpha}$, we have that $\sup_{s}|F_L(s) - F(s)| \leq \varepsilon$ implies $D_L\geq D^{\rm crit}_{L,\alpha}$ by Equation~\ref{eq:sampleDbound}. Therefore, in such cases,
\begin{equation}\label{eq:explicitpowerbound}
    P(D_L\geq D^{\rm crit}_{L,\alpha}) \geq P(\sup_{s}|F_L(s) - F(s)| \leq \varepsilon).
\end{equation}
Finally, by the Dvoretzky–Kiefer–Wolfowitz inequality \citep{DKW1956, Massart1990}, for all $\varepsilon > 0$,
\begin{equation}\label{eq:dkw}
    P(\sup_{s}|F_L(s) - F(s)| \leq \varepsilon) \geq 1 - 2e^{-2L\varepsilon^2},
\end{equation}
so that picking $\varepsilon = |\pi F_{\rm Exp}(b) - D^{\rm crit}_{L,\alpha}|_+$, and combining Equation~\ref{eq:explicitpowerbound} with Equation~\ref{eq:dkw}, we have the result. Note that the theorem trivially holds for $\pi F_{\rm Exp}(b) \leq D^{\rm crit}_{L,\alpha}$ too.

\subsection{Proof of theorem \ref{thm:binomialoptimal}}
Taking the likelihood ratio for a single variable we have,
\begin{equation}
    \frac{\pi e^{-(S(x_i\mid\pa_i^{\hat{\cG}})-t)}\mathbb{I}[S(x_i\mid\pa_i^{\hat{\cG}})\geq t] + (1 - \pi) e^{-S(x_i\mid\pa_i^{\hat{\cG}})}}{e^{-S(x_i\mid\pa_i^{\hat{\cG}})}} = \pi e^t \mathbb{I}[S(x_i\mid\pa_i^{\hat{\cG}})\geq t] + (1-\pi)
\end{equation}

So for the full sample, the ratio is
\begin{align}
    \Lambda_b &= \prod_{i \notin \cR}\left(\pi e^t \mathbb{I}[S(x_i\mid\pa_i^{\hat{\cG}})\geq t] + (1-\pi)\right)\\
    &= \left(\pi e^t + (1-\pi)\right)^{N_t}(1-\pi)^{L - N_t},
\end{align}
Note that, for fixed $t$, and for every $\pi$, likelihood ratio is monotonically increasing with increasing $N_t$ so that a test based on the upper tail of the ratio is equivalent to one on the upper tail of $N_t$. By the Karlin-Rubin theorem \citep{KarlinRubin1956}, for fixed $t$, \cref{prop:binomialtest} defines a test which is uniformly most powerful against the alternative $H^{\pi, t}_1$.

\subsection{Proof of proposition \ref{prop:lkopvalue}}
Let $H_0^{\cC}$ be the null hypothesis that all $x_i$ with $i\notin\cC$ were drawn according to $P(X_i\mid\PA_i^{\hat{\cG}} = \pa_i^{\hat{\cG}})$. Since $|\cR|\leq k$, there exists a set
$\cC^*\subseteq \{1,\dots,n\}$ with $|\cC^*|=k$ and
$\cR\subseteq \cC^*$, which is included in the maximisation defining $p_T^k$. As such, note that $H_0$ implies $H_0^{\cC^*}$: if $H_0$ is true, then after excluding the variables in $\cC^*$, all remaining variables are non-root causes, and so are all drawn according to their true causal mechanisms as the causal conditionals implied by $\hat{\cG}$ coincide with those in $\cG$. Rejecting $H_0^{\cC^*}$ is therefore sufficient to reject $H_0$. As $p_T^{\cC^*}$ is a p-value for $H_0^{\cC^*}$, it is therefore a p-value for $H_0$. Finally note,
\begin{equation}
    P_{H_0}\left(\max_{\substack{\cC\subseteq\{1,\dots,n\} \\ \text{s.t. }|\cC|= k}} p_{T}^{\cC} \leq \alpha\right) = P_{H_0}\left(\bigcap_{\cC}\{p_T^{\cC} \leq \alpha\}\right) \leq P_{H_0}(p_T^{\cC^*} \leq \alpha) \leq\alpha,
\end{equation}
so $p_T^k$ is a p-value for $H_0$.

\section{Additional marginal outlier decay result}
\label{app:marginaldecay}

\Cref{lem:marginaldecay} holds exactly for $S(y\mid E_x) = 0$, however, more generally we have,
\begin{proposition}
    For any $x$ such that $P(E_x) > 0$, let $y$ be chosen at random according to $P(Y\mid E_x)$. With probability at least $1 - \delta$, we obtain a $y$ for which,
    \begin{equation}
        \eta S(x) \geq \delta S(y) -\log 2.
    \end{equation}
\end{proposition}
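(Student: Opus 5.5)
We reuse the first half of the proof of \cref{lem:marginaldecay} and replace the condition $S(y\mid E_x)=0$ with a probabilistic statement. The SDPI together with \cref{lem:marginalscoreKL} gives \cref{eq:SxvsDy}:
\begin{equation*}
\eta S(x)\ \geq\ \KL{P_{Y\mid E_x}}{P_Y}.
\end{equation*}
It therefore suffices to show that, with probability at least $1-\delta$ over $y\sim P_{Y\mid E_x}$,
\begin{equation*}
\KL{P_{Y\mid E_x}}{P_Y}\ \geq\ \delta S(y)-\log 2 .
\end{equation*}

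For a fixed $y$ we use the binary data-processing bound \cref{eq:binarybound} with the event $E_y$. Write $p:=P_{Y\mid E_x}(E_y)$ and $q:=P_Y(E_y)=e^{-S(y)}$. Then
\begin{equation*}
\KL{P_{Y\mid E_x}}{P_Y}\ \geq\ p\log\frac{p}{q}+(1-p)\log\frac{1-p}{1-q}.
\end{equation*}
The standard lower bound $d(p\Vert q)\geq p\log(1/q)-h(p)\geq p\log(1/q)-\log 2$ holds because the term $(1-p)\log\frac{1}{1-q}$ is nonnegative and the binary entropy $h(p)$ is at most $\log 2$. This yields
\begin{equation*}
\KL{P_{Y\mid E_x}}{P_Y}\ \geq\ P_{Y\mid E_x}(E_y)\,S(y)-\log 2 .
\end{equation*}
So whenever $P_{Y\mid E_x}(E_y)\geq\delta$, we get $\eta S(x)\geq \delta S(y)-\log 2$.

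It remains to show that $P_{Y\mid E_x}(E_y)\geq \delta$ for all $y$ outside a set of $P_{Y\mid E_x}$-probability at most $\delta$. Define $U(y):=P_{Y\mid E_x}(\tau(Y)\geq\tau(y))$, the upper-tail function of $\tau(Y)$ under $P_{Y\mid E_x}$, evaluated at the drawn $y$. This is a survival function evaluated at a random point drawn from that same distribution. By the probability integral transform it is stochastically at least uniform, i.e. $P(U\leq\delta)\leq\delta$. This holds even with atoms, since ties only make $U$ larger; continuity makes it exactly uniform. Hence with probability at least $1-\delta$ we have $U(y)>\delta$, and combining this with the previous paragraph gives the claim.

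The main obstacle is choosing the right relaxation of the Bernoulli KL. One must keep the linear term $pS(y)$ rather than requiring $p=1$, and absorb the entropy into the $\log 2$ constant. A further point is that the probability-integral-transform step must use the conditional law $P_{Y\mid E_x}$ of the feature $\tau(Y)$ and not the marginal law. The assumption $P(E_x)>0$ is needed only so that $P_{Y\mid E_x}$ is well defined, and $S(y)<\infty$ holds almost surely, so $P_Y(E_y)>0$.
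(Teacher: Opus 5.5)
Your proof is correct and follows essentially the same route as the paper. Both arguments chain the SDPI with \cref{lem:marginalscoreKL}, apply the binary data-processing bound to $E_y$, drop the nonnegative term $(1-p)\log\frac{1}{1-q}$, bound the binary entropy by $\log 2$, and finally control $P_{Y\mid E_x}(E_y)=e^{-S(y\mid E_x)}$ through its (super-)uniformity under $P_{Y\mid E_x}$; this last step is exactly the paper's $P_{Y\mid E_x}(S(Y\mid E_x)\leq c)=1-e^{-c}$ with $\delta=e^{-c}$, and your observation that super-uniformity suffices even with atoms slightly relaxes the continuity the paper's equality implicitly uses.
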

\begin{proof}
    Denote for convenience $p(y) := P_{Y\mid E_x}(E_y)$ and $q(y) = P_Y(E_y)$. We may conclude from \cref{eq:binarybound} in the proof of \cref{lem:marginaldecay} that
    \begin{align}
        \eta S(x) &\geq p(y)\log\frac{p(y)}{q(y)} + (1-p(y))\log\frac{1 - p(y)}{1 - q(y)} \\
        &\geq p(y)\log\frac{p(y)}{q(y)} + (1-p(y))\log(1 - p(y)) \\
        &=p(y)\log p(y) + (1-p(y))\log(1 - p(y)) - p(y)\log q(y) \\
        &= -H_b(p(y)) -p(y)\log q(y),
    \end{align}
    where the second inequality follows from $-\log(1-q(y)) \geq 0$, and $H_b$ denotes the binary entropy function. Noting that $\max_{u\in[0,1]} H_b(u) = \log 2$, and rewriting $p(y)$ and $q(y)$ we therefore have,
    \begin{equation}
        \eta S(x) \geq e^{-S(y | E_x)}S(y) -\log 2.
    \end{equation}
    By the properties of IT scores $P_{Y\mid E_x}(S(Y | E_x) \leq c) = 1 - e^{-c}$, so that
    \begin{equation}
        P_{Y\mid E_x}\left(\eta S(x) \geq e^{-c}S(Y) -\log 2\right) \geq 1 - e^{-c}.
    \end{equation}
    The result is then given by defining $\delta := e^{-c}$.
    
\end{proof}
\section{Upper bounds on p-values}
\label{sec:power_curves}
In \cref{fig:p-value_bounds} we can see the upper bounds on the p-values of our tests from \cref{prop:pvaluebounds}. 
The bound for $p_{S_{\max}^\cG}$ is constant with respect to $\pi$, and so can be strongly preferable to the other tests when $\pi$ is small. In contrast $p_{S_{\text{sum}}^\cG}$ and $p_{N_t}$ are often clearly preferable for relatively low $b$ as $\pi$ increases. Notably, the tail decays much more sharply with increasing $\pi$ for $p_{N_t}$ compared to $p_{S_{\text{sum}}^\cG}$ for suitable settings of $t$, but it is insensitive, beyond the threshold, to increases in $b$.

These upper bounds can also help to understand how the power of our tests depend on the size of the graph being tested. Supposing the number of root causes is fixed, then the size of the graph is characterised by $L$. If we assume $\pi$ and $b$ are fixed then as $L$ increases, the upper bound on the p-value for the Tippett’s method test tends to $1$. For the Fisher’s method test and the Binomial test, whether the upper bounds on their p-values tends to $0$, tends to 
$1/2$ or tends to $1$ depends on the relative values of $\pi$ and $b$ in the case of the former, and on the relative values of $\pi$ and $t$ for the latter. If we also consider the lower bound on the power for the Kolmogorov-Smirnov test given in \cref{thm:kstestpower}, we see it tends to $1$ as $L$ grows. As such, the Kolmogorov-Smirnov test becomes more powerful for larger graphs, the Tippett’s method test suffers, and the remaining tests may become more powerful, less powerful or remain stable depending on the other factors. In practice, we might expect both the number of root causes, the strength of the outlier among descendants (quantified by $b$) and the proportion of errors to vary with graph size, and so to give precise characterisation on power in this setting would require making modelling assumptions concerning how and when these parameters vary for larger $L$.

\begin{figure*}[b]
    \centering
    \begin{subfigure}[b]{0.48\textwidth}
        \centering
        \includegraphics[width=\textwidth]{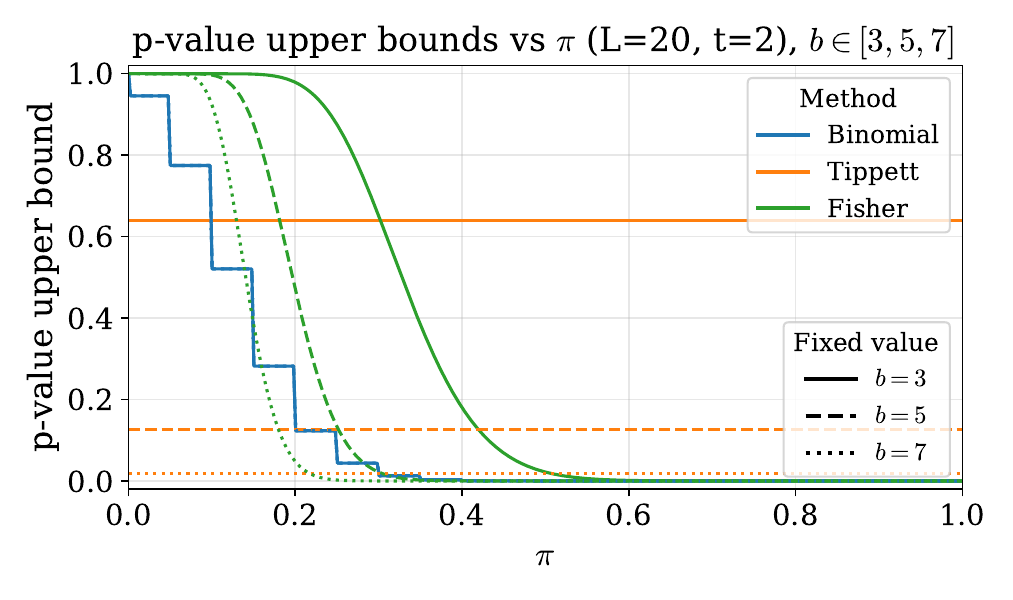}
        \caption{}
        \label{fig:p_value_bounds_first}
    \end{subfigure}
    \hfill
    \begin{subfigure}[b]{0.48\textwidth}
        \centering
        \includegraphics[width=\textwidth]{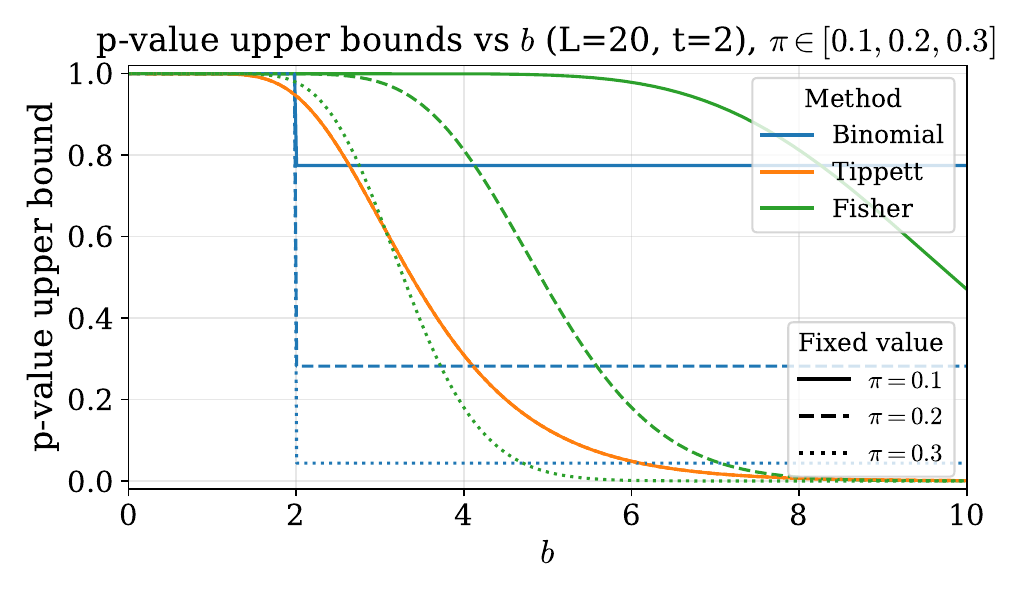}
        \caption{}
        \label{fig:p_value_bound_second}
    \end{subfigure}
    \caption{Bounds on the p-values of our tests.}
    \label{fig:p-value_bounds}
\end{figure*}

\section{When is each test uniformly most powerful}\label{app:ump}
\will{probably need to clear this up}
As the first test is simply an IT score analogue to Fisher's method, as mentioned, we can adapt a result from \citet{HeardRubinDelanchy2018},
\begin{theorem}[{\bf Optimality of sum of conditional outlier scores test}]\label{thm:optimalfisher}
    The test defined by $p_{S^{\hat{\cG}}_{\rm sum}}$ in \cref{prop:fishertest} is uniformly most powerful against the alternative
    \begin{hypothesis}{$H_1$}
    $S(X_i\mid\PA_i^{\hat{\cG}})$, for $i\notin\cR$, are iid exponential random variables with rate parameter $\nu < 1$.
    \end{hypothesis}
\end{theorem}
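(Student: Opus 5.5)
Under the null, \cref{lem:scores_exponential_under_null} gives that the $L$ scores $S_i := S(X_i\mid\PA_i^{\hat{\cG}})$, $i\notin\cR$, are i.i.d.\ ${\rm Exp}(1)$. Under $H_1$ they are i.i.d.\ ${\rm Exp}(\nu)$ with $\nu<1$. The plan is therefore a Neyman--Pearson / Karlin--Rubin argument, mirroring the proof of \cref{thm:binomialoptimal}. View the problem as testing the simple null $\nu=1$ against the composite one-sided alternative $\nu\in(0,1)$ in the one-parameter exponential family with densities $\nu e^{-\nu s}\,\bI[s\ge 0]$.

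First I would write down the joint likelihood ratio for a fixed alternative $\nu<1$:
\begin{equation*}
    \Lambda_\nu(s_1,\dots,s_L) = \prod_{i\notin\cR}\frac{\nu e^{-\nu s_i}}{e^{-s_i}} = \nu^{L}\exp\Big((1-\nu)\sum_{i\notin\cR}s_i\Big).
\end{equation*}
Because $1-\nu>0$, this is a strictly increasing function of $S^{\hat{\cG}}_{\rm sum}$. The family therefore has monotone likelihood ratio in the statistic $S^{\hat{\cG}}_{\rm sum}$, and the direction of monotonicity is the same for every $\nu<1$.

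Second, by the Neyman--Pearson lemma, for each fixed $\nu<1$ the most powerful level-$\alpha$ test rejects when $\Lambda_\nu$ is large. Equivalently, it rejects when $S^{\hat{\cG}}_{\rm sum}\ge c_\alpha$, where $c_\alpha$ is chosen so that $P_{H_0}(S^{\hat{\cG}}_{\rm sum}\ge c_\alpha)=\alpha$. Under the null $S^{\hat{\cG}}_{\rm sum}$ is Erlang with continuous CDF, so no randomisation is needed and $c_\alpha$ is exactly the upper-$\alpha$ Erlang quantile. This is precisely the test $\bI[p_{S^{\hat{\cG}}_{\rm sum}}\le\alpha]$ from \cref{prop:fishertest}, since that p-value is the Erlang survival function evaluated at the statistic. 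The rejection region does not depend on $\nu$, so the same test is most powerful against every $\nu<1$ simultaneously, and hence it is UMP. This is the content of Karlin--Rubin \citep{KarlinRubin1956}, and it also matches the argument of \citet{HeardRubinDelanchy2018} for Fisher's method, since $S_i=-\log p_i$.

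The only delicate point is making sure the null is the simple point $\nu=1$ and not a composite hypothesis. If it were composite, one would also need the size to be controlled over the whole null set. Here the null $H_0$ (via $H_0^{\cR}$ and \cref{lem:scores_exponential_under_null}) pins down the distribution exactly, so this point is immediate. I expect no real obstacle beyond checking that the rejection region is independent of $\nu$.
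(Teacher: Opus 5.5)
Your proof is correct and is essentially the paper's argument. The paper just cites \cref{lem:scores_exponential_under_null} together with Example 1 of \citet{HeardRubinDelanchy2018}. What you have written out is exactly the content of that example: the likelihood ratio $\nu^{L}\exp\bigl((1-\nu)S^{\hat{\cG}}_{\rm sum}\bigr)$ is increasing in the sum for every $\nu<1$, so the Neyman--Pearson rejection region is the same Erlang upper tail for every alternative.
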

\begin{proof}
    Immediate consequence of \cref{lem:scores_exponential_under_null} and example 1 in \citep{HeardRubinDelanchy2018}.
\end{proof}
Similarly, for the second test,
\begin{theorem}[{\bf Optimality of the max conditional outlier score test}]\label{thm:optimaltippett}
    The test defined by $p_{S_{\rm max}^{\hat{\cG}}}$ in \cref{prop:tippetttest} is uniformly most powerful against the alternative $H_1:$
    \begin{align*}
        S(X_j\mid\PA_j^{\hat{\cG}})&\sim \frac{1}{B(\alpha, L)} e^{-\alpha s}(1 - e^{-s})^{L -1}, \quad
        j\notin\cR, \alpha\in(0,1), \\
        S(X_i\mid\PA_i^{\hat{\cG}})&\sim \frac{e^{-s}}{1 - e^{-S(x_j\mid\pa_j^{\hat{\cG}})}}, \quad
        i\neq j \notin\cR, s\in [0, S(x_j\mid\pa_j^{\hat{\cG}})],
    \end{align*}
    where $\alpha$ is a shape parameter of the Beta distribution, and $B$ is the beta function.
\end{theorem}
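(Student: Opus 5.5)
The plan is to mirror the proof of \cref{thm:binomialoptimal} and apply the Karlin--Rubin theorem (equivalently, the Neyman--Pearson lemma for composite alternatives). First I will write down the joint density of the $L$ non-root-cause scores under the null and under the alternative. By \cref{lem:scores_exponential_under_null}, under $H_0$ the scores $s_i := S(x_i\mid\pa_i^{\hat{\cG}})$ are iid ${\rm Exp}(1)$, so the null joint density is $e^{-\sum_i s_i}$. Under the stated alternative I take the distinguished index $j$ to be the one attaining the maximum, and write $m := S(x_j\mid\pa_j^{\hat{\cG}}) = S^{\hat{\cG}}_{\rm max}$. The joint density is then the density of $m$ times $\prod_{i\neq j} e^{-s_i}/(1-e^{-m})$ on $[0,m]^{L-1}$. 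Symmetrising over which index is the maximum contributes a factor $L$, or equivalently gives the density of the order statistics.

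Next I will form the likelihood ratio. The factor $\prod_{i\neq j}e^{-s_i}$ cancels against the null density, leaving
\begin{equation*}
\Lambda = \frac{L\, e^{-\alpha m}(1-e^{-m})^{L-1}}{B(\alpha,L)\, e^{-m}(1-e^{-m})^{L-1}} = \frac{L}{B(\alpha,L)}\, e^{(1-\alpha)m},
\end{equation*}
provided the $(1-e^{-m})^{L-1}$ terms cancel. They do: under the null the maximum of $L$ iid ${\rm Exp}(1)$ has density $Le^{-m}(1-e^{-m})^{L-1}$, and conditional on the maximum the others are iid truncated exponentials on $[0,m]$. So the null factorises exactly like the alternative, with the max-density replaced. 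As a sanity check, the alternative max-density $\tfrac{1}{B(\alpha,L)}e^{-\alpha m}(1-e^{-m})^{L-1}$ is exactly the law of $-\log U$ with $U\sim{\rm Beta}(\alpha,L)$. It reduces to the null max-law when $\alpha=1$, since $B(1,L)=1/L$, which confirms the parametrisation. In effect this is the setup of example 2 in \citet{HeardRubinDelanchy2018}, translated from p-values $e^{-s}$ to scores, just as \cref{thm:optimalfisher} translated their example 1.

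The likelihood ratio therefore depends on the data only through $m = S^{\hat{\cG}}_{\rm max}$. Since $\alpha\in(0,1)$, it is strictly increasing in $m$ for every $\alpha$, so the family has monotone likelihood ratio in $S^{\hat{\cG}}_{\rm max}$. By Karlin--Rubin (or Neyman--Pearson pointwise in $\alpha$), the most powerful level-$\alpha'$ test rejects for large $S^{\hat{\cG}}_{\rm max}$. Its rejection region $\{m\ge c\}$ does not depend on $\alpha$, with $c$ calibrated by the null tail $1-(1-e^{-c})^L=\alpha'$. This region is exactly the test of \cref{prop:tippetttest}, which is therefore UMP.

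The main obstacle is the bookkeeping in the density step. The alternative as stated singles out an index $j$ without saying it is the argmax, so I must argue that the truncation forces $s_i\le s_j$ almost surely, making $j$ the argmax. I must also argue that, because $j$ is unknown, the relevant likelihood is the symmetrised mixture over $j$. Each mixture term has the same ratio $e^{(1-\alpha)m}$ on its support, so the symmetrisation goes through cleanly, but I will state it carefully.
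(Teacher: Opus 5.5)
Your argument is correct and is essentially the paper's proof. The paper simply cites Proposition 2 of \citet{HeardRubinDelanchy2018} under the change of variables $s=-\log p$, while you carry out the underlying likelihood-ratio/Neyman--Pearson computation directly in score space, including the symmetrisation over the unknown index $j$, which is indeed needed. The only slip is the constant: the ratio should be $e^{(1-\alpha)m}/(L\,B(\alpha,L))$ rather than $L\,e^{(1-\alpha)m}/B(\alpha,L)$, since it must equal $1$ at $\alpha=1$. This is harmless, because only monotonicity in $m=S^{\hat{\cG}}_{\rm max}$ matters.
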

\begin{proof}
    Consequence of our \cref{lem:scores_exponential_under_null} and proposition 2 in \citep{HeardRubinDelanchy2018} after a change of variables.
\end{proof}

\section{Further comments on related work}
In \cref{sec:related_work} we cite a number of related works that may initially appear to be suitable candidates for comparison with our tests in \cref{sec:experiments}, and so the question may arise as to why we do not include them. We address first the goodness-of-fit tests developed by \citet{canonne2017testing,schkoda2025goodness}. In short, it is not clear how such a comparison would be done. The approach of \citet{schkoda2025goodness} tests the hypothesis that some (observational) data is consistent with \emph{any} linear structural equation model, and so does not allow for testing a given or estimated causal graph, as the null hypothesis is agnostic as to the underlying causal graph. \citep{canonne2017testing} is more complex. Firstly, they restrict themselves to binary variable Bayesian networks in the observational setting (whereas we are concerned with the continuous setting with outlier(s)). Secondly, they do not propose a test in the classical hypothesis testing sense, but rather a `testing algorithm' which rejects if the data distribution is at least a user-specified $\varepsilon$ away (in $L_1$ distance) from any binary Bayesian network with the given graph, with some probability. As such, it is not clear what a suitable $\varepsilon$ should be if we were to use their approach as the basis of comparison. Accordingly, it is not clear how we would perform comparisons with their approach either.

Similar comments can be made regarding the approaches of \citet{faller2024self} and \citet{schkoda2024cross}. Both approaches falsify, in the observational setting, causal discovery algorithms (relative to a given observational dataset) rather than falsifying a given causal graph, and so a what would constitute an appropriate direct comparison with out tests is not clear.

Another work that is relevant to discuss is that of \citet{Gnecco2021} as, at a high-level their work also uses the properties of tail events to infer causal directions (this work for falsification, theirs for causal discovery). However, there are key differences. Firstly, their model is parametric. Secondly, for them, extreme events are expected since the distribution is long-tailed, while we model extreme events as observations that are not drawn from the hypothetical distribution. This second difference is actually just a change of perspective, after all, we could also consider a mixture of the distribution considered in the null hypothesis with one that describes the outlier. The first difference, however, is more significant because their model predicts that, if $X\to Y$, then extreme values in $X$ are \emph{likely} to cause extreme values in $Y$, while this need not be the case in our non-parametric setting.

\section{Additional experiments and further details}
\label{sec:additional_experiments}
\subsection{Experimental Details}
For the experiment in \cref{fig:power_analysis,fig:power_analysis_sigma_6} we generated 200 DAGs randomly according to an Erd\H{o}s-R\'{e}nyi model with 20 nodes and three expected edges per node. 
We sample coefficients for linear structural equations uniformly from $[-1, 3]$\footnote{We observed empirically that an asymmetric interval allows the anomalies to propagate further through the graphs.}.
We then add standard Gaussian noise, while root nodes are with equal probability either Gaussian or uniform noise, or are a mixture of two to four Gaussian distributions.
The means are picked uniformly from $[-1, 1]$ and also the number of components is picked uniformly. 
For each dataset, we generate 1000 samples. 
We then uniformly pick a node as root cause and introduce an anomaly, by increasing (or decreasing) the noise value by three standard deviations, with probability 1/2, respectively for \cref{fig:power_analysis} and six standard deviations for \cref{fig:power_analysis_sigma_6}.
We reject an anomalous sample if no node (not even the root cause) has a marginal anomaly score of at least 3. 
Unless stated otherwise, we always assume to only have access to a single anomalous sample.

In order to create the node-expert graphs with knowledge fraction $|K|/|N|$, where $K\subseteq N:=\{X_1, \dots, X_n\}$ is the set of nodes where the expert is correct, we start by uniformly at random picking these $|K|$ nodes.
We then pick a causal ordering uniformly at random, such that the relative ordering between the nodes in $K$ is preserved.
We keep the edges between nodes in $K$.
We then calculate the size of the graph theoretic cut of $K$, i.e. $c:= |\{(X_i, X_j) \in \cG \mid X_i\not\in K \lor X_j\not\in K\}|$.
Finally, we uniformly sample $c$ new edges from the set of possible edges (between nodes in $K$ and $N\setminus K$) that respect our new causal order.

In order to estimate the marginal outlier scores, we leverage methods from the Python package \texttt{DoWhy GCM} \citep{bloebaum2022dowhygcm}.
Specifically, we use the rescaled median CDF quantile scorer.
This empirically estimates the score
\begin{displaymath}
    S(X_i) = -\log\left(2  \min\left\{P(X_i > x_i) + \frac{P(X_i = x_i)}{2}, P(X_i < x_i) + \frac{P(X_i = x_i)}{2}\right\}\right),
\end{displaymath}
by counting samples that are larger or smaller than $x_i$ in the given non-anomalous dataset, respectively, and taking two times the minimum of these numbers.
The score is then rescaled with the negative natural logarithm. It is worth noting that while our theoretical results hold for any real-valued $\tau$ (so long as we satisfy continuity), in practice one should choose a feature map suited to their particular application, as it must be large when the corresponding variable should be considered `extreme' in that application. We do not opt for the median rescaled CDF quantile scorer for any special reason other than it is a broadly applicable, non-parametric anomaly scorer: the $z$-score would have also been suitable, but in general we leave this up to the user.

For the conditional scores, we first estimate causal mechanisms using the \texttt{auto} module from \texttt{DoWhy}.
This means, for each node $X_i$, we fit an additive noise model $f_i$ for the conditional distribution using several linear and non-linear estimators from the \texttt{scikit-learn} package \citep{scikit-learn}, where we use the \texttt{better} parameter from \texttt{DoWhy}.
The best model is picked via five-fold cross-validation.
Then synthetic samples from the conditional of $X_i$ given $\pa_i$ can be generated by predicting $f(\pa_i)$ and drawing noise uniformly from the empirical residuals $x_i^j - f_i(\pa_i^j)$, where $x^j_i$ and $\pa_i^j$ denote the values of $X_i$ and its parents in the given non-anomalous samples for $j=1, \dots, m\in\N$.
Finally, the conditional anomaly scores $S(x_i\mid \pa_i)$ are estimated using the rescaled median CDF quantile scorer on a synthetic sample of the conditional with 10000 samples. 

For the test proposed by \citet{eulig2025toward}, we use the implementation from \texttt{DoWhy}, but with the Fisher $Z$ test, as implemented in the \texttt{causal-learn} package \citep{zheng2024causal}.
We use 200 permutations and set the significance threshold of the independence tests to $\alpha=0.05$.
For GRaSP, we also use an implementation from \texttt{causal-learn} and use default parameters.
Since GRaSP only outputs a CPDAG, representing the Markov-equivalence class of the ground truth, we cannot calculate the structural Hamming distance directly. 

Concerning the datasets in \cref{fig:petshop_marginal,fig:petshop_conditional}, we only focus on the latency data from PetShop, drop metrics with fewer than ten unique values and use mean-imputation for missing values.
The dataset provides three different traffic scenarios and we consider 18 anomaly events.
As preprocessing, we take a single sample and conduct our test with this sample and the provided non-anomalous data for each event.
\cref{fig:petshop_marginal} shows the average rejection rate over these runs with the marginal version of our test and the baselines.
For Causal Chambers, we use the \texttt{lt\_interventions\_standard\_v1} dataset provided on the website.
We drop constant columns and intervention indicators from the dataset and ground truth graph, and also take a single sample from one of the 58 provided interventional scenarios.

Each experiment can be run within 48 hours on an EC2 instance of type \texttt{c5.4xlarge}.

\subsection{Histograms of p-values}
In \cref{fig:histogram_3} we show the p-values of all tests from \cref{fig:power_analysis} for $|K|/|N|=1$, i.e. under the null hypothesis.
We can see that the p-values of our conditional tests are roughly uniformly distributed, except for the Binomial test for larger values of $t$ where the tests are conservative even for the conditional score tests. The marginal score tests are, as expected, consistently more conservative than their conditional score counterparts (and indeed seem to be super-uniformly distributed).

\begin{figure*}[t]
\includegraphics[width=.7\textwidth]{img/linear_3/legend_perturbed_dag.pdf}
    \centering
    \begin{subfigure}[b]{\textwidth}
        \centering
        \includegraphics[width=.8\textwidth]{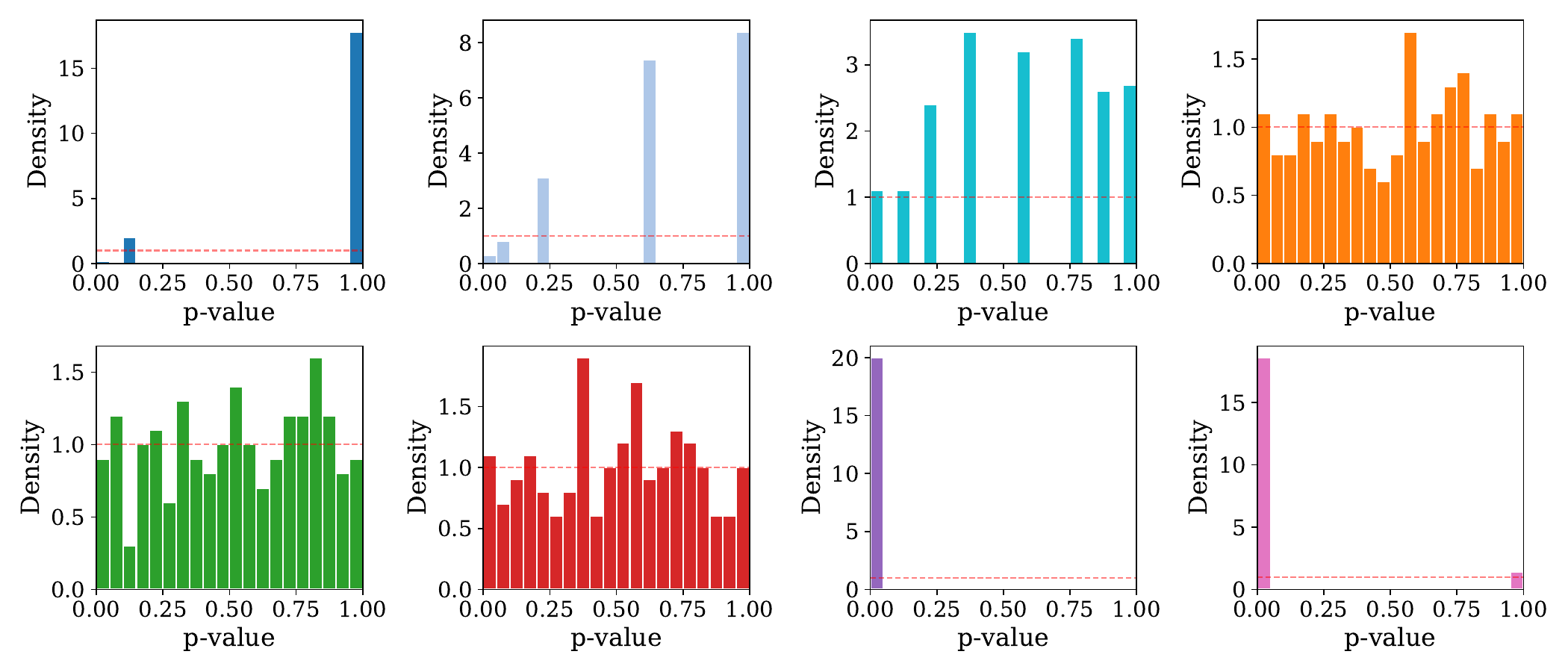}
        \caption{Conditional scores.}
        \label{fig:histogram_3_conditional}
    \end{subfigure}
    \hfill
    \begin{subfigure}[b]{\textwidth}
        \centering
        \includegraphics[width=.8\textwidth]{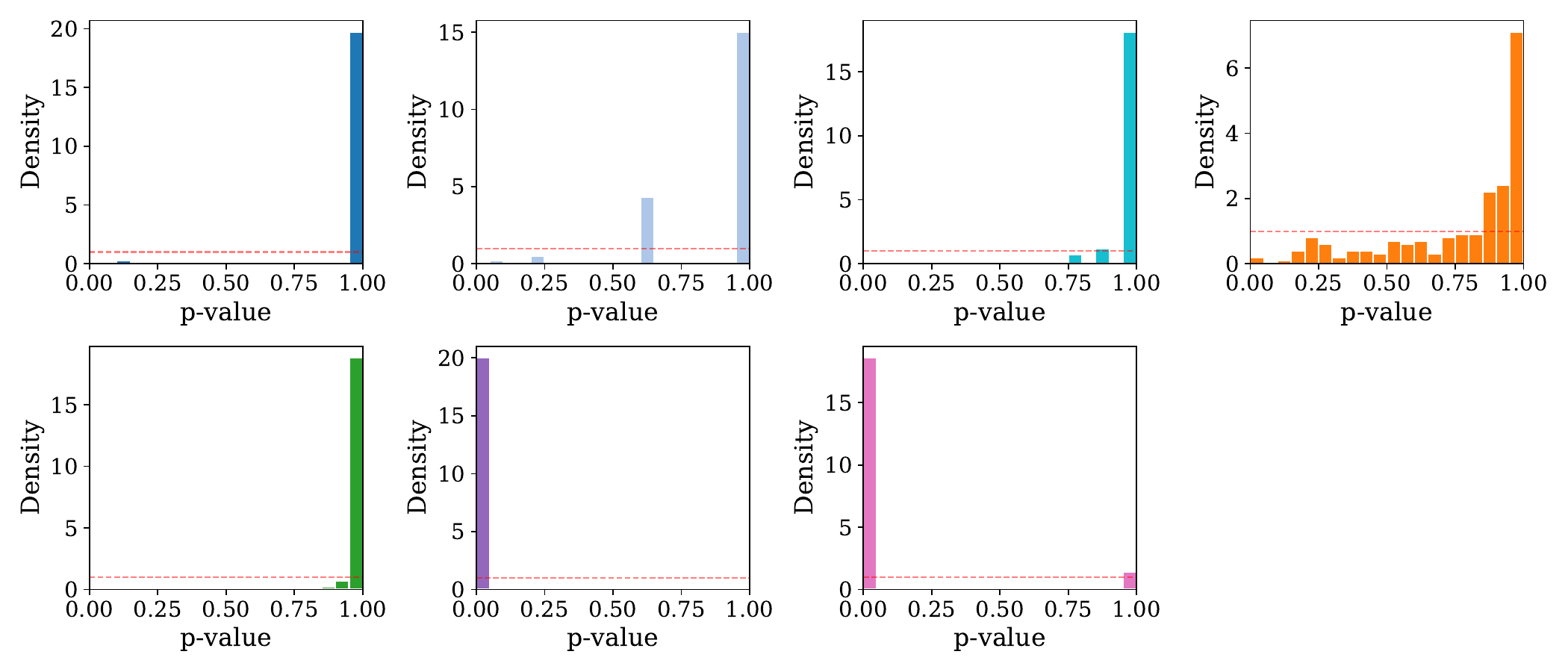}
        \caption{Marginal scores.}
        \label{fig:histogram_3_marginal}
    \end{subfigure}
    \caption{Histograms of the p-values of our tests under the null hypothesis.}
    \label{fig:histogram_3}
\end{figure*}
    
\subsection{Stronger Anomalies}
In \cref{fig:power_analysis_sigma_6} we repeated the experiment from \cref{fig:power_analysis} with the only difference being that we inject stronger anomalies. 
In particular, we increase or decrease the noise of the root cause by six instead of three standard deviations.
We can see that this indeed increases the power of our test.

\begin{figure*}[t]
\includegraphics[width=.7\textwidth]{img/linear_3/legend_perturbed_dag.pdf}
    \centering
    \begin{subfigure}[b]{0.45\textwidth}
        \centering
        \includegraphics[width=\textwidth]{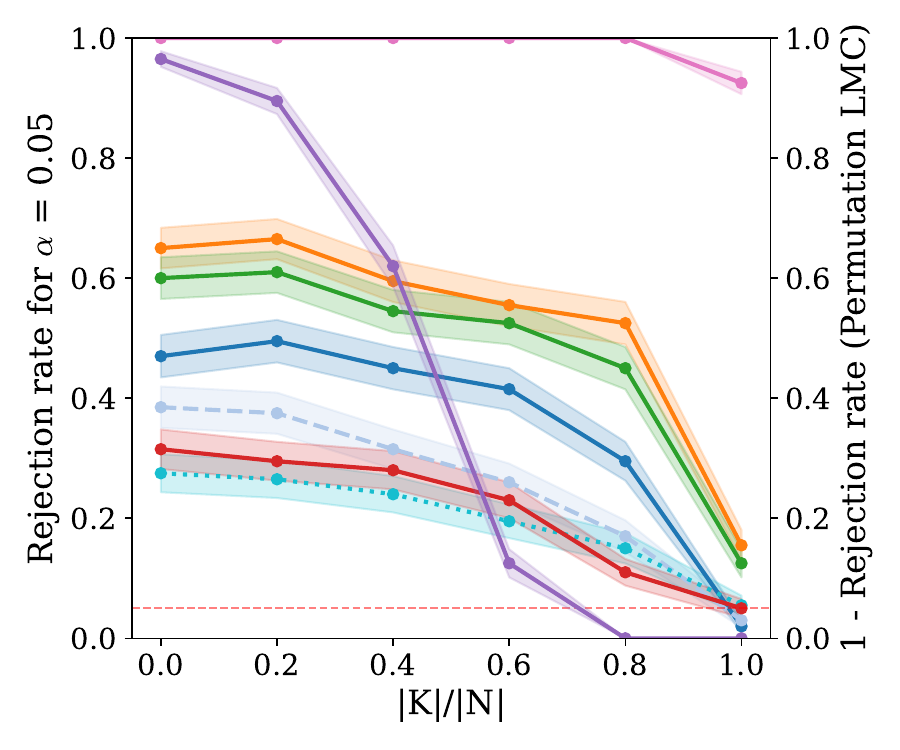}
        \caption{Conditional scores.}
        \label{fig:power_analysis_6_conditional}
    \end{subfigure}
    \hfill
    \begin{subfigure}[b]{0.45\textwidth}
        \centering
        \includegraphics[width=\textwidth]{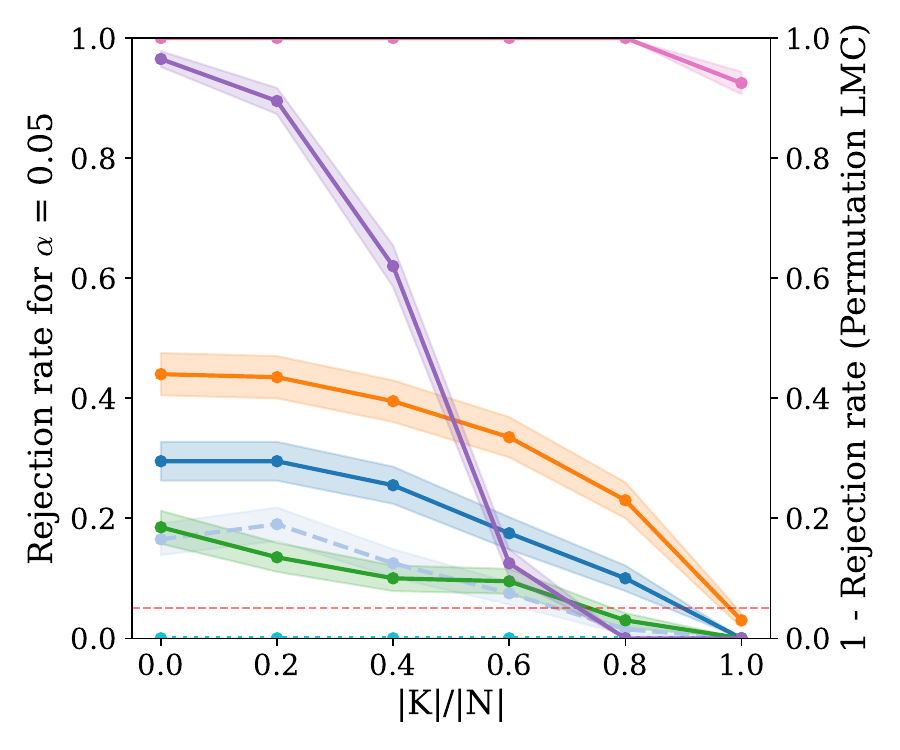}
        \caption{Marginal scores.}
        \label{fig:power_analysis_6_marginal}
    \end{subfigure}
    \caption{Average rejection rate for synthetic data with larger anomalies and increasing fraction of correctly connected nodes $|K|/|N|$. We observe a similar behaviour as with the smaller anomalies, except that our tests seem to have even more power.}
    \label{fig:power_analysis_sigma_6}
\end{figure*}

\subsection{More anomalies}
It is a direct consequence of the proofs of \cref{prop:binomialtest,prop:fishertest,prop:kstest,prop:tippetttest,prop:marginaltests} that we can also conduct the tests with more than a single sample, even if these anomalies correspond to different root cause events.
In order to do this, we simply need to drop all root cause scores (or score gaps) and combine the p-values of the remaining nodes for all samples.
Specifically, for \cref{prop:fishertest} this means summing over all of them to get $S_{\text{sum}}^\cG$.
For \cref{prop:tippetttest} this means taking the max over all of them for $S_{\text{max}}^\cG$.
For \cref{prop:binomialtest} we threshold and sum over all nodes and for \cref{prop:kstest} we use all these scores to estimate the empirical distribution of the conditional scores.

In \cref{fig:power_analysis_more_anomalies} we repeated the experiments from \cref{fig:power_analysis,fig:power_analysis_sigma_6} with 100 anomalous samples (with identical root cause) per anomaly event.
As we can see, especially the tests based on conditional scores gain power this way.
The gains for the marginal tests are not as pronounced, potentially because the multiple-testing burden overwhelms the weak signal in the data in many cases.
Interestingly, we can see that in this scenario the test based on Fisher's method and marginal scores seems to outperform Tippett's version, which is the opposite behaviour of the experiments in \cref{fig:power_analysis,fig:power_analysis_sigma_6}.
This is in line with our theory, as Tippett is expected to have optimal power if there is a high outlier score and Fisher when many of the combined scores are elevated.

\begin{figure*}[t]
    \centering
    \includegraphics[width=.7\textwidth]{img/linear_3/legend_perturbed_dag.pdf}
        \begin{subfigure}[b]{0.45\textwidth}
        \centering
        \includegraphics[width=\textwidth]{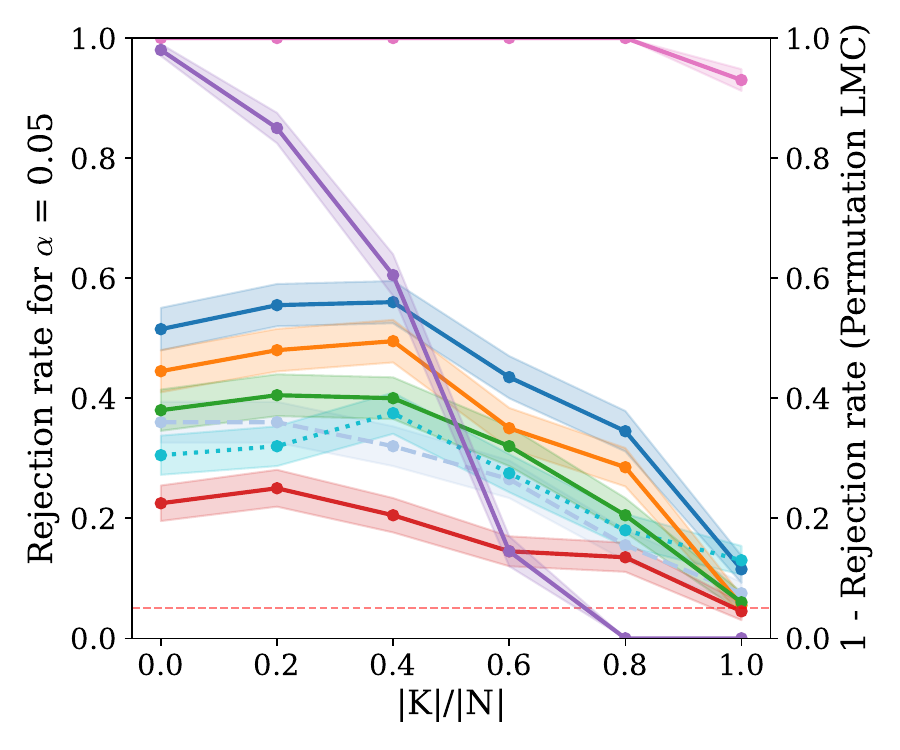}
        \caption{Conditional scores. Small anomalies.}
        \label{fig:power_analysis_3_conditional_more_anomalies}
    \end{subfigure}
    \hfill
    \begin{subfigure}[b]{0.45\textwidth}
        \centering
        \includegraphics[width=\textwidth]{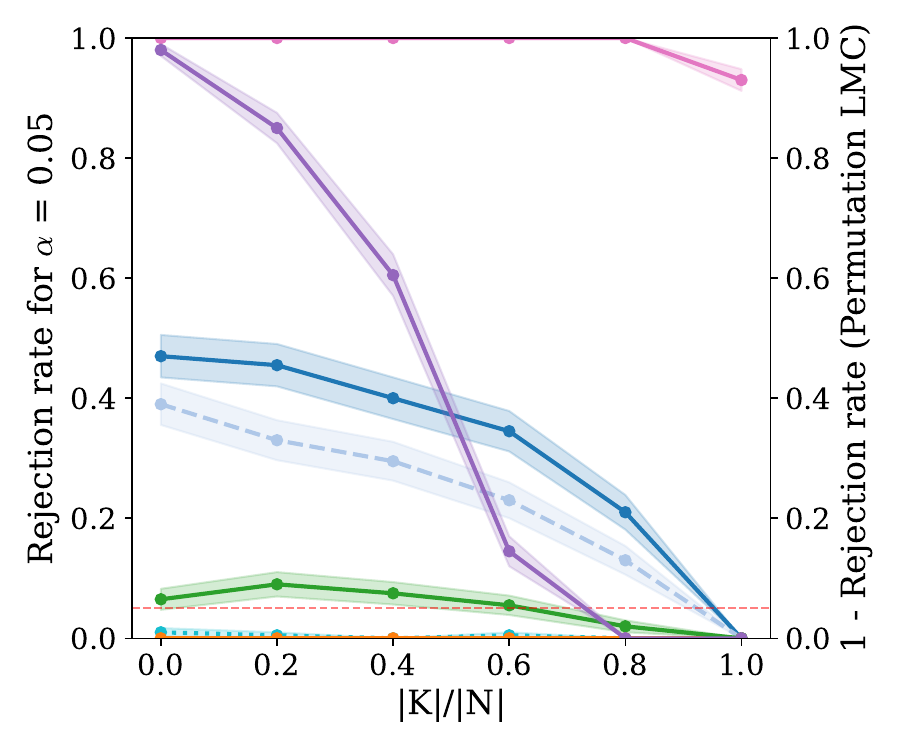}
        \caption{Marginal scores. Small anomalies.}
        \label{fig:power_analysis_3_marginal_more_anomalies}
    \end{subfigure}
    
    \begin{subfigure}[b]{0.45\textwidth}
        \centering
        \includegraphics[width=\textwidth]{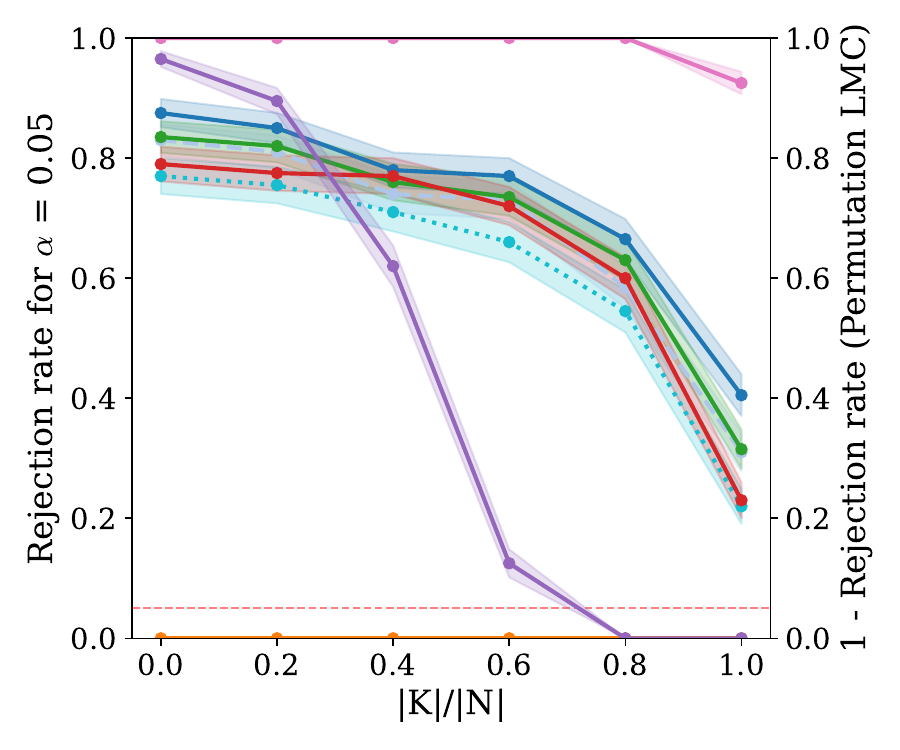}
        \caption{Conditional scores. Large anomalies.}
        \label{fig:power_analysis_6_conditional_more_anomalies}
    \end{subfigure}
    \hfill
    \begin{subfigure}[b]{0.45\textwidth}
        \centering
        \includegraphics[width=\textwidth]{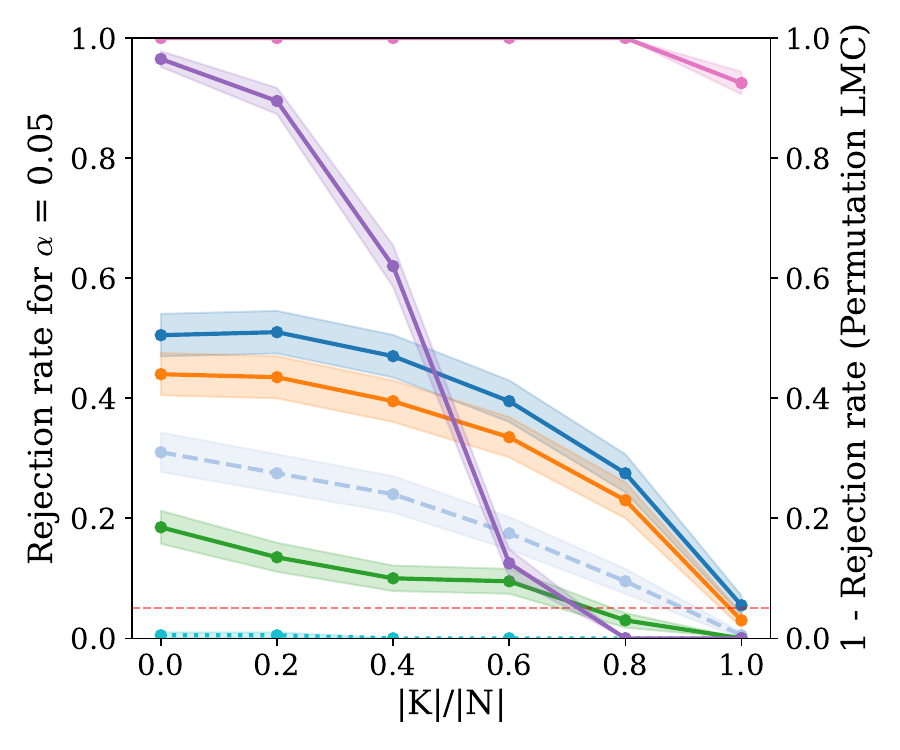}
        \caption{Marginal scores. Large anomalies.}
        \label{fig:power_analysis_6_marginal_more_anomalies}
    \end{subfigure}
    \caption{Average rejection rate for synthetic data with 100 anomalous samples and increasing fraction of correctly connected nodes $|K|/|N|$. We observe similar trends as before just with even more power.}
    \label{fig:power_analysis_more_anomalies}
\end{figure*}

\subsection{No Given Root Cause}
In \cref{fig:power_analysis_no_rc} we repeated the experiments from \cref{fig:power_analysis,fig:power_analysis_sigma_6} without assuming to know the root cause.
We can see that the proposed approach loses very little power in our experiments.

\begin{figure*}[t]
\includegraphics[width=.7\textwidth]{img/linear_3/legend_perturbed_dag.pdf}
    \centering
        \begin{subfigure}[b]{0.45\textwidth}
        \centering
        \includegraphics[width=\textwidth]{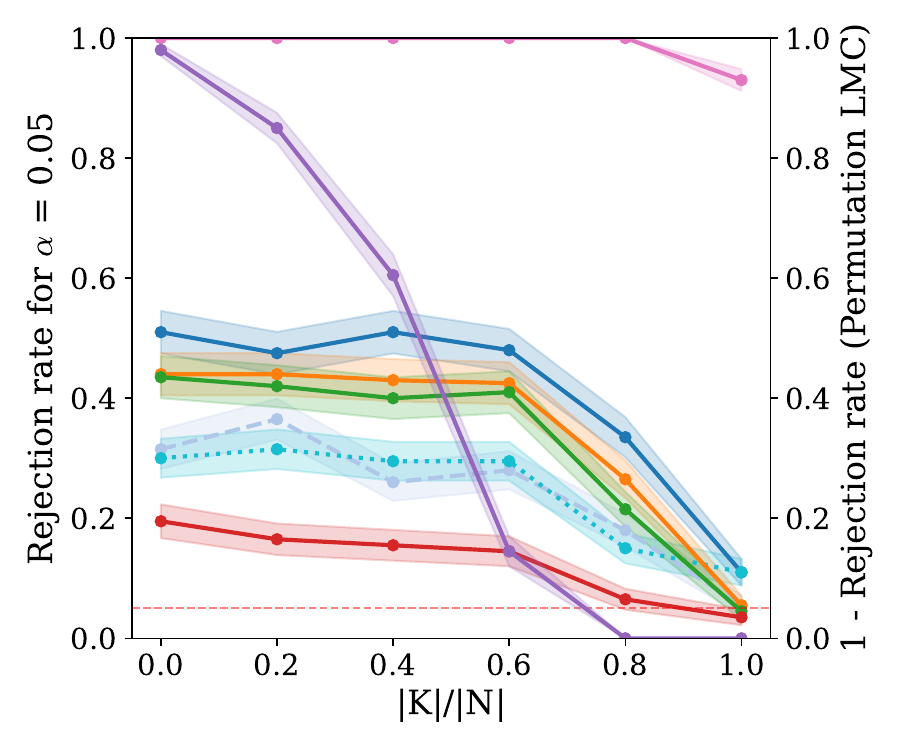}
        \caption{Conditional scores. Small anomalies.}
        \label{fig:power_analysis_3_conditional_no_rc}
    \end{subfigure}
    \hfill
    \begin{subfigure}[b]{0.45\textwidth}
        \centering
        \includegraphics[width=\textwidth]{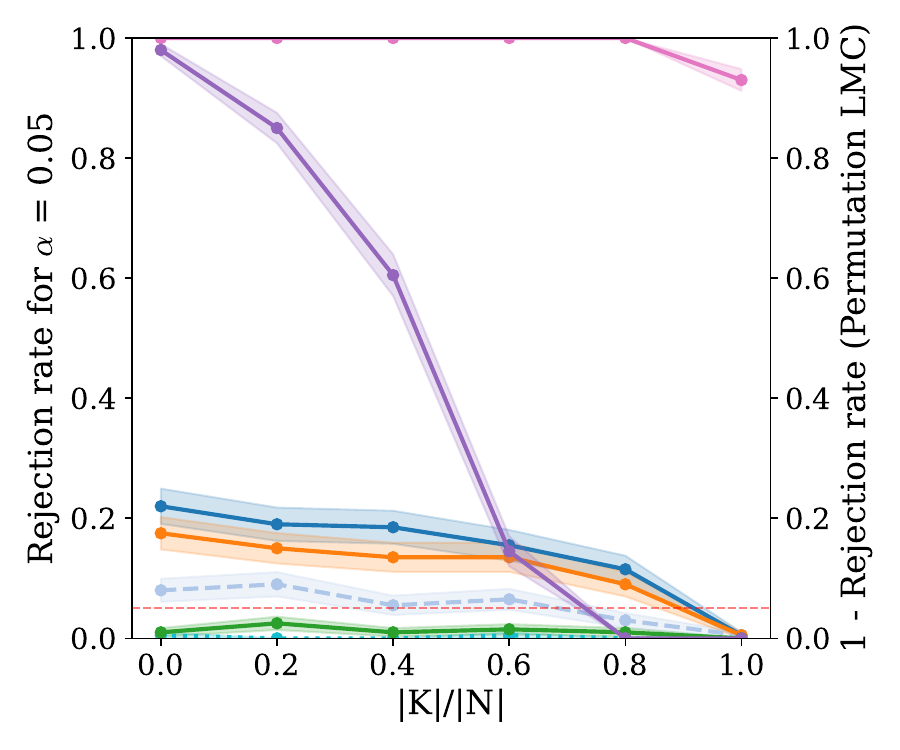}
        \caption{Marginal scores. Small anomalies.}
        \label{fig:power_analysis_3_marginal_no_rc}
    \end{subfigure}
    
    \begin{subfigure}[b]{0.45\textwidth}
        \centering
        \includegraphics[width=\textwidth]{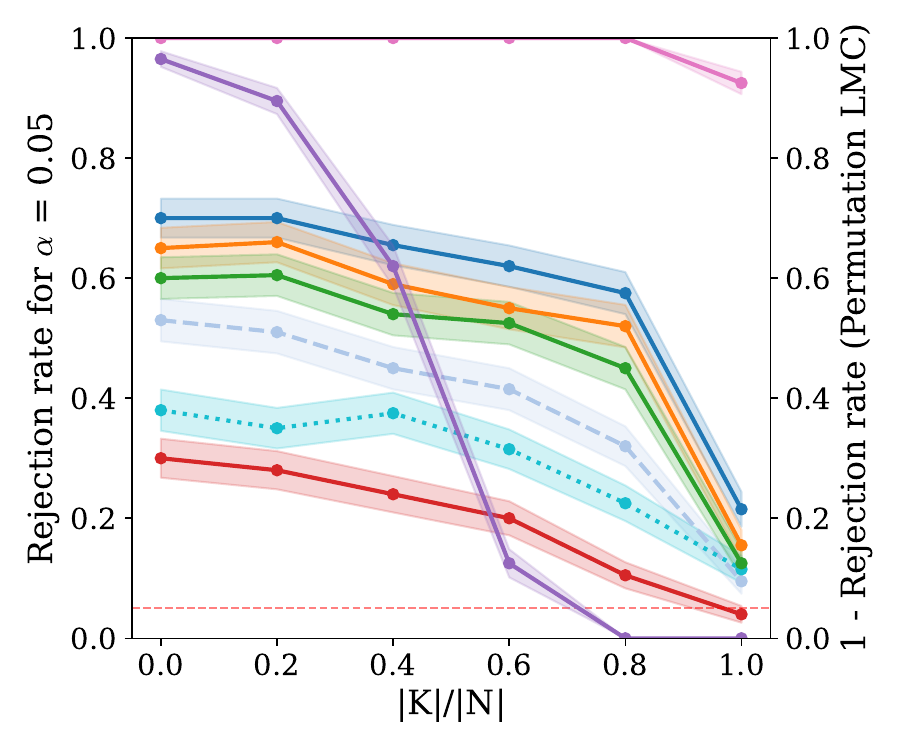}
        \caption{Conditional scores. Large anomalies.}
        \label{fig:power_analysis_6_conditional_no_rc}
    \end{subfigure}
    \hfill
    \begin{subfigure}[b]{0.45\textwidth}
        \centering
        \includegraphics[width=\textwidth]{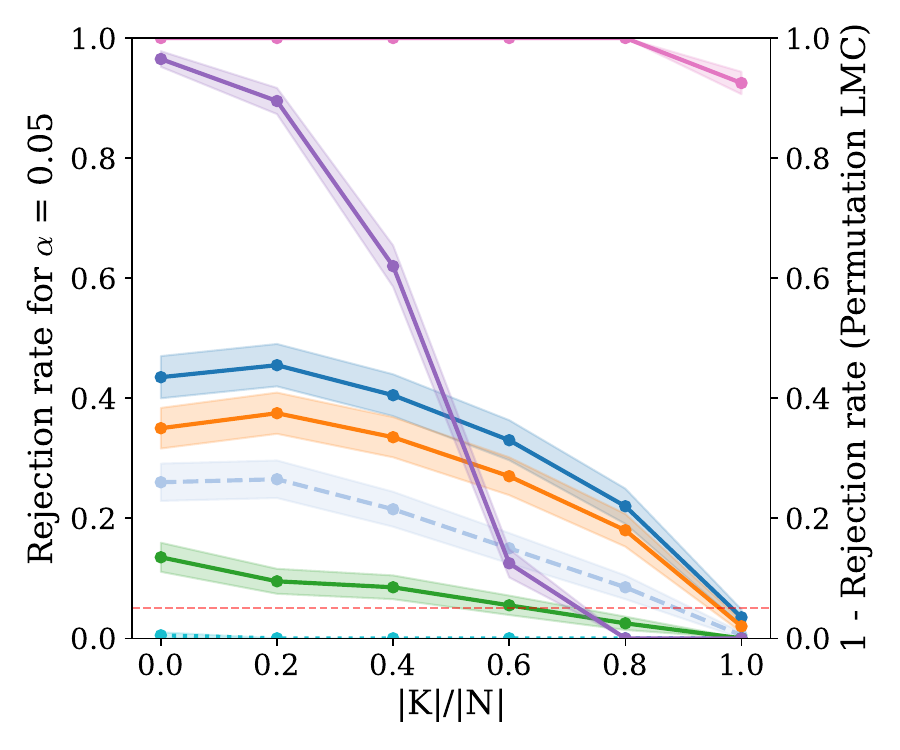}
        \caption{Marginal scores. Large anomalies.}
        \label{fig:power_analysis_6_marginal_no_rc}
    \end{subfigure}
    \caption{Average rejection rate for synthetic data when no root cause is given and increasing fraction of correctly connected nodes $|K|/|N|$. We observe similar trends as before just slightly less power.}
    \label{fig:power_analysis_no_rc}
\end{figure*}

\subsection{Graphs from Causal Discovery}
We further wanted to investigate the performance of our tests on graphs that have been estimated using a causal discovery method.
We generate the data analogously to before, except that we draw all noise terms uniformly from $[-1, 1)$.
This renders the data amenable for the DirectLinGAM algorithm \citep{shimizu2011directlingam}.
We estimate 200 graphs on samples of size 50, 100, 200, 400, 800 and 1600, respectively.
In \cref{fig:power_analysis_lingam} we sort each resulting graph in one of five bins of equal size with respect to the structural Hamming distance to the ground truth graph, i.e. the number of edges that would need to change in order to match the ground truth graph.
As we can see, the rejection rate of our tests increases as SHD gets larger.
In contrast to previous experiments, the permutation test baseline rejects most graphs (i.e. considers them \emph{better} than random).
This could be due to the fact that the graphs in this experiment are learned from the given data and therefore indeed represent the conditional independence structure of the data better than a random graph.

\begin{figure*}[t]
    \centering
    \includegraphics[width=.7\textwidth]{img/linear_3/legend_perturbed_dag.pdf}
    \begin{subfigure}[b]{0.45\textwidth}
        \centering
        \includegraphics[width=\textwidth]{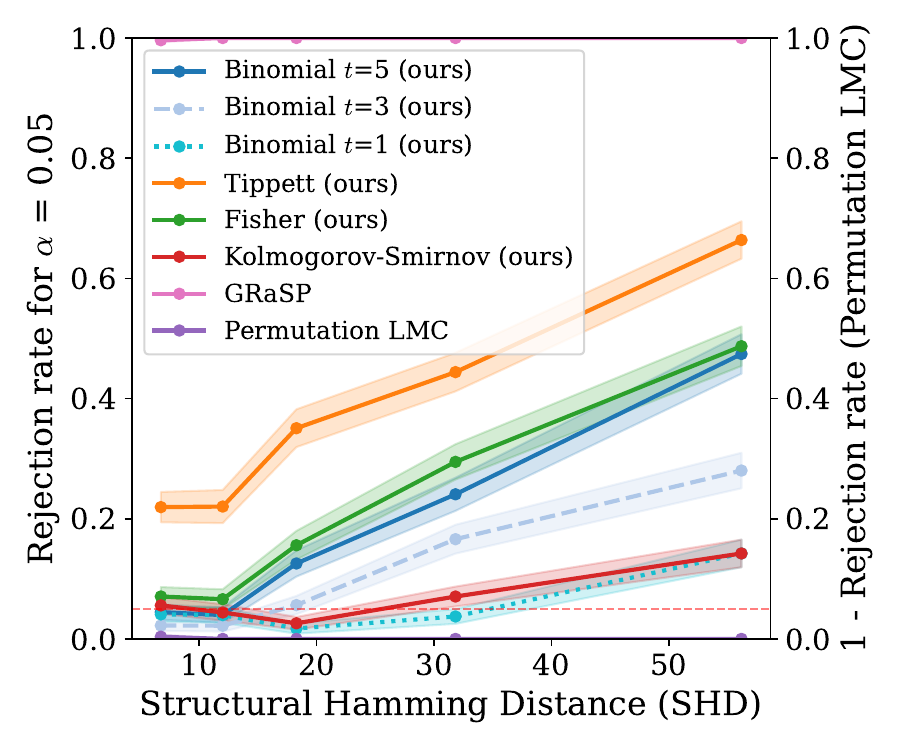}
        \caption{Conditional scores.}
        \label{fig:power_analysis_lingam_conditional}
    \end{subfigure}
    \hfill
    \begin{subfigure}[b]{0.45\textwidth}
        \centering
        \includegraphics[width=\textwidth]{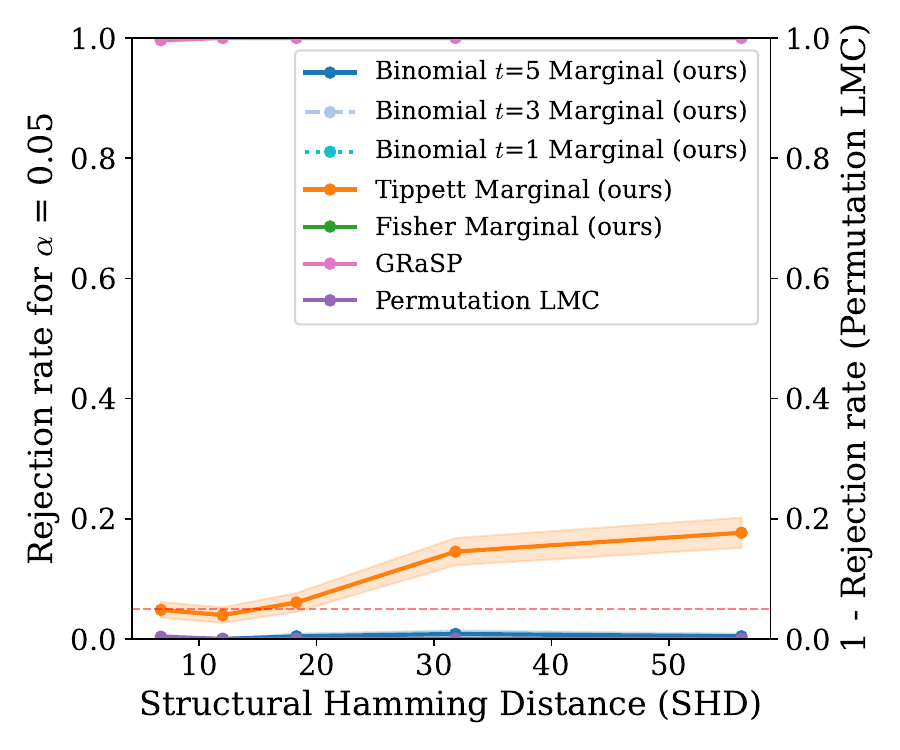}
        \caption{Marginal scores.}
        \label{fig:power_analysis_lingam_marginal}
    \end{subfigure}
    \caption{Average rejection rate for synthetic data with linear non-Gaussian data and graphs estimated using DirectLiNGAM \citep{shimizu2011directlingam}.}
    \label{fig:power_analysis_lingam}
\end{figure*}

\subsection{Non-Linear Mechanisms}
In \cref{fig:power_analysis_6_nonlinear} we repeated the experiment from \cref{fig:power_analysis} with the main difference that we generate the synthetic data from a model with non-linear causal mechanisms.
Precisely, we generate variable $X_i$ with non-empty parent set via polynomials of degree two in their parents, i.e.
\begin{displaymath}
    x_i = \sum_{X_j, X_k\in \PA_i^G} \alpha_{j,k}x_jx_k + n_i, 
\end{displaymath}
where $n_i$ is an independent standard Gaussian noise term like before and $\alpha_{i, j}\in \R$.
We initially draw $\alpha_{i,j}$ uniformly from $[-1, 1]$ and generate an (unnormalised) sample from these coefficients and the normal-regime samples from the parents. 
We then divide the coefficients by half the empirical standard deviation of this initial sample and re-generate the sample from the same parent values.
This way we get normal data with stable variances but in the anomalous regime outliers can still propagate.

For the permutation test baseline we used the generalised covariance measure \citep{shah2020hardness} as implemented in \texttt{DoWhy GCM} and we use gradient boosting regression as implemented in \texttt{scikit-learn} to estimate the conditionals.

For computational reasons we only consider 20 different DAGs in this experiment.

As we can see the tests behave similarly to our previous experiments.

\begin{figure*}[t]
    \centering
    \includegraphics[width=.7\textwidth]{img/linear_3/legend_perturbed_dag.pdf}

    \begin{subfigure}[b]{0.45\textwidth}
        \centering
        \includegraphics[width=\textwidth]{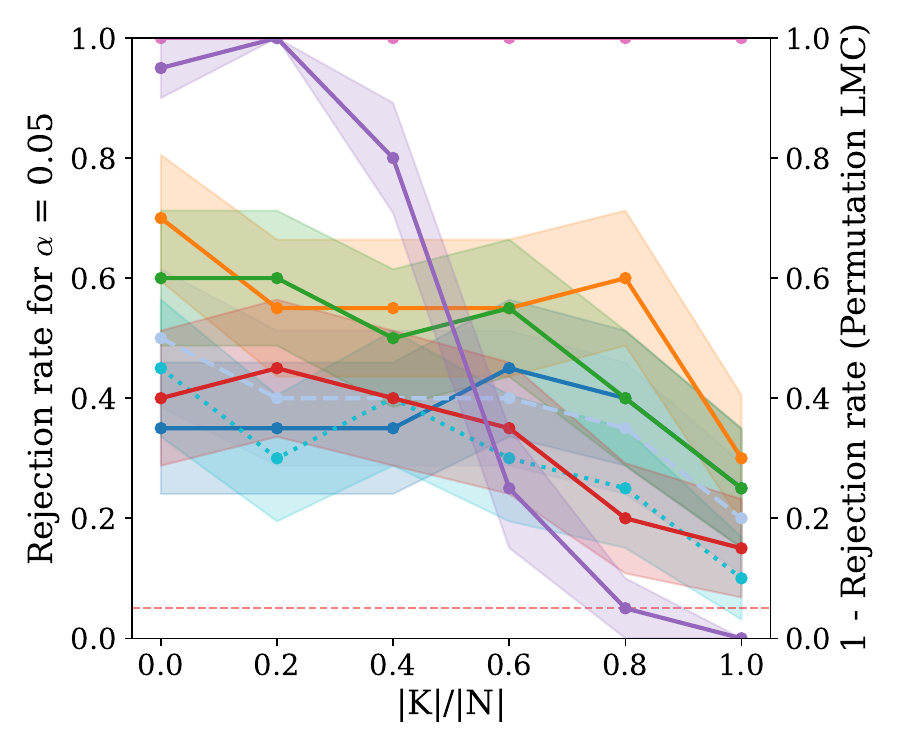}
        \caption{Conditional scores.}
        \label{fig:power_analysis_6_nonlinear_conditional}
    \end{subfigure}
    \hfill
    \begin{subfigure}[b]{0.45\textwidth}
        \centering
        \includegraphics[width=\textwidth]{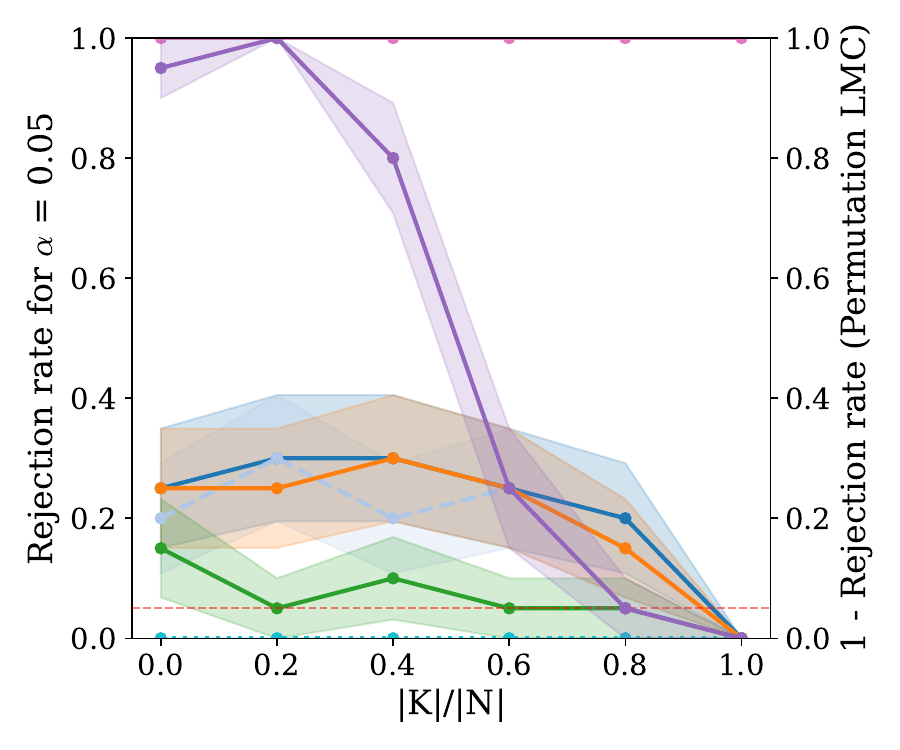}
        \caption{Marginal scores.}
        \label{fig:power_analysis_6_nonlinear_marginal}
    \end{subfigure}
    \caption{Average rejection rate for synthetic data with non-linear causal mechanisms and increasing fraction of correctly connected nodes $|K|/|N|$.}
    \label{fig:power_analysis_6_nonlinear}
\end{figure*}

\subsection{Interventional Samples for Permutation Baseline}
In \cref{fig:power_analysis_3_f} we repeated the experiment from \cref{fig:power_analysis} and added a so-called $F$-node \citep{Pearl:00} to the graph used by the permutation test baseline.
More precisely, we concatenate each normal dataset with the outlier sample and add a column $F$ that indicates whether the given row is from the normal or outlier regime.
We also add a node $F$ to the estimated causal graph that has a single outgoing edge pointing to the known root cause.
We then proceed as before.
As expected, the single outlier sample is not enough to substantially change the behaviour of the test.

\begin{figure*}[t]
    \centering
    \includegraphics[width=.7\textwidth]{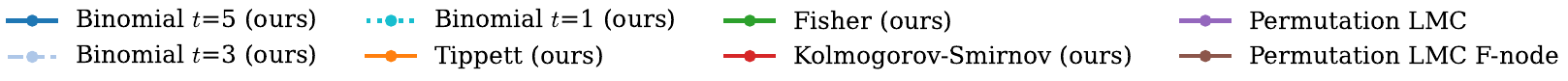}

    \begin{subfigure}[b]{0.45\textwidth}
        \centering
        \includegraphics[width=\textwidth]{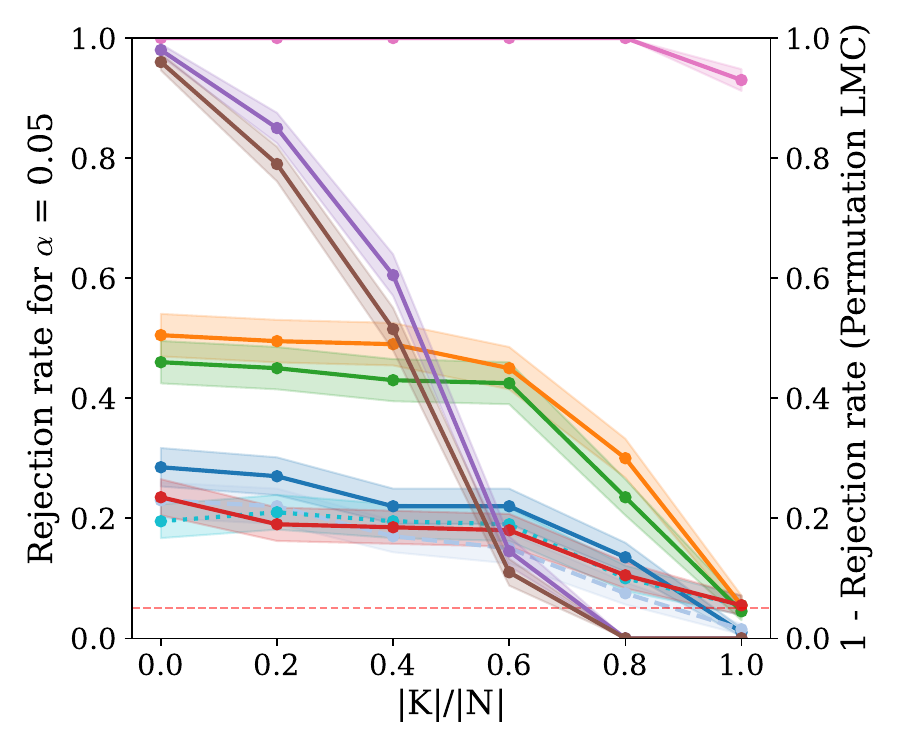}
        \caption{Conditional scores.}
        \label{fig:power_analysis_3_f_conditional}
    \end{subfigure}
    \hfill
    \begin{subfigure}[b]{0.45\textwidth}
        \centering
        \includegraphics[width=\textwidth]{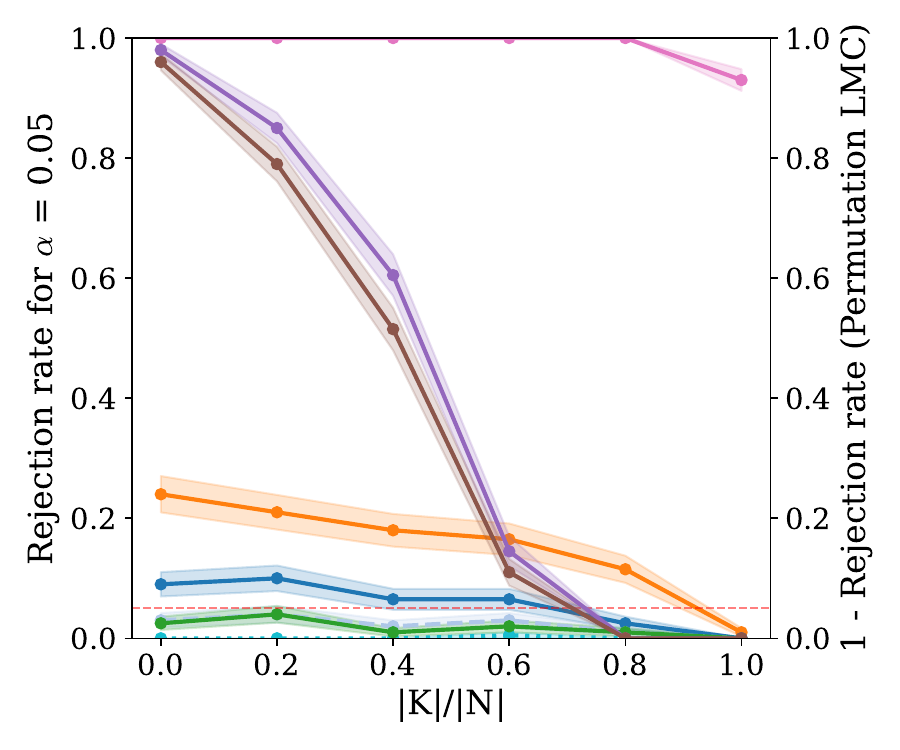}
        \caption{Marginal scores.}
        \label{fig:power_analysis_3_f_marginal}
    \end{subfigure}
    \caption{Average rejection rate for synthetic data with an increasing fraction of correctly connected nodes $|K|/|N|$. We added an $F$-node to the permutation test baseline.}
    \label{fig:power_analysis_3_f}
\end{figure*}

\subsection{Conditional tests on PetShop and Causal Chambers data}
In \cref{fig:petshop_conditional} we repeated the experiments in \cref{fig:petshop_marginal} but with the conditional version of the tests from \cref{prop:fishertest,prop:binomialtest,prop:kstest,prop:tippetttest}.
We can see that the tests reject even more often than their marginal counterparts.
In contrast to their marginal counterparts, we can see that also the Causal Chambers graph gets rejected slightly more often than we would expect from the selected significance threshold. 
Comparing with the synthetic data in \cref{fig:power_analysis_3_conditional,fig:power_analysis_6_conditional}, this might be due to statistical errors in the estimation of the conditional scores.

\begin{figure}[t]
    \centering
    \includegraphics[width=.7\textwidth]{img/linear_3/legend_perturbed_dag.pdf}
    \includegraphics[width=.45\linewidth]{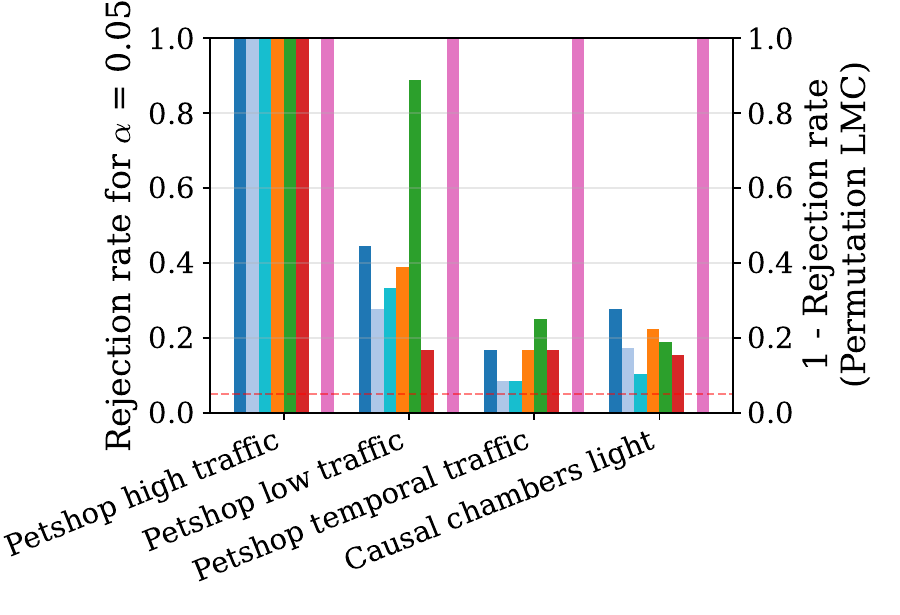}
    \caption{Average rejection rate of our conditional tests on cloud computing and causal chambers data. While the inverted call graph seems to represent the cloud data better than random, our test rejects this graph as ground truth for several anomaly events. The graph from causal chambers gets rejected more rarely but still slightly above the prescribed false positive rate.}
    \label{fig:petshop_conditional}
\end{figure}

\end{document}